\documentclass{siamart250211}

\usepackage{graphicx}
\graphicspath{{images/}}
\usepackage{amssymb,amsfonts,mathrsfs,bm}
\usepackage[algo2e]{algorithm2e}
\usepackage{subfigure}
\usepackage{booktabs}
\usepackage{multirow}%=
\newcommand{\R}{\mathbb{R}}
\newcommand{\norm}[1]{\left\lVert #1\right\rVert}
\newcommand{\abs}[1]{\left\lvert #1\right\rvert}

\theoremstyle{plain}
\theoremheaderfont{\normalfont\itshape}
\theorembodyfont{\normalfont}
\theoremseparator{.}
\newtheorem{remark}[theorem]{Remark}

\title{Gauss-Map Variation for Image Denoising: Geometric Analysis and an Anderson--Accelerated Majorization--Minimization Method}
\headers{H. SU}{GAUSS-MAP VARIATION FOR IMAGE DENOISING}
\author{Haibin Su\thanks{Yau Mathematical Sciences Center, Tsinghua University, Beijing 100084, China (\email{hbsu@mail.tsinghua.edu.cn}).}
}
\begin{document}

\maketitle

\begin{abstract}
We propose a Gauss-map variation (GMV) model for image denoising
that measures the spatial variation of the tangent-plane projectors
of the scaled image graph. We establish an equivalent representation of the regularizer in terms of the corresponding Gauss map and, using differential geometric tools including tubular coordinates and the Frenet frame, analyze its behavior across general $C^2$ and piecewise $C^2$ boundaries. The resulting estimates provide edge- and corner-contrast preservation properties. To solve the proposed model, we introduce a bilinear decomposition involving a unit normal field and a scalar magnitude field and develop an Anderson-accelerated majorization--minimization algorithm. The normal field subproblem admits an explicit pointwise majorization--minimization update, which is combined with an Anderson acceleration. For both $L^1$ and $L^2$ data fidelity terms, we establish sufficient decrease and boundedness of the iterates and prove that the generated sequence converges to a critical point of the penalized model. Numerical experiments on synthetic and natural images demonstrate the boundary preserving capability of the proposed model and its competitive performance in removing Gaussian and impulsive noise.
\end{abstract}

\begin{keywords}
Gauss-map variation, geometric regularization, bilinear decomposition, majorization-minimization method, Anderson acceleration, image denoising
\end{keywords}

\begin{AMS}
65B99, 65K10, 68U10, 94A08
\end{AMS}
%-------------------------------------------------------------------------------------

\section{Introduction}

Image denoising aims to recover a clean image from noisy observations. Recovering fine structures is particularly challenging because noise and meaningful image features may occupy overlapping pixels. Edges, corners, and intensity contrasts encode object boundaries and shape information, and their preservation is essential to a faithful reconstruction. Variational methods address this problem by combining a data fidelity term with a regularizer that expresses prior information. The central modeling challenge is to suppress noise while retaining these geometric features, rather than treating all intensity variations as undesirable oscillations.

The total variation model \cite{LR1992} effectively removes noise while preserving sharp edges \cite{strong2003edge}. However, it may produce staircase artifacts, reduce image contrast, and smooth corners \cite{AC1997, dobson1996recovery, YM2002, bellettini2002total}. High order models \cite{AC1997, YLY2000, ML2003, KB2010} alleviate staircasing and better recover smoothly varying regions. Nevertheless, they may oversmooth sharp structures,
thereby blurring object boundaries and corners.

Geometry based models provide a complementary approach by regularizing geometric quantities associated with image level sets or the image graph. Representative examples include Euler's elastica model \cite{Shen2003euler, Tai2011Afast, Zhu2013Image, Deng2019Anew, liu2024fast, su2026accelerated}, mean curvature based model \cite{WZ2012, zhu2020image, zhao2026image}, Gaussian curvature model \cite{CBL2016, HL2022}, and Weingarten-map based model \cite{QZ2022}. Related curvature based regularizers have also been investigated in
\cite{zhong2021image, wu2025total, zhang2025edge, lu2026total}. The total scaled-gradient variation (TSGV) framework \cite{wu2024variational, su2026cone} instead penalizes the variation of a nonlinearly scaled image gradient. These models incorporate geometric information, such as local orientation and bending, that is not fully represented by derivative magnitudes alone. This geometric dependence allows them to accommodate sharp object boundaries while suppressing oscillations in smooth regions.

The preservation of edges, corners, and intensity contrast has also motivated analytical studies of geometric regularizers. The authors in \cite{WZ2012} investigated these properties for a mean curvature based model. Related analyses were subsequently developed for Gaussian curvature \cite{CBL2016} and Weingarten-map based \cite{QZ2022} models. The authors in \cite{wu2024variational} established the same results for TSGV under suitable assumptions on the scaling function. A common strategy in these analyses is to compare a sharp piecewise constant image with a family of smooth transition profiles. Under appropriate conditions, the limiting regularization cost of a sharp transition is independent of its jump height, which helps explain why these models can preserve intensity contrast.

However, the scope of these conclusions depends on the geometric classes considered in the corresponding proofs. Existing analyses are commonly established for radially symmetric disk images or idealized rectangular objects. Although these geometries make the relevant integrals tractable, the resulting conclusions do not directly extend to boundaries with spatially varying curvature. This limitation motivates an analysis for general piecewise smooth boundaries.

In this paper, we propose a Gauss-map variation (GMV) model based
on the tangent plane geometry of the scaled image graph. For a
sufficiently smooth image $u$, let
\[
m(u)
=
\frac{(-\alpha\nabla u,1)^\top}
{\sqrt{1+\alpha^2|\nabla u|_2^2}},
\qquad
P(u)=I_3-m(u)m(u)^\top,
\]
where \(\alpha>0\) balances the spatial coordinates and image
intensity. Here, \(m(u)\) is the unit normal to the scaled image
graph, and \(P(u)\) is the orthogonal projector onto its tangent
plane. We define the GMV regularizer by
\[
\mathcal R(u)
:=
\int_\Omega
\|\nabla P(u)\|_{\mathrm F}\mathrm dx
=
\sqrt{2}
\int_\Omega
\|\nabla m(u)\|_{\mathrm F}\mathrm dx
\]
and combine it with an \(L^p\) data-fidelity term for
\(p\in\{1,2\}\). The geometric structure of this regularizer enables
the analysis across general \(C^2\) boundaries. Through suitable smooth approximations, the analysis is further extended to piecewise \(C^2\) boundaries containing finitely many corners.

The numerical treatment of geometric regularizers is challenging
because they involve nonlinear and nonsmooth regularization terms. Augmented Lagrangian methods \cite{Tai2011Afast,WZ2013,CBL2016,QZ2022} typically introduce multiple penalty parameters, and their numerical performance is often sensitive to the choice of these parameters. Operator-splitting schemes provide efficient alternatives \cite{Deng2019Anew, HL2022, wu2024variational}. However, for more complicated nonconvex models, rigorous convergence guarantees often remain unavailable. More recently, bilinear decompositions have yielded convergent algorithms for Euler's elastica and mean curvature models \cite{liu2024fast, liu2025minimizing}. These developments motivate a tractable and convergent algorithm for the GMV model.

To solve the proposed GMV model, we introduce a bilinear
representation involving a unit normal field and a scalar magnitude
field and develop an Anderson-accelerated majorization--minimization (AAMM) algorithm for the resulting quadratic penalty formulation. The normal field subproblem admits an explicit MM update, which is further accelerated by a depth-one Anderson correction. We further prove that the entire sequence generated by the proposed algorithm converges to a critical point of the constrained penalized objective.

The main contributions are summarized as follows:
\begin{itemize}

\item
We introduce an image denoising regularizer that measures the
variation of the tangent plane projectors of the scaled image graph. We prove that this projector based regularizer is equivalent to the variation of the corresponding Gauss map. This equivalence provides a clear geometric interpretation and clarifies the connections with TSGV and curvature based regularization.

\item
Using differential geometric tools including tubular coordinates and the Frenet frame, we analyze the regularization cost across general $C^2$ boundaries and extend the analysis to piecewise $C^2$ boundaries. For the approximation classes considered in this paper, the limiting cost is proportional to the boundary length and is independent of the jump height. These results yield edge- and corner-contrast preservation properties.

\item
We develop a bilinear alternating algorithm in which the constrained normal field subproblem admits an explicit MM update and the remaining variables can be updated efficiently. A depth-one
Anderson correction accelerates the MM iteration while preserving a sufficient decrease condition. For both $p=1$ and $p=2$, we prove that the generated sequence has finite length and converges to a critical point of the  penalized discrete model. Numerical experiments further demonstrate that the proposed method is competitive with representative geometry based methods for Gaussian noise removal and impulsive noise removal, while effectively
preserving geometrically significant image structures.
\end{itemize}

The remainder of this paper is organized as follows. Section~\ref{sec:Gauss_map_variation_model} introduces the GMV model and its geometric interpretation. Section~\ref{sec:Equivalent_formulation} establishes an equivalent
formulation and discusses its relationship to other second order regularizers. Section~\ref{sec:Edge_preservation} analyzes edge- and corner-contrast preservation for $C^2$ smooth and piecewise smooth boundaries. Section~\ref{sec:numerical_algorithm} presents
the discrete bilinear formulation and the AAMM algorithm, and Section~\ref{sec:convergence} establishes its convergence properties. Numerical experiments are reported in Section~\ref{sec:experiments}. Section~\ref{sec:conclusion} concludes the paper.

\section{A Gauss-map variation model}\label{sec:Gauss_map_variation_model}
Let $\Omega\subset\R^2$ be a bounded open domain. We denote by
$u:\Omega\to\R$ the unknown grayscale image and by
$f:\Omega\to\R$ the observed image. To describe the local geometry
of $u$, we embed it into $\R^3$ as a scaled graph.

\subsection{The image graph and its tangent-plane projection}
Let $\alpha>0$ be a scaling parameter that balances the spatial
coordinates and the image intensity. The corresponding scaled
image graph is defined by
\[
\Sigma
:=
\left\{
\bigl(x_1,x_2,\alpha u(x_1,x_2)\bigr):
(x_1,x_2)\in\Omega
\right\}
\subset\R^3.
\]
A natural parametrization of $\Sigma$ is given by
$X_u(x_1,x_2)=\bigl(x_1,x_2,\alpha u(x_1,x_2)\bigr)$. The two tangent vectors are
\begin{equation*}
\partial_{x_1}X_u
=
\begin{pmatrix}
1\\
0\\
\alpha u_{x_1}
\end{pmatrix},
\qquad
\partial_{x_2}X_u
=
\begin{pmatrix}
0\\
1\\
\alpha u_{x_2}
\end{pmatrix}.
\end{equation*}
We collect them as the columns of
\begin{equation*}
B(u):=
\begin{pmatrix}
1&0\\
0&1\\
\alpha u_{x_1}&\alpha u_{x_2}
\end{pmatrix}.
\end{equation*}
Thus, for $x=(x_1,x_2)$, the range of $B(u)(x)$ is the tangent plane
$T_{X_u(x)}\Sigma$. We define $P(u)$ as
\begin{equation*}
P(u):=
B(u)\bigl(B(u)^\top B(u)\bigr)^{-1}B(u)^\top.
\end{equation*}
Then, we can deduce the following proposition.

\begin{proposition}\label{prop:properties-of-P}
Let $u\in C^{1}(\Omega)$, and define $G(u):=B(u)^{\top}B(u)$. Then, the following statements hold.

\begin{enumerate}
\item The matrix $G(u)$ is symmetric positive definite,
and hence is invertible. Moreover,
\[
G(u)=I_{2}+\alpha^{2}\nabla u\nabla u^{\top}.
\]

\item The matrix $P(u)$ is symmetric and idempotent, i.e, $P(u)^{\top}=P(u)$ and $P(u)^{2}=P(u)$. The range of $P(u)(x)$ is the tangent plane to the graph $X_u(x_1,x_2)$. In particular,
\[
P(u)B(u)=B(u).
\]

\item Define the unit normal field of the scaled graph by
\[
m(u):=
\frac{
\bigl(-\alpha u_{x_{1}},
      -\alpha u_{x_{2}},
      1\bigr)^{\top}}{\sqrt{1+\alpha^{2}|\nabla u|^{2}}}.
\]
Then $P(u)=I_{3}-m(u)m(u)^{\top}$.

\item If, in addition, $u\in C^{2}(\Omega)$, then for
$i\in\{1,2\}$,
\begin{equation}\label{P_Pi_P}
\begin{aligned}
P(u)
\bigl(\partial_{x_i}P(u)\bigr)P(u)
&=0,\\
\bigl(I_{3}-P(u)\bigr)
\bigl(\partial_{x_i}P(u)\bigr)
\bigl(I_{3}-P(u)\bigr)
&=0.
\end{aligned}
\end{equation}
\end{enumerate}
\end{proposition}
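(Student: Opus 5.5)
The plan is to verify the four items in order, each by direct linear algebra, with only the last one using derivatives.

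For item 1, I will multiply out $B(u)^\top B(u)$ entrywise. This gives $\delta_{ij}+\alpha^2 u_{x_i}u_{x_j}$, that is, $G(u)=I_2+\alpha^2\nabla u\nabla u^\top$. Symmetry is then evident. Positive definiteness follows from
\[
\xi^\top G(u)\xi=|\xi|^2+\alpha^2(\nabla u\cdot\xi)^2\ge|\xi|^2>0
\]
for $\xi\neq0$. Equivalently, $B(u)$ has full column rank because its top $2\times2$ block is $I_2$.

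For item 2, symmetry of $P(u)$ follows from the symmetry of $G(u)^{-1}$. Idempotence follows from
\[
P^2=BG^{-1}(B^\top B)G^{-1}B^\top=BG^{-1}B^\top .
\]
Likewise, $PB=BG^{-1}G=B$. Thus $\operatorname{range}B\subset\operatorname{range}P$. The reverse inclusion holds because $P=B(\cdots)$, so $\operatorname{range}P=\operatorname{range}B=T_{X_u(x)}\Sigma$.

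For item 3, I will set $\tilde m=(-\alpha\nabla u,1)^\top$, so that $m(u)=\tilde m/|\tilde m|$. A direct check gives $B^\top\tilde m=(-\alpha u_{x_i}+\alpha u_{x_i})_i=0$. Hence $m(u)$ spans the orthogonal complement of the two-dimensional space $\operatorname{range}B$. Since $P(u)$ is the orthogonal projector onto $\operatorname{range}B$ (symmetric and idempotent by item 2), it equals $I_3-mm^\top$ by uniqueness of orthogonal projectors. Alternatively, one could verify the formula via Sherman–Morrison on $G^{-1}$, but the uniqueness argument is cleaner.

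For item 4, the main point is to differentiate $P^2=P$ and to justify that differentiation. When $u\in C^2$, the entries of $m(u)$ are $C^1$, so $P$ is $C^1$ by item 3. Differentiating $P^2=P$ gives
\[
(\partial_iP)P+P(\partial_iP)=\partial_iP .
\]
Multiplying on both sides by $P$ yields $2P(\partial_iP)P=P(\partial_iP)P$, hence $P(\partial_iP)P=0$. For the second identity I apply the same argument to $Q=I_3-P$. It is also an idempotent, and it satisfies $\partial_iQ=-\partial_iP$, so $Q(\partial_iQ)Q=0$. This gives the second identity. I expect no real obstacle here; the only care needed is the regularity remark ensuring $P\in C^1$.
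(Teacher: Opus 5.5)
Your proposal is correct and follows the paper's proof essentially step by step: computing $G(u)$ directly, deriving symmetry, idempotence and $P(u)B(u)=B(u)$ from the definition, using $B(u)^\top m(u)=0$ with uniqueness of orthogonal projectors, and differentiating $P(u)^2=P(u)$. The small differences are cosmetic. Your explicit bound $\xi^\top G(u)\xi\ge|\xi|^2$ supplies the full-column-rank detail that the paper's $|B(u)\xi|^2>0$ leaves implicit, and you get the second identity via $Q=I_3-P(u)$ rather than by multiplying through by $I_3-P(u)$.
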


\begin{proof}
For every nonzero $\xi\in\mathbb{R}^{2}$, one has
\[
\xi^{\top}G(u)\xi
=
\xi^{\top}B(u)^{\top}
B(u)\xi
=
|B(u)\xi|^{2}>0.
\]
Therefore, $G(u)$ is symmetric positive definite and invertible. Direct multiplication gives
\[
G(u)=
I_{2}+\alpha^{2}\nabla u\,\nabla u^{\top}.
\]

Because $G(u)$ is symmetric, its inverse is also
symmetric. Hence $P(u)^{\top}=P(u)$.
Moreover, $P(u)^{2}=P(u)$. Thus, $P(u)$ is an orthogonal projection. From the definition of $P(u)$, $\operatorname{Ran}P(u) \subseteq \operatorname{Ran}B(u)$.
On the other hand,
\[
P(u)B(u)=B(u)G(u)^{-1}
B(u)^{\top}B(u)=B(u).
\]

A direct calculation shows that
\begin{equation}\label{B_times_m}
B(u)^{\top}m(u)
=
\frac{1}{\sqrt{1+\alpha^{2}|\nabla u|^{2}}}
\begin{pmatrix}
1&0&\alpha u_{x_{1}}\\
0&1&\alpha u_{x_{2}}
\end{pmatrix}
\begin{pmatrix}
-\alpha u_{x_{1}}\\
-\alpha u_{x_{2}}\\
1
\end{pmatrix}
=0.
\end{equation}
It follows that $P(u)m(u)=0$. Since $P(u)$ has rank two, its kernel is one dimensional. Therefore, $\ker P(u)=
\operatorname{span}\{m(u)\}$. Both $P(u)$ and
$I_{3}-m(u)m(u)^{\top}$ are orthogonal
projections onto the plane orthogonal to $m(u)$.
Consequently,
\begin{equation}\label{P_alpha}
P(u)
=
I_{3}-m(u)m(u)^{\top}.
\end{equation}

Finally, suppose that $u\in C^{2}(\Omega)$. Differentiating \eqref{P_alpha} with respect to $x_i$, for $i\in\{1,2\}$, gives
\begin{equation}\label{partial_P_alpha}
\partial_{x_i}P(u)
=-\partial_{x_i}m(u)m(u)^{\top}
-m(u)\partial_{x_i}m(u)^{\top}.
\end{equation}
Alternatively, differentiating $P(u)^{2}
=P(u)$ yields
\[
\bigl(\partial_{x_i}P(u)\bigr)P(u)
+P(u)\bigl(\partial_{x_i}P(u)\bigr)
=\partial_{x_i}P(u).
\]
Multiplying this identity from the left and right by
$P(u)$ proves
\[
P(u)
\bigl(\partial_{x_i}P(u)\bigr)
P(u)=0.
\]
Multiplication by $I_{3}-P(u)$ similarly gives
\[
\bigl(I_{3}-P(u)\bigr)
\bigl(\partial_{x_i}P(u)\bigr)
\bigl(I_{3}-P(u)\bigr)
=0.
\]
\end{proof}

\begin{remark}\label{rem:geometric-meaning-dP}
By decomposing $\partial_{x_i}P(u)$ with respect
to the orthogonal splitting $\mathbb{R}^3
=\operatorname{Ran}(P(u))\oplus \operatorname{Ran}(I_3-P(u))$, \eqref{P_Pi_P} shows that
\[
\partial_{x_i}P(u)=
P(u)\partial_{x_i}P(u)(I_3-P(u))+ (I_3-P(u))\partial_{x_i}P(u)P(u).
\]
Consequently, $\partial_{x_i}P(u)$ maps tangent vectors into the normal space and normal vectors into the tangent space.
\end{remark}

We define the Gauss-map variation regularizer by
\begin{equation}\label{eq:RGGV}
\mathcal R(u):=
\int_\Omega\norm{\nabla P(u)}_\mathrm{F}\mathrm{d}x,
\end{equation}
where $\norm{\nabla P(u)}_\mathrm{F}^2:=
\norm{\partial_{x_1}P(u)}_\mathrm{F}^2+
\norm{\partial_{x_2}P(u)}_\mathrm{F}^2$. For $f\in L^p(\Omega)$, the proposed Gauss-map variation denoising model is
\begin{equation}
\inf_{u\in\mathcal X_p}
\left\{E(u):=
\lambda\mathcal R(u)+
\frac{1}{p}\int_\Omega\abs{u-f}^p\mathrm{d}x\right\},
\quad 1\leq p<+\infty,
\label{eq:GGVmodel}
\end{equation}
where $\lambda>0$ is the regularization parameter and
\begin{equation}
\mathcal X_p
:=
\left\{
u\in W^{1,2}(\Omega)\cap L^p(\Omega):
P(u)\in W^{1,1}(\Omega;\R^{3\times3})
\right\}.
\end{equation}
The mapping $x\longmapsto P(u)(x)$ is a matrix representation of the unoriented Gauss map of the image graph. Moreover, $\mathcal R(u)$ can be interpreted as the total variation of the unoriented Gauss map.

\begin{remark}
The derivative and the measure in \eqref{eq:RGGV} are taken with
respect to the original image coordinates. Thus, $\mathcal R$ is the Euclidean variation on $\Omega$ of the pulled back Gauss map. It is different from the surface energy $\int_\Sigma\abs{\mathrm{d}P}\mathrm{d}A$, where $\mathrm{d}A$ denotes the surface area element. The formulation \eqref{eq:RGGV} is useful in image processing because it is defined
directly on the pixel domain $\Omega$.
\end{remark}

\section{Equivalent formulation and relation to other second-order models}\label{sec:Equivalent_formulation}

The regularizer in \eqref{eq:RGGV} admits an equivalent
representation that eliminates the matrix inverse and is therefore
more convenient for both theoretical analysis and numerical
computation. The equivalence is established below.

\begin{proposition}\label{prop:equivalence}
Let $u\in C^2(\Omega)$. Then $\norm{\nabla P(u)}_{\mathrm F}=
\sqrt{2}\norm{\nabla m(u)}_{\mathrm F}$. Consequently,
\begin{equation}
\label{eq:regularizeridentity}
\mathcal R(u)=\sqrt{2} \int_\Omega \norm{\nabla m(u)}_{\mathrm F}\mathrm{d}x,
\end{equation}
where
$\norm{\nabla m(u)}_{\mathrm F}^2:=
\abs{\partial_{x_1}m(u)}^2+
\abs{\partial_{x_2}m(u)}^2$.
\end{proposition}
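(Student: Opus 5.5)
The plan is to compute $\norm{\partial_{x_i}P(u)}_{\mathrm F}$ directly for each fixed $i\in\{1,2\}$ from the derivative formula \eqref{partial_P_alpha} in the proof of Proposition~\ref{prop:properties-of-P}. Summing over $i$ and integrating over $\Omega$ then gives both claims.

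Fix $i$ and abbreviate $m=m(u)$ and $a=\partial_{x_i}m(u)$. Since $u\in C^2(\Omega)$, the field $m$ is $C^1$, so $a$ is well defined. Formula \eqref{partial_P_alpha} reads
\[
\partial_{x_i}P(u)=-\bigl(am^\top+ma^\top\bigr).
\]
The key structural fact is $m^\top a=0$, which follows by differentiating $\abs{m}^2=1$. Using $\norm{M}_{\mathrm F}^2=\operatorname{tr}(M^\top M)$, I expand
\[
\norm{am^\top+ma^\top}_{\mathrm F}^2
=\operatorname{tr}\bigl((ma^\top+am^\top)(am^\top+ma^\top)\bigr).
\]
This splits into four traces. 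The two cross terms equal $\operatorname{tr}(ma^\top m a^\top)=(a^\top m)^2=0$ and, symmetrically, $(m^\top a)^2=0$. The two diagonal terms equal $\operatorname{tr}(m a^\top a m^\top)=\abs{a}^2\abs{m}^2=\abs{a}^2$ and, likewise, $\abs{a}^2$.

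Hence $\norm{\partial_{x_i}P(u)}_{\mathrm F}^2=2\abs{\partial_{x_i}m(u)}^2$. Summing over $i=1,2$ gives $\norm{\nabla P(u)}_{\mathrm F}^2=2\norm{\nabla m(u)}_{\mathrm F}^2$. Taking square roots pointwise and integrating yields \eqref{eq:regularizeridentity}.

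As a geometric cross-check, Remark~\ref{rem:geometric-meaning-dP} says $\partial_{x_i}P$ consists of two mutually transposed off-diagonal blocks, $-am^\top$ and $-ma^\top$. Each block has Frobenius norm $\abs{a}$, and the blocks are orthogonal in the Frobenius inner product, which again accounts for the factor $2$.

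No step here is a real obstacle. The only points needing care are the orthogonality $m\cdot\partial_{x_i}m=0$, which is what kills the cross terms, and the remark that $C^2$ regularity of $u$ makes $m(u)$ a $C^1$ field, so that all pointwise derivatives exist.
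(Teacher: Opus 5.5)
Your proposal is correct and follows essentially the same route as the paper: differentiate $P(u)=I_3-m(u)m(u)^\top$, use $m\cdot\partial_{x_i}m=0$ to eliminate the cross terms in $\norm{\partial_{x_i}m\,m^\top+m\,\partial_{x_i}m^\top}_{\mathrm F}^2$, obtain $2\abs{\partial_{x_i}m}^2$, then sum over $i$ and integrate. Your explicit trace expansion and the block-decomposition cross-check simply make explicit the step the paper states in one line.
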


\begin{proof}
Since $m(u)$ is a unit vector field, its derivative is
orthogonal to $m(u)$. Combining this orthogonality with
\eqref{partial_P_alpha}, we obtain
\begin{align*}
\left\|
\partial_{x_i}P(u)
\right\|_{\mathrm F}^{2}
&=\left\|
\partial_{x_i}m(u)m(u)^{\top}
+m(u)\partial_{x_i}m(u)^{\top}
\right\|_{\mathrm F}^{2}\\
&=2\left|\partial_{x_i}m(u)\right|^{2}.
\end{align*}
Summing over $i=1,2$ completes the proof.
\end{proof}

The projector formulation \eqref{eq:RGGV} makes the
basis independent geometric structure of the model explicit,
whereas the equivalent formulation \eqref{eq:regularizeridentity} is more convenient for analysis and implementation. Expanding the components of $m(u)$ gives the following explicit representation:
\begin{equation}\label{eq:R_explicit}
\mathcal R(u)
=
\sqrt{2}
\int_\Omega
\Bigg(
\norm{
\nabla\left(
\frac{\alpha\nabla u}{\sqrt{1+\alpha^{2}|\nabla u|^{2}}}
\right)
}_{\mathrm F}^{2}
+
\abs{
\nabla\left(
\frac{1}{\sqrt{1+\alpha^{2}|\nabla u|^{2}}}
\right)
}^{2}
\Bigg)^{1/2}\mathrm{d}x.
\end{equation}
The denoising problem associated with the regularizer is formulated as
\begin{equation}\label{eq:E_explicit}
E(u):=
\lambda\mathcal R(u)
+\frac1p\int_\Omega |u-f|^p\mathrm dx,
\end{equation}
where $\lambda>0$ is the regularization parameter, $p=1$ is used for impulsive denoising and $p=2$ is used for Gaussian denoising.

\subsection{Relation to TSGV}

Consider the scaling function
$\psi(s):=\alpha/\sqrt{1+\alpha^2s^2}$. With this choice, the
corresponding TSGV regularizer becomes
\[
\mathcal R_{\mathrm{TSGV}}(u)
:=
\int_\Omega
\norm{
\nabla\left(
\frac{\alpha\nabla u}{\sqrt{1+\alpha^{2}|\nabla u|^{2}}}
\right)
}_{\mathrm F}\mathrm{d}x.
\]
Thus, TSGV measures the variation of the two horizontal components
of the unit normal. In contrast, Gauss-map variation measures the variation of the full unit normal and therefore also includes the contribution of its vertical component. The additional term
$\Big|\nabla\Big(\frac{1}{\sqrt{1+\alpha^{2}|\nabla u|^{2}}}\Big)\Big|^{2}$ in \eqref{eq:R_explicit} records the variation of the normalization factor and becomes significant in regions where the gradient magnitude changes rapidly.

\subsection{Relation to curvature based models}

We next compare Gauss-map variation with two representative
curvature based regularizers. For a sufficiently smooth image $u$, we consider the following mean curvature regularizer associated with the scaled graph:
\begin{equation}\label{eq:RMC}
\mathcal R_{\mathrm{MC}}(u)
:=
\frac12\int_\Omega
\abs{
\operatorname{div}\left(
\frac{\alpha\nabla u}{\sqrt{1+\alpha^{2}|\nabla u|^{2}}}
\right)
}\mathrm{d}x
=
\frac12\int_\Omega
\abs{
\operatorname{tr}\left[
\nabla\left(
\frac{\alpha\nabla u}{\sqrt{1+\alpha^{2}|\nabla u|^{2}}}
\right)
\right]
}\mathrm{d}x,
\end{equation}
where $\operatorname{div}$ and $\operatorname{tr}$ denote the
divergence and trace operators, respectively. Hence, the
mean curvature model retains only the trace of the Jacobian of the
normalized gradient, whereas Gauss-map variation measures its
full variation together with that of the normalization factor.

For the same smooth graph, the Gaussian curvature is
\[
K(u)
=
\frac{\alpha^2\det(\nabla^2u)}
{(1+\alpha^{2}|\nabla u|^2)^2}.
\]
Since the surfacearea element is $\mathrm{d}A=\sqrt{1+\alpha^{2}|\nabla u|^{2}}\mathrm{d}x$, the corresponding Gaussian curvature regularizer is
\begin{equation}\label{eq:RGC}
\mathcal R_{\mathrm{GC}}(u)
:=
\int_\Sigma
\abs{K(u)}\mathrm{d}A
=
\int_\Omega
\frac{
\alpha^2\abs{\det(\nabla^2u)}
}{
(\sqrt{1+\alpha^{2}|\nabla u|^{2}})^3
}\mathrm{d}x.
\end{equation}
By using the surface geometry of the scaled graph, the mean curvature and Gaussian curvature satisfy
\[
\left|H\right|
=
\left|\frac{k_1+k_2}{2}\right|,
\qquad
\left|K\right|
=
\left|k_1k_2\right|,
\]
where $k_1$ and $k_2$ denote the principal curvatures of the
scaled graph. To place Gauss-map variation in the same geometric
framework, we next derive the corresponding expression for the surface differential of the Gauss-map.

We use the standard differential of a smooth map between
Riemannian manifolds and its Hilbert--Schmidt norm; see
\cite{lee2006riemannian,lee2018introduction}. The relationship
between the Weingarten map and the principal curvatures can be
found in \cite{do2016differential}. Here, $d_{\Sigma}$ denotes
differentiation along the tangent directions of the scaled graph,
and we define the Weingarten map by $-d_{\Sigma}m$. Let $\tau_1$ and $\tau_2$ be the orthonormal principal directions corresponding to $k_1$ and $k_2$, respectively. Then,
\[
d_{\Sigma}m(\tau_i)=-k_i\tau_i,
\qquad i=1,2.
\]
It follows that
\[
\left\|d_{\Sigma}m\right\|_{\mathrm{HS}}
=
\sqrt{k_1^2+k_2^2}.
\]
Here, $\|\cdot\|_{\mathrm{HS}}$ denotes the Hilbert--Schmidt norm. Using the projection representation of the unoriented Gauss map, we further obtain
\[
\left\|d_{\Sigma}P\right\|_{\mathrm{HS}}
=
\sqrt{2}\,
\left\|d_{\Sigma}m\right\|_{\mathrm{HS}}
=
\sqrt{2\left(k_1^2+k_2^2\right)}.
\]
From the viewpoint of the principal curvatures, the quantity
$\sqrt{k_1^2+k_2^2}$ measures the full magnitude of the normal
variation and vanishes only when both principal curvatures are
zero. By contrast, the mean curvature magnitude may vanish when $k_1=-k_2$, whereas the Gaussian curvature magnitude vanishes whenever either principal curvature is zero. Consequently, Gauss-map variation avoids the cancellation inherent in mean curvature and remains sensitive to one-dimensional or cylindrical bending modes that are not detected by Gaussian curvature.

\section{Edge-contrast preservation for $C^2$ boundaries}
\label{sec:Edge_preservation}

In this section, we study the edge-contrast preservation property
of the Gauss-map variation model \eqref{eq:GGVmodel} for objects with
$C^2$ boundaries. Let $\Omega\subset\mathbb R^2$ be an open and
bounded domain. We consider the following class of piecewise-constant
images:
\begin{align*}
\mathcal C
:=
\bigl\{
f^1=h\chi_D \big|
h>0,\,
\overline D\subset\Omega,\,
D\text{ is an open set},\, \partial D\text{ is of class }C^2
\bigr\}.
\end{align*}
For $f^1\in\mathcal C$, we denote $\Gamma:=\partial D, L:=\mathcal H^1(\Gamma)$. Here, $\chi_D$ denotes the characteristic function of $D$, namely,
\[
\chi_D(x)
:=
\begin{cases}
1, & x\in D,\\
0, & x\in\Omega\setminus D.
\end{cases}
\]
The notation $\mathcal H^1$ denotes the one dimensional Hausdorff
measure, which coincides with arc length on a $C^1$ curve.

Since $f^1$ is discontinuous across $\Gamma$, its Gauss-map variation
cannot be evaluated directly through \eqref{eq:RGGV}. We therefore
define it through smooth approximations obtained by smoothing the jump
along the normal direction of $\Gamma$. Let $d$ be the signed distance
function to $\Gamma$, chosen to be negative inside $D$ and positive
outside $D$. Since $\Gamma$ is of class $C^2$, there exists
$\delta>0$ such that the tubular coordinates around $\Gamma$ are well
defined in $U:=\{x\in\Omega:\abs{d(x)}<\delta\}$. We introduce the following class of one-dimensional transition
functions:
\begin{equation}
\label{set_S_GGV}
\begin{aligned}
\mathcal S_\epsilon
:=
\Bigl\{
\varrho\in C^2(\mathbb R)\ \Big|\
&0\leq\varrho(t)\leq1,\,
\varrho(0)=\frac12,\,
\varrho'(t)\leq0,\,
\varrho'(t)=\varrho'(-t),\\
&\varrho''(t)\leq0
\,\text{for }t<0,\quad
\varrho''(t)\geq0
\,\text{for }t>0,\\
&\varrho(t)=1
\,\text{if }t\leq-\epsilon,\quad
\varrho(t)=0
\,\text{if }t\geq\epsilon
\Bigr\},
\end{aligned}
\end{equation}
where $\epsilon\in(0,\delta)$. Set $s:=-\varrho'(0)>0$. Since $\varrho(-\epsilon)=1$, $\varrho(\epsilon)=0$, and
$-\varrho'(t)\leq-\varrho'(0)=s$ for
$t\in[-\epsilon,\epsilon]$, we have
\[
1
=
-\int_{-\epsilon}^{\epsilon}\varrho'(t)\,\mathrm dt
\leq
2\epsilon s.
\]
Thus, $s\geq1/(2\epsilon)$. In particular, for every admissible family with $\epsilon\to0^+$, one has $s\to+\infty$.

For $\varrho\in\mathcal S_\epsilon$, define $\theta(t):=\arctan\bigl(\alpha h\varrho'(t)\bigr)$. The range of $\arctan$ is taken to be $\left(-\frac{\pi}{2},\frac{\pi}{2}\right)$. Since $\varrho'$ is even, $\theta$ is also even. Moreover, $\theta(0)=-\arctan(\alpha hs)$, and
\[
\theta'(t)
=
\frac{
\alpha h\varrho''(t)
}{
1+\alpha^2h^2\bigl(\varrho'(t)\bigr)^2
}.
\]
It follows from the assumptions on $\varrho''$ that $\theta'(t)\leq0$ for $t<0$, $\theta'(t)\geq0$ for $t>0$. Furthermore, since $\varrho'$ vanishes at $t=\pm\epsilon$, we have $\theta(-\epsilon)=\theta(\epsilon)=0$. Consequently,
\begin{equation}
\label{GGV_theta_integral}
\int_{-\epsilon}^{\epsilon}
\abs{\theta'(t)}\mathrm dt
=
-2\theta(0)
\longrightarrow
\pi
\qquad\text{as }\epsilon\to0^+.
\end{equation}

We define a smooth approximation $g$ of $f^1$ by
\begin{equation}
\label{eq:g_GGV}
g(x)
=
\begin{cases}
h\varrho(d(x)),
&x\in U,\\
h,
&x\in D\setminus U,\\
0,
&x\in\Omega\setminus(D\cup U).
\end{cases}
\end{equation}
By the construction of $\varrho$, the function $g$ is of class $C^2$. We denote by $\mathcal A(f^1)$ the collection of all functions $g$ generated by \eqref{eq:g_GGV} with $0<\epsilon<\delta$ and $\varrho\in\mathcal S_\epsilon$.

Next, we derive the local representation of the Gauss-map variation of $g$ and express its variation in tubular coordinates around $\Gamma$. Let $\Gamma$ be parametrized by arclength: $y=\gamma(\sigma)$ with $\abs{\gamma'(\sigma)}=1$. Along $\Gamma$, we introduce the Frenet frame $\{\tau(\sigma),\mathrm n(\sigma)\}$, where $\tau(\sigma):=\gamma'(\sigma)$ is the unit tangent vector and $\mathrm n(\sigma)$ is the outer unit normal vector. The signed curvature $\kappa(\sigma)$ is characterized by
\[
\mathrm n'(\sigma)
=
-\kappa(\sigma)\tau(\sigma).
\]
Every point $x$ satisfying $\abs{d(x)}<\epsilon$ can be represented
uniquely as
\[
x=y+t\mathrm n(y),
\qquad
y\in\Gamma,
\quad
\abs{t}<\epsilon.
\]
Equivalently, the tubular coordinate map is
\[
X(\sigma,t)
:=
\gamma(\sigma)+t\mathrm n(\sigma).
\]

The following lemma gives the local representation of the Gauss-map
variation in these coordinates.

\begin{lemma}
\label{lem:local_representation_GGV}
Under the above setting, choose $\delta>0$ sufficiently small. Then
\begin{equation}\label{GGV_pointwise_norm}
\norm{\nabla P(g)}_{\mathrm F}
=
\sqrt{2}
\left[
\abs{\theta'(t)}^2
+
\left(
\frac{\kappa(y)\sin\theta(t)}
{1-\kappa(y)t}
\right)^2
\right]^{1/2}.
\end{equation}
Furthermore,
\begin{equation}\label{eq:R_g_GGV}
\begin{aligned}
\mathcal R(g)
&=
\sqrt{2}
\int_\Gamma
\int_{-\epsilon}^{\epsilon}
\left[
\abs{\theta'(t)}^2
+
\left(
\frac{\kappa(y)\sin\theta(t)}
{1-\kappa(y)t}
\right)^2
\right]^{1/2}\\
&\hspace{45mm}\times
\bigl(1-\kappa(y)t\bigr)
\mathrm dt\mathrm d\mathcal H^1(y),
\end{aligned}
\end{equation}
where $\kappa(y):=\kappa(\sigma)$ for $y=\gamma(\sigma)$.
\end{lemma}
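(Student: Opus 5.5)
By Proposition~\ref{prop:equivalence}, $\norm{\nabla P(g)}_{\mathrm F}=\sqrt2\norm{\nabla m(g)}_{\mathrm F}$. It therefore suffices to compute $\nabla m(g)$ in the tubular coordinates $(\sigma,t)$ and then change variables in the integral.

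\textbf{Step 1: the normal field.} In $U$ we have $\nabla d(x)=\mathrm n(y)$, the outer normal at the foot point $y$, so $\nabla g=h\varrho'(t)\,\mathrm n(y)$. Writing $\alpha h\varrho'(t)=\tan\theta(t)$ with $\cos\theta>0$, we get $\sqrt{1+\alpha^2|\nabla g|^2}=1/\cos\theta$. Hence
\[
m(g)=\bigl(-\sin\theta(t)\,\mathrm n(y),\ \cos\theta(t)\bigr)^\top .
\]
So $m(g)$ is a unit vector depending only on the scalar $\theta(t)$ and the planar unit vector $\mathrm n(\sigma)$. Outside $U$, and wherever $|t|\ge\epsilon$, $g$ is locally constant. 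There $m(g)=(0,0,1)^\top$, the integrand vanishes, and $\theta=0$ makes the formula consistent.

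\textbf{Step 2: derivatives in the frame.} The inverse of the tubular map gives
\[
\nabla t=\mathrm n,\qquad \nabla\sigma=\frac{\tau}{1-\kappa t}.
\]
To justify the second identity, differentiate $X(\sigma,t)=\gamma+t\mathrm n$ and use $\mathrm n'=-\kappa\tau$. This gives $\partial_\sigma X=(1-\kappa t)\tau$ and $\partial_t X=\mathrm n$, and the Jacobian matrix is inverted in the orthonormal frame $\{\tau,\mathrm n\}$. Here $\delta$ is chosen small enough that $1-\kappa t>0$, i.e.\ $\delta\sup|\kappa|<1$.

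Next compute the two partial derivatives of $m$:
\[
\partial_t m=\theta'(t)\bigl(-\cos\theta\,\mathrm n,\ -\sin\theta\bigr)^\top,\qquad
\partial_\sigma m=\bigl(\kappa\sin\theta\,\tau,\ 0\bigr)^\top .
\]
By the chain rule, $\nabla m=\partial_t m\otimes\nabla t+\partial_\sigma m\otimes\nabla\sigma$. Since $\nabla t$ and $\nabla\sigma$ are orthogonal, the Frobenius norm splits into a sum of squares:
\[
\norm{\nabla m}_{\mathrm F}^2=|\partial_t m|^2+\frac{|\partial_\sigma m|^2}{(1-\kappa t)^2}
=|\theta'|^2+\Bigl(\frac{\kappa\sin\theta}{1-\kappa t}\Bigr)^2 .
\]
Multiplying by $\sqrt2$ gives \eqref{GGV_pointwise_norm}.

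\textbf{Step 3: change of variables.} The Jacobian determinant of $X$ is $1-\kappa(\sigma)t>0$, so $\mathrm dx=(1-\kappa t)\,\mathrm dt\,\mathrm d\sigma$ on $\{|d|<\epsilon\}$. Since $\sigma$ is arclength, $\mathrm d\sigma=\mathrm d\mathcal H^1(y)$. The integrand vanishes outside this strip, so integrating the pointwise identity yields \eqref{eq:R_g_GGV}.

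\textbf{Main obstacle.} The delicate point is the rigorous justification of the tubular-coordinate facts. We need $d\in C^2$ on $U$, $|\nabla d|=1$, $\nabla d(x)=\mathrm n(y(x))$, and the exact form $\nabla\sigma=\tau/(1-\kappa t)$. These hold for a $C^2$ boundary once $\delta$ is below the reach of $\Gamma$; we would cite the standard result (Gilbarg--Trudinger, Lemma~14.16) rather than reprove it.

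If $\Gamma$ has several components, each is parametrized separately and the integrals are summed. The orthogonality of $\nabla t$ and $\nabla\sigma$ is what removes any cross term in the Frobenius norm, so it should be stated explicitly.
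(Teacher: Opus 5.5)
Your proposal is correct and follows essentially the same route as the paper: the paper also writes $m(g)=(-\sin\theta\,\mathrm n,\cos\theta)^\top$ in tubular coordinates and splits $\norm{\nabla m(g)}_{\mathrm F}^2$ along the orthonormal frame $\{\mathrm n,\tau\}$. It uses the directional derivatives $D_{\mathrm n}=\partial_t$ and $D_\tau=(1-\kappa t)^{-1}\partial_\sigma$, which are equivalent to your $\nabla t=\mathrm n$ and $\nabla\sigma=\tau/(1-\kappa t)$, and then integrates with the Jacobian $1-\kappa t$. Your explicit appeals to Proposition~\ref{prop:equivalence} and to the standard $C^2$ regularity of the signed distance function simply make explicit what the paper uses implicitly.
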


\begin{proof}
Differentiating $X$ with respect to $\sigma$ and $t$ gives
\begin{equation}
\label{Differentiating_X_GGV}
\begin{aligned}
\partial_\sigma X(\sigma,t)
&=
\gamma'(\sigma)+t\mathrm n'(\sigma)
=
\bigl(1-\kappa(\sigma)t\bigr)\tau(\sigma),\\
\partial_tX(\sigma,t)
&=
\mathrm n(\sigma).
\end{aligned}
\end{equation}
By the definition of the signed distance function $d$ and
\eqref{Differentiating_X_GGV}, we have
$d(X(\sigma,t))=t$, and
\begin{align*}
\partial_t [d(X(\sigma,t))]
&=
\nabla d(X(\sigma,t))
\cdot\partial_tX(\sigma,t)
=
1,\\
\partial_\sigma [d(X(\sigma,t))]
&=
\nabla d(X(\sigma,t))
\cdot\partial_\sigma X(\sigma,t)
=
0.
\end{align*}
That is,
\[
\nabla d(X(\sigma,t))\cdot\mathrm n(\sigma)=1,
\quad
\nabla d(X(\sigma,t))\cdot\tau(\sigma)=0.
\]
Therefore, $\nabla d(X(\sigma,t))=\mathrm n(\sigma)$. By \eqref{Differentiating_X_GGV}, and taking $\delta>0$ sufficiently small, the arclength element of the parallel curve $t=\mathrm{const}$ is
\[
\abs{\partial_\sigma X(\sigma,t)}\mathrm d\sigma
=
\bigl(1-\kappa(\sigma)t\bigr)\mathrm d\sigma.
\]
Therefore, differentiation along the unit tangent and unit normal
directions is given, respectively, by
\begin{align*}
D_\tau
&=
\frac{1}{\abs{\partial_\sigma X(\sigma,t)}}
\frac{\mathrm d}{\mathrm d\sigma}
=
\frac{1}{1-\kappa(\sigma)t}
\frac{\mathrm d}{\mathrm d\sigma},\\
D_{\mathrm n}
&=
\frac{1}{\abs{\partial_tX(\sigma,t)}}
\frac{\mathrm d}{\mathrm dt}
=
\frac{\mathrm d}{\mathrm dt}.
\end{align*}

Since $g(x)=h\varrho(d(x))$ in the transition layer, we have $\nabla g=h\varrho'(d)\nabla d$. Evaluating this identity at $X(\sigma,t)$ gives $\nabla g(X(\sigma,t))=h\varrho'(t)\mathrm n(\sigma)$, and
\[
W(g)(X(\sigma,t))
=
\sqrt{
1+\alpha^2h^2\bigl(\varrho'(t)\bigr)^2
}.
\]
By the definition of $\theta$,
\[
\sin\theta(t)
=
\frac{
\alpha h\varrho'(t)
}{
\sqrt{
1+\alpha^2h^2\bigl(\varrho'(t)\bigr)^2
}
},
\quad
\cos\theta(t)
=
\frac{1}{
\sqrt{
1+\alpha^2h^2\bigl(\varrho'(t)\bigr)^2
}
}.
\]
Using the definition of $m(g)$, we obtain
\[
m(g)(X(\sigma,t))
=
\begin{pmatrix}
-\sin\theta(t)\mathrm n(\sigma)\\
\cos\theta(t)
\end{pmatrix}.
\]
Because $\{\mathrm n,\tau\}$ is an orthonormal frame in the image
domain, we have
\[
\norm{\nabla m(g)}_{\mathrm F}^2
=
\abs{D_{\mathrm n}m(g)}^2
+
\abs{D_\tau m(g)}^2,
\]
and
\begin{align*}
\abs{D_{\mathrm n}m(g)}^2
&=
\bigl(\cos^2\theta(t)+\sin^2\theta(t)\bigr)
\abs{\theta'(t)}^2
=
\abs{\theta'(t)}^2,\\
\abs{D_\tau m(g)}^2
&=
\left(
\frac{\kappa(y)\sin\theta(t)}
{1-\kappa(y)t}
\right)^2.
\end{align*}
Therefore,
\[
\norm{\nabla m(g)}_{\mathrm F}^2
=
\abs{\theta'(t)}^2
+
\left(
\frac{\kappa(y)\sin\theta(t)}
{1-\kappa(y)t}
\right)^2.
\]
Then, we obtain \eqref{GGV_pointwise_norm}.

The area element in the tubular coordinates is
\begin{equation}\label{dx_GGV}
\begin{aligned}
\mathrm dx
&=
\left|
\det\bigl(
\partial_\sigma X(\sigma,t),
\partial_tX(\sigma,t)
\bigr)
\right|
\mathrm d\sigma\mathrm dt\\
&=
\bigl(1-\kappa(\sigma)t\bigr)
\mathrm d\sigma\mathrm dt,\\
&=
\bigl(1-\kappa(y)t\bigr)
\mathrm dt\mathrm d\mathcal H^1(y).
\end{aligned}
\end{equation}
Taking $\delta>0$ sufficiently small ensures that $1-\kappa(y)t>0$.

Finally, $g$ is constant outside the transition layer $\{\abs{d}<\epsilon\}$. Hence $P(g)$ is constant there and
$\nabla P(g)=0$. Combining \eqref{GGV_pointwise_norm} and \eqref{dx_GGV} proves \eqref{eq:R_g_GGV}.
\end{proof}

The following lemmas and theorem establish the edge-contrast
preservation property of the Gauss-map variation model.

\begin{lemma}
\label{lem:generalized_edge_regularizer_GGV}
Let $f^1\in\mathcal C$. In the sense of the smooth normal
approximations introduced above, we have
\begin{equation}
\label{GGV_sharp_edge_value}
\mathcal R(f^1)
:=
\lim_{\epsilon\to0^+}\mathcal R(g)
=
\sqrt2\pi L.
\end{equation}
In particular, the limiting Gauss-map variation is independent of the
object intensity $h$ and the scaling parameter $\alpha>0$.
\end{lemma}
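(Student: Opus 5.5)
We start from the representation \eqref{eq:R_g_GGV} of Lemma~\ref{lem:local_representation_GGV} and split the integrand into a main normal part and a curvature remainder. Write $\kappa_\infty:=\max_\Gamma\abs{\kappa}<\infty$; this bound holds because $\Gamma$ is a compact $C^2$ curve. Shrinking $\delta$ if necessary, assume $\kappa_\infty\delta\le 1/2$, so that $1/2\le 1-\kappa(y)t\le 3/2$ for $\abs t<\epsilon<\delta$.

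\textbf{Two-sided bounds on the integrand.} Using the elementary inequality $a\le\sqrt{a^2+b^2}\le a+\abs b$ for $a\ge0$, together with the factor $(1-\kappa t)$, we get
\[
\abs{\theta'(t)}(1-\kappa(y)t)
\;\le\;
\Bigl[\abs{\theta'(t)}^2+\Bigl(\tfrac{\kappa(y)\sin\theta(t)}{1-\kappa(y)t}\Bigr)^2\Bigr]^{1/2}(1-\kappa(y)t)
\;\le\;
\abs{\theta'(t)}(1-\kappa(y)t)+\abs{\kappa(y)}\abs{\sin\theta(t)}.
\]
The remainder term is then at most $\kappa_\infty$, and its integral over $\Gamma\times(-\epsilon,\epsilon)$ is bounded by $2\epsilon\kappa_\infty L\to0$. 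This step is where the independence from $h$ and $\alpha$ enters: $\abs{\sin\theta}\le1$ regardless of how large $\alpha h\varrho'$ is.

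\textbf{The main term.} Since $\theta'$ does not depend on $y$, Fubini gives
\[
\int_\Gamma\int_{-\epsilon}^{\epsilon}\abs{\theta'(t)}(1-\kappa(y)t)\,\mathrm dt\,\mathrm d\mathcal H^1(y)
= L\int_{-\epsilon}^{\epsilon}\abs{\theta'}\,\mathrm dt-\Bigl(\int_\Gamma\kappa\,\mathrm d\mathcal H^1\Bigr)\int_{-\epsilon}^{\epsilon}t\abs{\theta'(t)}\,\mathrm dt .
\]
The function $\theta$ is even, so $\theta'$ is odd and $t\abs{\theta'(t)}$ is odd. The second integral therefore vanishes exactly. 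Even without this symmetry, it is bounded by $\epsilon\int\abs{\theta'}\le\epsilon\pi$, which tends to $0$. By \eqref{GGV_theta_integral}, the first integral equals $-2\theta(0)=2\arctan(\alpha h s)$. We showed earlier that $s\ge1/(2\epsilon)\to\infty$, so this quantity tends to $\pi$ for every fixed $h>0$ and $\alpha>0$.

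\textbf{Conclusion.} Combining the lower and upper bounds and multiplying by $\sqrt2$, we have
\[
\sqrt2\,L\cdot2\arctan(\alpha hs)\;\le\;\mathcal R(g)\;\le\;\sqrt2\,L\cdot2\arctan(\alpha hs)+2\sqrt2\,\epsilon\kappa_\infty L .
\]
Letting $\epsilon\to0^+$ gives $\sqrt2\pi L$ uniformly over the admissible family $\varrho\in\mathcal S_\epsilon$, since the bounds depend only on $\epsilon$, through $s\ge 1/(2\epsilon)$. The limit is therefore well defined, that is, independent of the chosen approximating sequence, and it is independent of $h$ and $\alpha$.

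The only delicate point is making sure that the limit does not depend on the choice of $\varrho$. This is handled by the uniform bounds above together with the monotonicity of $\arctan$.
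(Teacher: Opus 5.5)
Your proposal is correct and follows essentially the same route as the paper: a lower bound obtained by dropping the curvature term and using the oddness of $t\abs{\theta'(t)}$ to get $2\sqrt2 L\arctan(\alpha hs)$, and an upper bound via $\sqrt{a^2+b^2}\le a+\abs b$ with the remainder controlled by $2\sqrt2\epsilon\kappa_\infty L$, where the paper uses $2\sqrt2\epsilon\int_\Gamma\abs{\kappa}\,\mathrm d\mathcal H^1$ instead. Your observation that the bounds depend on $\varrho$ only through $s\ge 1/(2\epsilon)$, so the limit is uniform over $\mathcal S_\epsilon$, makes explicit a point the paper leaves implicit.
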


\begin{proof}
We first establish the lower bound. By \eqref{eq:R_g_GGV},
\begin{align*}
\mathcal R(g)
&\geq
\sqrt2
\int_\Gamma
\int_{-\epsilon}^{\epsilon}
\abs{\theta'(t)}
\bigl(1-\kappa(y)t\bigr)
\mathrm dt\mathrm d\mathcal H^1(y).
\end{align*}
Since $\theta$ is even, $\abs{\theta'}$ is even. Therefore,
$t\abs{\theta'(t)}$ is odd and $\int_{-\epsilon}^{\epsilon}
t\abs{\theta'(t)}\mathrm dt=0$. It follows that
\begin{equation}
\label{GGV_regularizer_lower_bound}
\mathcal R(g)
\geq
\sqrt2 L
\int_{-\epsilon}^{\epsilon}
\abs{\theta'(t)}\mathrm dt
=2\sqrt2 L\arctan(\alpha hs)
\longrightarrow
\sqrt2\pi L
\quad\text{as }\epsilon\to0^+.
\end{equation}

For the upper bound, \eqref{eq:R_g_GGV} gives
\begin{equation}
\label{GGV_regularizer_upper_bound}
\begin{aligned}
\mathcal R(g)
&\leq
\sqrt2
\int_\Gamma
\int_{-\epsilon}^{\epsilon}
\abs{\theta'(t)}
\bigl(1-\kappa(y)t\bigr)
\mathrm dt\mathrm d\mathcal H^1(y)\\
&\quad+
\sqrt2
\int_\Gamma
\int_{-\epsilon}^{\epsilon}
\abs{\kappa(y)}
\abs{\sin\theta(t)}
\mathrm dt\mathrm d\mathcal H^1(y).
\end{aligned}
\end{equation}
As above, the first term satisfies
\[
\sqrt2
\int_\Gamma
\int_{-\epsilon}^{\epsilon}
\abs{\theta'(t)}
\bigl(1-\kappa(y)t\bigr)
\mathrm dt\mathrm d\mathcal H^1(y)
\longrightarrow
\sqrt2\pi L
\quad\text{as }\epsilon\to0^+.
\]
For the second term, $\abs{\sin\theta(t)}\leq1$ yields
\[
\sqrt2
\int_\Gamma
\int_{-\epsilon}^{\epsilon}
\abs{\kappa(y)}
\abs{\sin\theta(t)}
\mathrm dt\mathrm d\mathcal H^1(y)
\leq
2\sqrt2\epsilon
\int_\Gamma
\abs{\kappa(y)}
\mathrm d\mathcal H^1(y)
\longrightarrow0
\quad\text{as }\epsilon\to0^+.
\]
Consequently,
\[
\limsup_{\epsilon\to0^+}\mathcal R(g)
\leq
\sqrt2\pi L.
\]
Together with \eqref{GGV_regularizer_lower_bound}, this proves
\eqref{GGV_sharp_edge_value}.
\end{proof}

\begin{lemma}
\label{lem:generalized_edge_fidelity_GGV}
Let $f^1\in\mathcal C$ and $p\in[1,+\infty)$. Then every
$g\in\mathcal A(f^1)$ satisfies
\begin{equation}
\label{GGV_fidelity_lower_bound}
\frac1p
\int_\Omega
\abs{g-f^1}^p\mathrm dx
\geq
\frac{Lh^p}{2^p p(p+1)s}.
\end{equation}
\end{lemma}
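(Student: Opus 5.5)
The plan is to localize the fidelity integral to the transition layer, reduce it to a one-dimensional integral in the normal variable using the tubular coordinates of Lemma~\ref{lem:local_representation_GGV}, and then bound the one-dimensional profile from below using the monotonicity and convexity conditions in $\mathcal S_\epsilon$.

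\textbf{Step 1: localization and change of variables.} Outside $\{\abs{d}<\epsilon\}$ we have $g=f^1$: indeed $\varrho=1$ for $t\le-\epsilon$ and $\varrho=0$ for $t\ge\epsilon$, which matches $h\chi_D$. Inside the layer, $f^1(X(\sigma,t))=h$ for $t<0$ and $0$ for $t>0$. Using the area element \eqref{dx_GGV}, I will write
\[
\frac1p\int_\Omega\abs{g-f^1}^p\mathrm dx=\frac{h^p}{p}\int_\Gamma\int_{-\epsilon}^{\epsilon}\phi(t)\bigl(1-\kappa(y)t\bigr)\mathrm dt\,\mathrm d\mathcal H^1(y),
\]
where $\phi(t):=\abs{\varrho(t)-\chi_{\{t<0\}}}^p$.

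\textbf{Step 2: symmetry kills the curvature term.} Since $\varrho'$ is even and $\varrho(0)=\tfrac12$, integrating $\varrho'$ from $0$ gives $\varrho(-t)=1-\varrho(t)$. Hence $\phi(-t)=\varrho(t)^p=\phi(t)$ for $t>0$, so $\phi$ is even. Then $t\phi(t)$ is odd and its integral vanishes, exactly as in the proof of Lemma~\ref{lem:generalized_edge_regularizer_GGV}. The double integral therefore equals $2L\int_0^\epsilon\varrho(t)^p\,\mathrm dt$, independently of $\kappa$.

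\textbf{Step 3: pointwise lower bound on the profile.} This is the only step needing real care. For $t>0$ we have $\varrho''\ge0$, so $\varrho'$ is nondecreasing on $[0,\epsilon]$ and $\varrho'(t)\ge\varrho'(0)=-s$. Integrating gives $\varrho(t)\ge\tfrac12-st$. The earlier computation showed $s\ge 1/(2\epsilon)$, i.e.\ $1/(2s)\le\epsilon$, and the bound $\tfrac12-st$ is nonnegative on $[0,1/(2s)]$. Therefore
\[
\int_0^\epsilon\varrho(t)^p\,\mathrm dt\ge\int_0^{1/(2s)}\Bigl(\tfrac12-st\Bigr)^p\mathrm dt=\frac{1}{2^{p+1}s(p+1)}.
\]

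\textbf{Step 4: conclusion.} Combining Steps 1--3 gives
\[
\frac1p\int_\Omega\abs{g-f^1}^p\mathrm dx\ge\frac{h^p}{p}\cdot2L\cdot\frac{1}{2^{p+1}s(p+1)}=\frac{Lh^p}{2^pp(p+1)s},
\]
which is \eqref{GGV_fidelity_lower_bound}. The only delicate points are checking that $1/(2s)\le\epsilon$, so that the linear bound is used inside the layer, and using the evenness of $\phi$ so that the factor $1-\kappa t$ drops out.
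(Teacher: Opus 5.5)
Your proof is correct and follows essentially the same route as the paper: the same ingredients appear in both, namely tubular coordinates with area element $(1-\kappa(y)t)\,\mathrm dt\,\mathrm d\mathcal H^1(y)$, the tangent-line bound $\frac12-st$ from the sign conditions on $\varrho''$, and a symmetry argument that cancels the $\kappa t$ contribution. The only difference is the order of operations. You use $\varrho(-t)=1-\varrho(t)$ to make the exact integrand even and remove the curvature term before estimating, so you only need convexity for $t>0$. The paper instead first bounds both sides by the tangent line, using concavity on $[-\frac{1}{2s},0]$ and convexity on $[0,\frac{1}{2s}]$, and then cancels the curvature terms through the substitutions $\xi=t+\frac{1}{2s}$ and $\xi=\frac{1}{2s}-t$.
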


\begin{proof}
Since $g(x)=h\varrho(d(x))$ in the transition layer, we have
\[
\abs{g-f^1}
=
\begin{cases}
h\bigl(1-\varrho(d(x))\bigr),
&-\epsilon<d(x)<0,\\
h\varrho(d(x)),
&0<d(x)<\epsilon.
\end{cases}
\]
The tangent line to the graph of $\varrho$ at $t=0$ is $T(t)=\frac12-st$. It intersects the levels $1$ and $0$ at $t=-1/(2s)$ and $t=1/(2s)$, respectively. Since $s\geq1/(2\epsilon)$, we have $\left[-\frac1{2s},\frac1{2s}\right] \subset[-\epsilon,\epsilon]$. Since $\varrho''(t)\leq0$ for $t<0$, the function $\varrho$ lies below its tangent line $\varrho(t)\leq T(t)$, for $-\frac1{2s}\leq t\leq0$. Similarly, since $\varrho''(t)\geq0$ for $t>0$, the function $\varrho$ lies above its tangent line $\varrho(t)\geq T(t)$, for $0\leq t\leq\frac1{2s}$. It follows that
\begin{equation}
\label{GGV_fidelity_tubular_bound}
\begin{aligned}
\frac1p
\int_\Omega
\abs{g-f^1}^p\mathrm dx
&\geq
\frac{h^p}{p}
\int_\Gamma
\int_{-\frac1{2s}}^0
\bigl(1-T(t)\bigr)^p
\bigl(1-\kappa(y)t\bigr)
\mathrm dt\mathrm d\mathcal H^1(y)\\
&\quad+
\frac{h^p}{p}
\int_\Gamma
\int_0^{\frac1{2s}}
T(t)^p
\bigl(1-\kappa(y)t\bigr)
\mathrm dt\mathrm d\mathcal H^1(y).
\end{aligned}
\end{equation}

For each fixed $y\in\Gamma$, the sum of the two inner integrals is
\begin{equation*}
\int_{-\frac1{2s}}^0
s^p\left(t+\frac1{2s}\right)^p
\bigl(1-\kappa(y)t\bigr)\mathrm dt
+
\int_0^{\frac1{2s}}
s^p\left(\frac1{2s}-t\right)^p
\bigl(1-\kappa(y)t\bigr)\mathrm dt.
\end{equation*}
Setting $\xi=t+\frac1{2s}$, in the first integral and $\xi=\frac1{2s}-t$ in the second one, the above expression becomes
\begin{align*}
&\int_0^{\frac1{2s}}
s^p\xi^p
\left[
1-\kappa(y)
\left(\xi-\frac1{2s}\right)
\right]\mathrm d\xi
+\int_0^{\frac1{2s}}
s^p\xi^p
\left[
1-\kappa(y)
\left(\frac1{2s}-\xi\right)
\right]\mathrm d\xi\\
&=
2\int_0^{\frac1{2s}}s^p\xi^p\,\mathrm d\xi
=
\frac{1}{2^p(p+1)s}.
\end{align*}
Substituting this identity into
\eqref{GGV_fidelity_tubular_bound} and using
$L=\mathcal H^1(\Gamma)$, we obtain
\[
\begin{aligned}
\frac1p
\int_\Omega
\abs{g-f^1}^p\mathrm dx
&\geq
\frac{h^p}{p}
\int_\Gamma
\frac{1}{2^p(p+1)s}
\mathrm d\mathcal H^1(y)\\
&=
\frac{Lh^p}{2^p p(p+1)s}.
\end{aligned}
\]
\end{proof}

\begin{theorem}
\label{thm:generalized_edge_GGV}
Let $f^1\in\mathcal C$ and let $p\in[1,+\infty)$. Then there exists
a positive constant $c_1$ such that, whenever
$0<\lambda<c_1$,
\begin{equation}\label{GGV_edge_minimization}
E(f^1)
=
\inf_{g\in\mathcal A(f^1)}
E(g),
\end{equation}
where $E(f^1)$ is understood in the smooth-approximation sense
of Lemma~\ref{lem:generalized_edge_regularizer_GGV}.
\end{theorem}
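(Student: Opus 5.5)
The plan is to compare $E(f^1)$ with a lower bound for $E(g)$ that holds uniformly over $g\in\mathcal A(f^1)$. In the smooth-approximation sense of Lemma~\ref{lem:generalized_edge_regularizer_GGV}, the fidelity term of $f^1$ against itself vanishes. Hence
\[
E(f^1)=\lambda\sqrt2\pi L.
\]
The theorem then needs two inequalities: $E(f^1)\le E(g)$ for every $g\in\mathcal A(f^1)$ when $\lambda$ is small, and $\inf_{g} E(g)\le E(f^1)$.

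\emph{Step 1 (lower bound on $E(g)$).} Fix $g\in\mathcal A(f^1)$ generated by some $\varrho\in\mathcal S_\epsilon$ with slope parameter $s=-\varrho'(0)$. By \eqref{GGV_regularizer_lower_bound}, $\mathcal R(g)\ge 2\sqrt2 L\arctan(\alpha hs)$. By Lemma~\ref{lem:generalized_edge_fidelity_GGV}, the fidelity is at least $Lh^p/(2^pp(p+1)s)$. It therefore suffices to show
\[
\lambda\sqrt2 L\bigl(\pi-2\arctan(\alpha hs)\bigr)\le \frac{Lh^p}{2^pp(p+1)s}.
\]
Using $\pi-2\arctan x=2\arctan(1/x)\le 2/x$ for $x>0$, the left side is at most $2\sqrt2\lambda L/(\alpha h s)$. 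The factor $1/s$ then cancels on both sides, so the inequality holds for every $s>0$ provided
\[
\lambda\le c_1:=\frac{\alpha h^{p+1}}{2^{p+1}\sqrt2\,p(p+1)}.
\]
Thus for $0<\lambda<c_1$ we get $E(g)\ge E(f^1)$ uniformly in $g$.

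\emph{Step 2 (the infimum is attained in the limit).} Take any admissible family with $\epsilon\to0^+$. Then $\mathcal R(g)\to\sqrt2\pi L$ by Lemma~\ref{lem:generalized_edge_regularizer_GGV}. Since $|g-f^1|\le h$ and $g\neq f^1$ only on $\{|d|<\epsilon\}$, whose area is $\int_\Gamma\int_{-\epsilon}^{\epsilon}(1-\kappa t)\,\mathrm dt\,\mathrm d\mathcal H^1=2\epsilon L$ by \eqref{dx_GGV}, the fidelity is at most $h^p\cdot 2\epsilon L/p\to0$. Hence $E(g)\to E(f^1)$, so $\inf_{g\in\mathcal A(f^1)}E(g)\le E(f^1)$. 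Combined with Step 1, this gives \eqref{GGV_edge_minimization}.

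The main obstacle is making the regularizer deficit $\pi-2\arctan(\alpha hs)$ and the fidelity gain scale the same way in $s$. Both decay like $1/s$, so once the elementary bound $\arctan(1/x)\le 1/x$ is in hand, the comparison reduces to a constant inequality. This is what lets the threshold $c_1$ be independent of $\epsilon$ and $\varrho$, though it depends on $h$, $\alpha$ and $p$.
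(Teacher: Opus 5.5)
Your proof is correct and follows the paper's argument: you use the same lower bound $E(g)\ge 2\sqrt2\lambda L\arctan(\alpha hs)+Lh^p/(2^pp(p+1)s)$, reach the same threshold $c_1=\alpha h^{p+1}/(2^{p+1}\sqrt2\,p(p+1))$, and use the same $\epsilon\to0^+$ layer-area estimate for the reverse inequality. The only difference is cosmetic. The paper shows that this lower-bound function of $s$ is strictly decreasing via its derivative and then compares it with its limit $\sqrt2\pi\lambda L$, whereas you bound the deficit directly using $\arctan(1/x)\le 1/x$; that inequality is just the integrated form of the paper's derivative estimate $1/(1+x^2)\le 1/x^2$.
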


\begin{proof}
By \eqref{GGV_regularizer_lower_bound} and
\eqref{GGV_fidelity_lower_bound}, every
$g\in\mathcal A(f^1)$ satisfies
\begin{equation}\label{GGV_energy_lower_bound}
E(g)
\geq
2\sqrt2\lambda L\arctan(\alpha hs)
+\frac{Lh^p}{2^p p(p+1)s}.
\end{equation}
Accordingly, consider
\[
\eta(s):=
2\sqrt2\lambda L\arctan(\alpha hs)
+\frac{Lh^p}{2^p p(p+1)s},
\quad s>0.
\]
Differentiating $\eta$ gives
\begin{equation}\label{GGV_eta_derivative}
\eta'(s)=
\frac{2\sqrt2\lambda L\alpha h}{1+\alpha^2h^2s^2}
-\frac{Lh^p}{2^p p(p+1)s^2
}.
\end{equation}
Define $c_1:=\frac{\alpha h^{p+1}}{2^{p+1}\sqrt2\,p(p+1)}>0$. If $\lambda<c_1$, then $\eta'(s)<0$ for every $s>0$. Thus, $\eta$ is strictly decreasing. Moreover, by Lemma~\ref{lem:generalized_edge_regularizer_GGV},
\[
\lim_{s\to+\infty}\eta(s)
=
\sqrt2\pi\lambda L
=
E(f^1).
\]
Consequently, $\eta(s)\geq E(f^1)$. Combining this inequality with \eqref{GGV_energy_lower_bound}, we obtain
\[
\inf_{g\in\mathcal A(f^1)}
E(g)
\geq
E(f^1).
\]
It remains to establish the reverse inequality. As
$\epsilon\to0^+$, Lemma~\ref{lem:generalized_edge_regularizer_GGV}
gives
\[
\mathcal R(g)\longrightarrow\mathcal R(f^1)
=
\sqrt2\pi L.
\]
Moreover, since $\abs{g-f^1}\leq h$ and
$g=f^1$ outside the transition layer, the tubular-coordinate formula
gives
\[
\begin{aligned}
\frac1p
\int_\Omega
\abs{g-f^1}^p\mathrm dx
&\leq
\frac{h^p}{p}
\int_\Gamma
\int_{-\epsilon}^{\epsilon}
\bigl(1-\kappa(y)t\bigr)
\mathrm dt\mathrm d\mathcal H^1(y)\\
&=
\frac{2\epsilon Lh^p}{p}
\longrightarrow0.
\end{aligned}
\]
Hence, $E(g)\longrightarrow E(f^1)$ as $\epsilon\to0^+$. Therefore, $\inf_{g\in\mathcal A(f^1)}E(g)\leq E(f^1)$. Combining the two inequalities proves \eqref{GGV_edge_minimization}.
\end{proof}

Theorem~\ref{thm:generalized_edge_GGV} shows that, within the class of smooth normal transitions introduced above, the energy of the discontinuous image equals the infimum of the energies of its smooth
approximations. Moreover, the limiting regularization $\sqrt2\pi L$ depends only on the length of the object boundary and is independent of its intensity $h$. In this sense, the Gauss-map variation model preserves both the object boundary and its contrast.

\subsection{Extension to piecewise $C^2$ boundaries}
\label{sec:GGV_edge_contrast_piecewise_C2}

We now extend the preceding edge contrast preservation analysis of
the Gauss-map variation model to images whose object boundaries are
piecewise $C^2$ and contain finitely many corner points. We consider
the following class of piecewise constant images:
\begin{align*}
\mathcal C_{\mathrm{pw}}
:=
\bigl\{
f^2=h\chi_D \big|
&h>0,\,
\overline D\subset\Omega,\,
D\text{ is an open set},
\partial D\text{ is piecewise }C^2
\bigr\}.
\end{align*}
More precisely, we assume that there exist finitely many pairwise
disjoint open $C^2$ arcs $\Gamma_1,\ldots,\Gamma_M$ and finitely many corner points $\mathcal P:=\{P_1,\ldots,P_M\}$, such that
\[
\partial D
=
\bigcup_{l=1}^M\overline{\Gamma_l},
\qquad
\partial D\setminus\mathcal P
=
\bigcup_{l=1}^M\Gamma_l,
\]
and the closures of the arcs meet only at points in $\mathcal P$. The
total boundary length is denoted by $L:=\mathcal H^1(\partial D)$.

The signed distance function to $\partial D$ is generally not of
class $C^2$ near the corner points. Therefore, the tubular-coordinate
argument used for smooth boundaries cannot be applied directly to
$\partial D$. To overcome this difficulty, we approximate $D$ by a
family of domains with $C^2$ boundaries. More precisely, we assume
that there exists a family of open sets $\{D_r\}_{0<r<r_0}$ such that $\overline{D_r}\subset\Omega$, and $\partial D_r$ is of class $C^2$. The associated piecewise-constant images are defined by $f_r^2:=h\chi_{D_r}$.

We assume that the approximation modifies the domain only in
neighborhoods of the corner points:
\[
D_r\mathbin{\Delta}D
:=
(D_r\cup D)\setminus(D_r\cap D)
\subset
\bigcup_{l=1}^M B_r(P_l),
\]
where $D_r\mathbin{\Delta}D$ denotes the symmetric difference between
$D_r$ and $D$. Consequently
\[
\abs{D_r\mathbin{\Delta}D}
\leq
M\pi r^2
\longrightarrow0
\quad\text{as }r\to0^+.
\]
We further assume that the boundary lengths converge:
\[
L_r
:=
\mathcal H^1(\partial D_r)
\longrightarrow
\mathcal H^1(\partial D)
=
L \quad\text{as }r\to0^+.
\]
In addition, the total curvature of the approximating boundaries is
assumed to be uniformly bounded:
\begin{equation}
\label{uniform_total_curvature_GGV}
\sup_{0<r<r_0}
\int_{\partial D_r}
\abs{\kappa_r(y)}
\,\mathrm d\mathcal H^1(y)
<
+\infty,
\end{equation}
where $\kappa_r$ denotes the signed curvature of $\partial D_r$.

Let $d_r$ be the signed distance function to $\partial D_r$, chosen
to be negative inside $D_r$ and positive outside $D_r$. Since
$\partial D_r$ is of class $C^2$, for every fixed $r>0$ there exists
$\epsilon_r>0$ such that $d_r$ is of class $C^2$ in the tubular
neighborhood $U_r:=\{x\in\Omega:\abs{d_r(x)}<\epsilon_r\}$. For each fixed $r>0$, choose $\varrho_r\in\mathcal S_{\epsilon_r}$, where
$\mathcal S_{\epsilon_r}$ is defined in \eqref{set_S_GGV}. Set $s_r:=-\varrho_r'(0)\geq 1/(2\epsilon_r)$. We define a smooth approximation $g_r$ of $f_r^2$ by
\begin{equation}
\label{eq:g_piecewise_GGV}
g_r(x)
=
\begin{cases}
h,
&d_r(x)\leq-\epsilon_r,\\[1mm]
h\varrho_r(d_r(x)),
&\abs{d_r(x)}<\epsilon_r,\\[1mm]
0,
&d_r(x)\geq\epsilon_r.
\end{cases}
\end{equation}
We denote by $\mathcal B(f^2)$ the collection of all admissible
families $\{g_r\}_{0<r<r_0}$, generated by \eqref{eq:g_piecewise_GGV} such that $\epsilon_r\to0^+$ as $r\to0^+$.

We now establish the corresponding edge-contrast preservation results for piecewise $C^2$ boundaries. The proofs are based on the smooth boundary results and the above approximations.

\begin{lemma}
\label{lem:generalized_edge_regularizer_GGV_pw}
Let $f^2\in\mathcal C_{\mathrm{pw}}$. For every admissible family
$\{g_r\}_{0<r<r_0}\in\mathcal B(f^2)$ satisfying
$\epsilon_r\to0$ as $r\to0^+$, we have
\begin{equation}\label{GGV_piecewise_regularizer_limit}
\mathcal R(f^2)
:=
\lim_{r\to0^+}\mathcal R(g_r)
=
\sqrt2\pi L.
\end{equation}
In particular, the limiting Gauss-map variation is independent of the
object intensity $h$ and the scaling parameter $\alpha$.
\end{lemma}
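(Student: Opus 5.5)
For each fixed $r$, the boundary $\partial D_r$ is of class $C^2$, and $g_r$ is exactly the smooth normal approximation \eqref{eq:g_GGV} of $f_r^2=h\chi_{D_r}$ with $\epsilon=\epsilon_r$ and $\varrho=\varrho_r$. We may need to shrink $\epsilon_r$ so that $1-\kappa_r t>0$ on the layer; this is compatible with $\epsilon_r\to0$. Lemma~\ref{lem:local_representation_GGV} then gives
\[
\mathcal R(g_r)=\sqrt2\int_{\partial D_r}\int_{-\epsilon_r}^{\epsilon_r}\Bigl[\abs{\theta_r'(t)}^2+\Bigl(\frac{\kappa_r(y)\sin\theta_r(t)}{1-\kappa_r(y)t}\Bigr)^2\Bigr]^{1/2}(1-\kappa_r(y)t)\,\mathrm dt\,\mathrm d\mathcal H^1(y),
\]
where $\theta_r(t):=\arctan(\alpha h\varrho_r'(t))$. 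The plan is to repeat the two-sided estimate in the proof of Lemma~\ref{lem:generalized_edge_regularizer_GGV}, keeping track of the $r$-dependence of each constant.

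\emph{Lower bound.} Drop the curvature term and use the evenness of $\abs{\theta_r'}$, which kills the $t\kappa_r$ contribution. This gives
\[
\mathcal R(g_r)\ge \sqrt2 L_r\int_{-\epsilon_r}^{\epsilon_r}\abs{\theta_r'}\,\mathrm dt=2\sqrt2 L_r\arctan(\alpha h s_r).
\]
Since $s_r\ge 1/(2\epsilon_r)\to\infty$ and $L_r\to L$, the right-hand side tends to $\sqrt2\pi L$.

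\emph{Upper bound.} Use $\sqrt{a^2+b^2}\le a+b$ together with $\abs{\sin\theta_r}\le1$. This gives
\[
\mathcal R(g_r)\le 2\sqrt2 L_r\arctan(\alpha h s_r)+2\sqrt2\,\epsilon_r\int_{\partial D_r}\abs{\kappa_r}\,\mathrm d\mathcal H^1.
\]
The first term converges to $\sqrt2\pi L$. The second term tends to $0$ because of the uniform total curvature bound \eqref{uniform_total_curvature_GGV} and $\epsilon_r\to0$.

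The main obstacle is exactly this second term. For a single smooth boundary, $\int\abs{\kappa}$ is a fixed finite number. Here the curvature of $\partial D_r$ blows up near the corners, roughly like $1/r$ on arcs of length $\sim r$, so a uniform bound is needed. That bound is supplied by hypothesis \eqref{uniform_total_curvature_GGV}, which is why it was assumed; its role is precisely to make the curvature contribution $O(\epsilon_r)$. A related subtlety is that $\epsilon_r$ must be small relative to the local radius of curvature, so that the tubular coordinates are valid and $1-\kappa_r t>0$. This is ensured by choosing $\epsilon_r$ within the tubular-neighborhood width for each fixed $r$, as in the definition of $\mathcal B(f^2)$.

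Combining the two bounds yields $\lim_{r\to0^+}\mathcal R(g_r)=\sqrt2\pi L$, and the limit is visibly independent of $h$ and $\alpha$.
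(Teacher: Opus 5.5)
Your proposal is correct and is essentially the paper's proof: apply the two-sided bound $2\sqrt2 L_r\arctan(\alpha h s_r)\le\mathcal R(g_r)\le 2\sqrt2 L_r\arctan(\alpha h s_r)+2\sqrt2\epsilon_r\int_{\partial D_r}\abs{\kappa_r}\,\mathrm d\mathcal H^1$ from the smooth-boundary lemma to each $\partial D_r$. Then conclude via $s_r\ge 1/(2\epsilon_r)\to\infty$, $L_r\to L$, and the uniform total-curvature bound \eqref{uniform_total_curvature_GGV}. The only quibble is your remark about shrinking $\epsilon_r$, since you cannot change $\epsilon_r$ for a given admissible family; as you note later, positivity of $1-\kappa_r t$ is already guaranteed because $\epsilon_r$ lies within the tubular-neighborhood width in the definition of $\mathcal B(f^2)$.
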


\begin{proof}
For each fixed $r>0$, the boundary $\partial D_r$ is of class $C^2$.
Applying the estimates obtained in the proof of
Lemma~\ref{lem:generalized_edge_regularizer_GGV}, with
$D$, $L$, $\epsilon$, $\kappa$, and $s$ replaced by
$D_r$, $L_r$, $\epsilon_r$, $\kappa_r$, and $s_r$, respectively,
gives
\begin{align*}
2\sqrt2 L_r\arctan(\alpha hs_r)
&\leq
\mathcal R(g_r)
\notag\\
&\leq
2\sqrt2 L_r\arctan(\alpha hs_r)
+
2\sqrt2\epsilon_r
\int_{\partial D_r}
\abs{\kappa_r(y)}
\mathrm d\mathcal H^1(y).
\end{align*}
The result follows from $s_r\geq\frac{1}{2\epsilon_r}\to+\infty$ as $r\to0^+$, and \eqref{uniform_total_curvature_GGV}.
\end{proof}

We can now establish the edge-contrast preservation result for
piecewise $C^2$ boundaries. For convenience in the subsequent analysis, we define
\[
E_f(g)
:=
\lambda\mathcal R(g)
+
\frac{1}{p}
\int_\Omega |g-f|^p\mathrm dx.
\]

\begin{theorem}
\label{thm:generalized_edge_GGV_pw}
Let $f^2\in\mathcal C_{\mathrm{pw}}$ and
$p\in[1,+\infty)$. For the constant $c_1$ in Theorem~\ref{thm:generalized_edge_GGV} whenever $0<\lambda<c_1$,
\begin{equation}
\label{GGV_piecewise_edge_minimization}
E_{f^2}(f^2)
=
\inf_{\{g_r\}_{0<r<r_0}\in\mathcal B(f^2)}
\liminf_{r\to0^+}
E_{f^2}(g_r),
\end{equation}
where $E_{f^2}(f^2)$ is understood in the smooth-approximation sense
of Lemma~\ref{lem:generalized_edge_regularizer_GGV_pw}.
\end{theorem}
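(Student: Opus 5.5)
We argue exactly as in Theorem~\ref{thm:generalized_edge_GGV}, now comparing $g_r$ with $f^2$ instead of $f_r^2$. By Lemma~\ref{lem:generalized_edge_regularizer_GGV_pw}, $E_{f^2}(f^2)=\sqrt2\pi\lambda L$. We prove two inequalities: every admissible family has $\liminf_{r\to0^+}E_{f^2}(g_r)\geq\sqrt2\pi\lambda L$, and some admissible family attains this value in the limit.

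\emph{Lower bound.} Fix $\{g_r\}\in\mathcal B(f^2)$ and split the fidelity term through $f_r^2$. Since $f_r^2=f^2$ off $D_r\Delta D$, $\abs{g_r-f^2}=\abs{g_r-f_r^2}$ outside the symmetric difference, and there $\abs{g_r-f^2}\leq h$. Hence
\[
\frac1p\int_\Omega\abs{g_r-f^2}^p\mathrm dx\geq\frac1p\int_\Omega\abs{g_r-f_r^2}^p\mathrm dx-\frac{h^p}{p}M\pi r^2 .
\]
Lemma~\ref{lem:generalized_edge_fidelity_GGV}, applied to $D_r$, $L_r$, $s_r$, bounds the first term below by $L_rh^p/(2^pp(p+1)s_r)$. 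The lower bound in the proof of Lemma~\ref{lem:generalized_edge_regularizer_GGV_pw} gives $\lambda\mathcal R(g_r)\geq 2\sqrt2\lambda L_r\arctan(\alpha hs_r)$. Therefore
\[
E_{f^2}(g_r)\geq\eta_r(s_r)-\frac{h^p}{p}M\pi r^2,
\]
where $\eta_r$ is the function $\eta$ of Theorem~\ref{thm:generalized_edge_GGV} with $L$ replaced by $L_r$. The derivative computation \eqref{GGV_eta_derivative} has $L_r$ as a common factor, so the threshold $c_1$ does not depend on $L_r$. For $\lambda<c_1$, $\eta_r$ is strictly decreasing, so $\eta_r(s_r)\geq\lim_{s\to\infty}\eta_r(s)=\sqrt2\pi\lambda L_r$. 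Letting $r\to0^+$ and using $L_r\to L$ gives $\liminf_{r\to0^+} E_{f^2}(g_r)\geq\sqrt2\pi\lambda L$. Taking the infimum over families proves ``$\geq$'' in \eqref{GGV_piecewise_edge_minimization}.

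\emph{Upper bound.} Take any admissible family with $\epsilon_r\to0$. Lemma~\ref{lem:generalized_edge_regularizer_GGV_pw} gives $\mathcal R(g_r)\to\sqrt2\pi L$. For the fidelity term we use $\abs{g_r-f^2}\leq\abs{g_r-f_r^2}+\abs{f_r^2-f^2}$, with both terms bounded by $h$. The first term is supported in the tube $\{\abs{d_r}<\epsilon_r\}$, whose area is $2\epsilon_rL_r$ by the tubular formula \eqref{dx_GGV}. The second is supported in $D_r\Delta D$, of area at most $M\pi r^2$. Thus
\[
\frac1p\int_\Omega\abs{g_r-f^2}^p\mathrm dx\leq\frac{2^{p-1}h^p}{p}\bigl(2\epsilon_rL_r+M\pi r^2\bigr)\longrightarrow0,
\]
so $E_{f^2}(g_r)\to E_{f^2}(f^2)$ and the infimum is at most $E_{f^2}(f^2)$.

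\emph{Main obstacle.} The key point is that the contrast estimate stays uniform in $r$. This holds because the monotonicity threshold $c_1$ does not depend on the boundary length, and because the corner error $M\pi r^2$ and the length defect $L_r-L$ both vanish only in the limit. This is why the statement uses $\liminf$ rather than a pointwise inequality $E_{f^2}(g_r)\geq E_{f^2}(f^2)$. A further, minor, point is that Lemma~\ref{lem:generalized_edge_fidelity_GGV} must be checked to apply to $D_r$ for each fixed $r$, including that $[-1/(2s_r),1/(2s_r)]$ lies inside the tube; this follows from $s_r\geq1/(2\epsilon_r)$.
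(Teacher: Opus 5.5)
Your proof is correct and follows the paper's route. The lower bound comes from the smooth-boundary estimate applied to $D_r$, whose monotonicity threshold $c_1$ does not depend on $L_r$, plus a vanishing fidelity discrepancy; the upper bound comes from the $2^{p-1}$ splitting into the tube error $2\epsilon_rL_r$ and the corner error $M\pi r^2$. The differences are cosmetic: you re-derive $\eta_r$ and bound the discrepancy one-sidedly by $\frac{h^p}{p}M\pi r^2$, where the paper cites Theorem~\ref{thm:generalized_edge_GGV} for $f_r^2$ and the reverse triangle inequality. Also, since every family in $\mathcal B(f^2)$ has $\epsilon_r\to0$, your upper-bound computation already gives $E_{f^2}(g_r)\to E_{f^2}(f^2)$ for every admissible family, so the lower-bound half and the restriction $\lambda<c_1$ are redundant.
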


\begin{proof}
Let $\{g_r\}_{0<r<r_0}\in\mathcal B(f^2)$ be arbitrary. For every fixed $r>0$, we have $f_r^2\in\mathcal C$ and $g_r\in\mathcal A(f_r^2)$. Applying Theorem~\ref{thm:generalized_edge_GGV} to $f_r^2$ yields
\begin{equation}
\label{GGV_piecewise_smooth_energy_bound}
E_{f_r^2}(g_r)
=
\lambda\mathcal R(g_r)
+
\frac1p
\int_\Omega
\abs{g_r-f_r^2}^p\mathrm dx
\geq
E_{f_r^2}(f_r^2)
=
\sqrt2\pi\lambda L_r.
\end{equation}
Using the reverse triangle inequality, we have
\[
\frac1p
\int_\Omega
\abs{g_r-f^2}^p\mathrm dx
-
\frac1p
\int_\Omega
\abs{g_r-f_r^2}^p\mathrm dx
\longrightarrow0.
\]
It follows that $E_{f^2}(g_r)-E_{f_r^2}(g_r)\to0$. Using \eqref{GGV_piecewise_smooth_energy_bound} and
$L_r\to L$, we obtain $\liminf_{r\to0^+}E_{f^2}(g_r)\geq\sqrt2\pi\lambda L$. Taking the infimum over all admissible families and using Lemma~\ref{lem:generalized_edge_regularizer_GGV_pw} gives
\begin{equation}
\label{GGV_piecewise_energy_lower_bound}
\inf_{\{g_r\}_{0<r<r_0}\in\mathcal B(f^2)}
\liminf_{r\to0^+}E_{f^2}(g_r)
\geq
\sqrt2\pi\lambda L
=E_{f^2}(f^2).
\end{equation}
By Lemma~\ref{lem:generalized_edge_regularizer_GGV_pw}, $\mathcal R(g_r)\to\sqrt2\pi L$. Furthermore,
\[
\abs{g_r-f^2}^p
\leq
2^{p-1}
\left(
\abs{g_r-f_r^2}^p
+
\abs{f_r^2-f^2}^p
\right).
\]
The functions $g_r$ and $f_r^2$ differ only in the transition layer
$\{\abs{d_r}<\epsilon_r\}$. Therefore, the tubular-coordinate formula
gives
\[
\begin{aligned}
\int_\Omega
\abs{g_r-f_r^2}^p\mathrm dx
&\leq
h^p\abs{U_r}\\
&=h^p
\int_{\partial D_r}
\int_{-\epsilon_r}^{\epsilon_r}
\bigl(1-\kappa_r(y)t\bigr)
\mathrm dt\mathrm d\mathcal H^1(y)\\
&=
2\epsilon_rL_rh^p.
\end{aligned}
\]
And,
\[
\int_\Omega
\abs{f_r^2-f^2}^p\mathrm dx
=
h^p\abs{D_r\mathbin{\Delta}D}
\leq
M\pi r^2h^p.
\]
Consequently,
\[
\int_\Omega
\abs{g_r-f^2}^p\mathrm dx
\leq
2^{p-1}h^p
\left(
2\epsilon_rL_r+M\pi r^2
\right)
\longrightarrow0.
\]
Hence, $E_{f^2}(g_r)\to\sqrt2\pi\lambda L$. It follows that
\begin{equation}
\label{GGV_piecewise_energy_upper_bound}
\inf_{\{g_r\}_{0<r<r_0}\in\mathcal B(f^2)}
\liminf_{r\to0^+}E_{f^2}(g_r)
\leq
\sqrt2\pi\lambda L
=E_{f^2}(f^2).
\end{equation}
Combining
\eqref{GGV_piecewise_energy_lower_bound} and
\eqref{GGV_piecewise_energy_upper_bound} proves
\eqref{GGV_piecewise_edge_minimization}.
\end{proof}

\section{Numerical Algorithm}\label{sec:numerical_algorithm}

\subsection{Discrete formulation}\label{subsec:discrete_formulation}

We first introduce the discrete setting used in this section.
Let the rectangular image domain be partitioned into an
\(N\times N\) Cartesian grid, $\Omega:=\{(i,j):1\leq i,j \leq N\}$, with spatial mesh sizes $\delta=1$. We introduce the discrete spaces
\[
\mathbb X:=\mathbb R^{N\times N},
\quad
\mathbb Y:=\mathbb X^2,
\quad
\mathbb Z:=\mathbb X^3,
\quad
\mathbb T:=\mathbb X^{3\times2},
\]
for scalar images, two-dimensional vector fields,
three-dimensional vector fields, and \(3\times2\) matrix fields,
respectively. The inner products on these spaces are defined componentwise using
\[
\langle U,V\rangle
:=
\sum_{i,j=1}^{N}
\langle U_{i,j},V_{i,j}\rangle,
\]
where the pointwise inner product is understood as the scalar,
Euclidean, or Frobenius inner product, as appropriate. For $v\in\mathbb X^d$ and $\boldsymbol A\in\mathbb T$, we define
\[
\|v\|_p
:=
\left(
\sum_{i,j=1}^{N}
|v_{i,j}|^p
\right)^{1/p},
\quad 1\leq p<\infty,
\]
and
\[
\|\boldsymbol A\|_{2,1}
:=
\sum_{i,j=1}^{N}
\|\boldsymbol A_{i,j}\|_{\mathrm F},
\quad
\|\boldsymbol A\|_{2,\infty}
:=
\max_{1\leq i,j\leq N}
\|\boldsymbol A_{i,j}\|_{\mathrm F}.
\]

For \(u\in\mathbb X\), the forward and backward differences subject
to periodic boundary conditions are defined as follows:
\begin{align*}
(D_1^+u)_{i,j}
&=
\begin{cases}
u_{i+1,j}-u_{i,j}, & 1\leq i<N,\\
u_{1,j}-u_{N,j},   & i=N,
\end{cases}
\qquad
(D_2^+u)_{i,j}
=
\begin{cases}
u_{i,j+1}-u_{i,j}, & 1\leq j<N,\\
u_{i,1}-u_{i,N},   & j=N,
\end{cases}\\
(D_1^-u)_{i,j}
&=
\begin{cases}
u_{i,j}-u_{i-1,j}, & 2\leq i\leq N,\\
u_{1,j}-u_{N,j},   & i=1,
\end{cases}
\qquad
(D_2^-u)_{i,j}
=
\begin{cases}
u_{i,j}-u_{i,j-1}, & 2\leq j\leq N,\\
u_{i,1}-u_{i,N},   & j=1,
\end{cases}
\end{align*}
for \(1\leq i,j\leq N\).
The discrete gradient $\nabla:\mathbb X\longrightarrow\mathbb Y$, is defined by
\[
(\nabla u)_{i,j}
:=
\begin{pmatrix}
(D_1^+u)_{i,j}\\
(D_2^+u)_{i,j}
\end{pmatrix}.
\]
For a three-dimensional vector field
$\boldsymbol n=(n_1,n_2,n_3)^\top\in\mathbb Z$,
the spatial difference operator
$\bar\nabla:\mathbb Z\longrightarrow\mathbb T$
is defined componentwise by
\[
(\bar\nabla\boldsymbol n)_{i,j}
:=
\begin{pmatrix}
(D_1^+n_1)_{i,j} & (D_2^+n_1)_{i,j}\\
(D_1^+n_2)_{i,j} & (D_2^+n_2)_{i,j}\\
(D_1^+n_3)_{i,j} & (D_2^+n_3)_{i,j}
\end{pmatrix}.
\]
To describe the corresponding adjoints,
\[
(D_1^+)^\ast=-D_1^-,
\quad
(D_2^+)^\ast=-D_2^-.
\]
Consequently, for
\(v=(v^1,v^2)^\top\in\mathbb Y\),
\[
\nabla^\ast v
=
-D_1^-v^1-D_2^-v^2.
\]
Similarly, for \(\boldsymbol A\in\mathbb T\),
\[
\bar{\nabla}^{\ast}\boldsymbol A
=
-\begin{pmatrix}
D_1^-A^{11}+D_2^-A^{12}\\
D_1^-A^{21}+D_2^-A^{22}\\
D_1^-A^{31}+D_2^-A^{32}
\end{pmatrix}.
\]

\subsection{An Anderson-accelerated majorization--minimization method}\label{sec:aos_ggv}

We develop an Anderson-accelerated majorization--minimization (AAMM) algorithm based on a bilinear reformulation of the GMV denoising model
\begin{equation}
\label{eq:aos_ggv_model}
\min_u
E(u)
:=
\frac1p\|u-f\|_p^p
+
\sqrt{2}\lambda
\|\bar\nabla\mathcal M(\nabla u)\|_{2,1},
\end{equation}
where $p\in\{1,2\}$, $\lambda,\alpha>0$, and
\[
\mathcal M(\nabla u)
=
\frac{
(-\alpha D_1^+u,-\alpha D_2^+u,1)^\top
}{
\sqrt{1+\alpha^2|\nabla u|_2^2}
}.
\]
Introduce a scalar field $q\in\mathbb X$ and a three dimensional vector field $\boldsymbol n\in\mathbb Z$ through
\[
q=\sqrt{1+\alpha^2|\nabla u|^2},
\qquad
\boldsymbol n=\mathcal M(\nabla u).
\]
These variables satisfy the bilinear identity
\[
q\odot\boldsymbol n
=
\begin{pmatrix}
-\alpha\nabla u\\
1
\end{pmatrix},
\]
where $\odot$ denotes pointwise multiplication of a scalar field and a vector field. Define the closed upper hemisphere by
\[
\mathbb S_+^2
:=
\left\{
\boldsymbol a=(a_1,a_2,a_3)^\top\in\mathbb R^3:
|\boldsymbol a|_2=1,\ a_3\ge0
\right\}.
\]
Consequently, for $1\le i,j\le N$, \eqref{eq:aos_ggv_model} is equivalent to the constrained problem
\begin{equation}\label{eq:bilinear_constrained_model}
\begin{aligned}
\min_{u,\boldsymbol n,q}\quad&
\frac1p\|u-f\|_p^p
+\sqrt2\lambda\|\bar\nabla\boldsymbol n\|_{2,1},\\
\text{s.t.}\quad&
q\odot\boldsymbol n
=
\begin{pmatrix}
-\alpha\nabla u\\
1
\end{pmatrix},
\, \boldsymbol n_{i,j}\in\mathbb S_+^2,
\, q_{i,j}\ge1.
\end{aligned}
\end{equation}

To construct an alternating algorithm, we smooth
the regularization term and penalize the bilinear
constraint. For fixed $\varepsilon>0$ and $\rho>0$,
consider
\begin{equation}\label{eq:bilinear_penalty_model}
\begin{aligned}
\min_{\substack{
u, \boldsymbol n, q \\
\boldsymbol n_{i,j}\in\mathbb S_+^2, q_{i,j}\ge1}}
\quad
E_{\rho,\varepsilon}(u,\boldsymbol n,q)
:=&
\frac1p\|u-f\|_p^p
+\sqrt2\lambda
\sum_{i,j}
\sqrt{
\|(\bar\nabla\boldsymbol n)_{i,j}\|_{\mathrm F}^2
+\varepsilon
}\\
&+
\frac{\rho}{2}
\left\|
q\odot\boldsymbol n
-
\begin{pmatrix}
-\alpha\nabla u\\
1
\end{pmatrix}
\right\|_2^2.
\end{aligned}
\end{equation}
The constrained formulation \eqref{eq:bilinear_constrained_model} is exactly equivalent to the original discrete model. For fixed $\rho$ and $\varepsilon$, \eqref{eq:bilinear_penalty_model} is a penalized,
smoothed approximation. The $u$-update depends on
the choice of $p$, whereas the $\boldsymbol n$-
and $q$-updates are common to both cases.

\subsection{The case of $p=2$}\label{subsec:aos_ggv_p2}

\textbf{u-subproblem:} For $p=2$, fixing $\boldsymbol n^k$ and $q^k$
gives the $u$-subproblem
\begin{equation}\label{eq:bilinear_u_p2_subproblem}
u^{k+1}
=
\operatorname*{argmin}_{u}
\left\{
\frac12\|u-f\|_2^2
+
\frac{\rho}{2}
\left\|
\alpha\nabla u
+
q^k\odot
\begin{pmatrix}
n_1^k\\
n_2^k
\end{pmatrix}
\right\|_2^2
\right\}.
\end{equation}
The term involving $q^kn_3^k-1$ is independent
of $u$ and has therefore been omitted.
The objective in
\eqref{eq:bilinear_u_p2_subproblem}
is strongly convex, and its unique minimizer satisfies
\begin{equation}\label{eq:bilinear_u_p2_system}
\left(
I+\rho\alpha^2\nabla^\ast\nabla
\right)u^{k+1}
=
f-\rho\alpha\nabla^\ast
\left(
q^k\odot
\begin{pmatrix}
n_1^k\\
n_2^k
\end{pmatrix}
\right).
\end{equation}
Under periodic boundary conditions, this linear system can be solved using the fast Fourier transform (FFT).

\textbf{$\mathbf n$-subproblem:} For the $\mathbf n$-subproblem, define
\[
\phi(\mathbf n)
=
\sqrt{2}\lambda
\sum_{i,j}
\sqrt{
\|(\bar\nabla\mathbf n)_{i,j}\|_{\mathrm F}^2
+\varepsilon
},
\quad
h^k(\mathbf n)
=
\frac{\rho}{2}
\sum_{i,j}
\left|
q_{i,j}^k\mathbf n_{i,j}
-
\begin{pmatrix}
-\alpha(\nabla u^{k+1})_{i,j}\\
1
\end{pmatrix}
\right|^2.
\]
After omitting terms independent of $\mathbf n$,
the subproblem is
\begin{equation}
\label{n-subproblem}
\min_{\mathbf n}
\left\{
\phi(\mathbf n)+h^k(\mathbf n)
+
\sum_{i,j}
\mathbb I_{\mathbb S_+^2}(\mathbf n_{i,j})
\right\}.
\end{equation}
The exact expansion of $h^k$ is
\begin{equation}
\label{second_order_h}
h^k(\mathbf n)
=
h^k(\mathbf n^k)
+
\left\langle
\nabla_{\mathbf n}h^k(\mathbf n^k),
\mathbf n-\mathbf n^k
\right\rangle
+
\frac{\rho}{2}
\sum_{i,j}
(q_{i,j}^k)^2
|\mathbf n_{i,j}-\mathbf n_{i,j}^k|^2.
\end{equation}
To construct a quadratic majorizer of $\phi$, define
\[
w_{i,j}^k
=
\frac{\sqrt{2}\lambda}{
\sqrt{
\|(\bar\nabla\mathbf n^k)_{i,j}\|_{\mathrm F}^2
+\varepsilon
}
},
\qquad
\ell_{i,j}^k=2w_{i,j}^k+w_{i-1,j}^k+w_{i,j-1}^k.
\]
By the concavity of the square-root function, we have
\begin{equation}\label{second_order_phi}
\begin{aligned}
\phi(\mathbf n)
\leq&
\phi(\mathbf n^k)
+
\frac12\sum_{i,j}w_{i,j}^k
\left(
\|(\bar\nabla\mathbf n)_{i,j}\|_{\mathrm F}^2
-
\|(\bar\nabla\mathbf n^k)_{i,j}\|_{\mathrm F}^2
\right)\\
=&
\phi(\mathbf n^k)
+
\left\langle
\nabla_{\mathbf n}\phi(\mathbf n^k),
\mathbf n-\mathbf n^k
\right\rangle
+
\frac12\sum_{i,j}w_{i,j}^k
\|(\bar\nabla(\mathbf n-\mathbf n^k))_{i,j}\|_{\mathrm F}^2\\
\leq&
\phi(\mathbf n^k)
+
\left\langle
\nabla_{\mathbf n}\phi(\mathbf n^k),
\mathbf n-\mathbf n^k
\right\rangle
+
\sum_{i,j}\ell_{i,j}^k
|\mathbf n_{i,j}-\mathbf n_{i,j}^k|_2^2,
\end{aligned}
\end{equation}
where the last inequality is obtained by using $|\mathbf x-\mathbf y|_2^2\leq2|\mathbf x|_2^2+2|\mathbf y|_2^2$. Combining the exact expansion \eqref{second_order_h} with the upper bound \eqref{second_order_phi}, we obtain the MM surrogate
\begin{equation}
\label{Qn}
\begin{aligned}
Q(\mathbf n\mid\mathbf n^k)
:={}&
(\phi+h^k)(\mathbf n^k)
+
\left\langle
\nabla_{\mathbf n}\phi(\mathbf n^k)
+
\nabla_{\mathbf n}h^k(\mathbf n^k),
\mathbf n-\mathbf n^k
\right\rangle\\
&+
\frac12\sum_{i,j}
\left(
\rho(q_{i,j}^k)^2+2\ell_{i,j}^k
\right)
|\mathbf n_{i,j}-\mathbf n_{i,j}^k|_2^2
+
\sum_{i,j}
\mathbb I_{\mathbb S_+^2}(\mathbf n_{i,j}),
\end{aligned}
\end{equation}
which satisfies
\begin{equation}\label{majorization_and_tangency_properties}
\begin{aligned}
Q(\mathbf n\mid\mathbf n^k)
\geq&
\phi(\mathbf n)+h^k(\mathbf n)
+
\sum_{i,j}
\mathbb I_{\mathbb S_+^2}(\mathbf n_{i,j}),\\
Q(\mathbf n^k\mid\mathbf n^k)
=&
\phi(\mathbf n^k)+h^k(\mathbf n^k).
\end{aligned}
\end{equation}
For feasible $\mathbf n$ and $\mathbf n^k$, the unit length constraints imply
\[
|\mathbf n_{i,j}-\mathbf n_{i,j}^k|_2^2
=
2-2\mathbf n_{i,j}^k\cdot\mathbf n_{i,j}
=
-2\mathbf n_{i,j}^k\cdot
(\mathbf n_{i,j}-\mathbf n_{i,j}^k).
\]
Substituting this identity and the expression for
$\nabla_{\mathbf n}h^k(\mathbf n^k)$ into \eqref{Qn}
gives
\[
Q(\mathbf n\mid\mathbf n^k)
=
Q(\mathbf n^k\mid\mathbf n^k)
-
\sum_{i,j}
\mathbf b_{i,j}^k\cdot
(\mathbf n_{i,j}-\mathbf n_{i,j}^k),
\]
where
\begin{equation}
\label{eq:n_MM_coefficients}
\mathbf b_{i,j}^k
=
2\ell_{i,j}^k\mathbf n_{i,j}^k
-
\bigl[\nabla_{\mathbf n}\phi(\mathbf n^k)\bigr]_{i,j}
+
\rho q_{i,j}^k
\begin{pmatrix}
-\alpha(\nabla u^{k+1})_{i,j}\\
1
\end{pmatrix}.
\end{equation}
Using the periodic difference operators and the
definition of $\ell^k$, we obtain
\[
\begin{aligned}
&2\ell_{i,j}^k\mathbf n_{i,j}^k
-
\bigl[\nabla_{\mathbf n}\phi(\mathbf n^k)\bigr]_{i,j}\\
=&
\ell_{i,j}^k\mathbf n_{i,j}^k
+
w_{i,j}^k
\left(
\mathbf n_{i+1,j}^k+\mathbf n_{i,j+1}^k
\right)
+
w_{i-1,j}^k\mathbf n_{i-1,j}^k
+
w_{i,j-1}^k\mathbf n_{i,j-1}^k.
\end{aligned}
\]
All coefficients on the right hand side are nonnegative, and each $\boldsymbol n_{i,j}^k$ belongs to $\mathbb S_+^2$. Hence, the third component of this sum is nonnegative. Together with $q_{i,j}^k\geq1$ and $\rho>0$, this gives
\[
(\mathbf b_{i,j}^k)_3
\geq\rho q_{i,j}^k\geq\rho>0.
\]
Since the constraints are imposed pointwise,
minimizing $Q(\cdot\mid\mathbf n^k)$ reduces to
solving
\[
\operatorname*{argmin}_{\boldsymbol n}
Q(\boldsymbol n\mid\boldsymbol n^k)
=
\operatorname*{argmin}_{\substack{
\boldsymbol n\\
\boldsymbol n_{i,j}\in\mathbb S_+^2
}}
\left\{
-\sum_{i,j}
\boldsymbol b_{i,j}^k\cdot\boldsymbol n_{i,j}
\right\}.
\]
For every feasible $\mathbf n_{i,j}$, the
Cauchy--Schwarz inequality yields
\[
-\mathbf b_{i,j}^k\cdot\mathbf n_{i,j}
\geq
-|\mathbf b_{i,j}^k||\mathbf n_{i,j}|
=
-|\mathbf b_{i,j}^k|.
\]
Because $\mathbf b_{i,j}^k\neq0$, equality holds
if and only if
$\mathbf n_{i,j}=\mathbf b_{i,j}^k/|\mathbf b_{i,j}^k|$.
This vector is feasible, since its third component
is strictly positive.
Therefore, the MM surrogate has the unique minimizer
\begin{equation}
\label{eq:n_MM_update}
(\mathbf n_{\mathrm{MM}}^{k+1})_{i,j}
=
\frac{\mathbf b_{i,j}^k}{|\mathbf b_{i,j}^k|}.
\end{equation}
For $k\geq0$, define $\mathbf r^k:=\mathbf n_{\mathrm{MM}}^{k+1}-\mathbf n^k$. \eqref{majorization_and_tangency_properties} together with \eqref{eq:n_MM_update}, imply
\begin{equation}\label{eq:n_MM_descent}
\begin{aligned}
(\phi+h^k)(\mathbf n^k)
-
(\phi+h^k)(\mathbf n_{\mathrm{MM}}^{k+1})
\geq&
Q(\mathbf n^k\mid\mathbf n^k)
-
Q(\mathbf n_{\mathrm{MM}}^{k+1}\mid\mathbf n^k)\\
=&
\sum_{i,j}\mathbf b_{i,j}^k\cdot\mathbf r_{i,j}^k
=
\frac12\sum_{i,j}
|\mathbf b_{i,j}^k||\mathbf r_{i,j}^k|^2.
\end{aligned}
\end{equation}

We then apply the MM update with a depth-one Anderson acceleration.
For $k\geq1$, we evaluate the MM mapping at $\mathbf n^{k-1}$ using the current $u^{k+1}$ and $q^k$:
\[
(\mathbf n_{\mathrm H}^k)_{i,j}
=
\Pi_{\mathbb S_+^2}\Biggl(
2\ell_{i,j}^{k-1}\mathbf n_{i,j}^{k-1}
-
\bigl[
\nabla_{\mathbf n}\phi(\mathbf n^{k-1})
\bigr]_{i,j}
+
\rho q_{i,j}^k
\begin{pmatrix}
-\alpha(\nabla u^{k+1})_{i,j}\\
1
\end{pmatrix}
\Biggr).
\]
By the same positivity argument used before, the vector being projected has a strictly positive third component. We define the residual
\[
\Delta\mathbf r^k
:=
\mathbf r^k
-
\left(
\mathbf n_{\mathrm H}^k-\mathbf n^{k-1}
\right).
\]
%Given $\eta\in(0,1)$, we damp this coefficient to
%limit the extrapolation:
%\[
%c_k
%\leftarrow
%c_k\min\left\{
%1,\,
%\frac{
%\eta\|\mathbf r^k\|_2
%}{
%\max\left\{
%|c_k|
%\|\mathbf n_{\mathrm{MM}}^{k+1}
%-\mathbf n_{\mathrm H}^k\|_2,\,
%\epsilon_{\mathrm{mach}}
%\right\}
%}
%\right\},
%\]
%where $\epsilon_{\mathrm{mach}}$ denotes machine precision.
%The notation $c_k$ subsequently refers to the
%damped coefficient.
The pointwise projected Anderson candidate is defined by
\begin{equation}
\label{eq:n_AA_projection}
(\mathbf n_{\mathrm{AA}}^{k+1})_{i,j}
=
\Pi_{\mathbb S_+^2}
\left(
\left(
\mathbf n_{\mathrm{MM}}^{k+1}
-
c_k
\left(
\mathbf n_{\mathrm{MM}}^{k+1}
-\mathbf n_{\mathrm H}^k
\right)
\right)_{i,j}
\right),
\end{equation}
where
\[
c_k
=
\begin{cases}
\dfrac{
\langle\Delta\boldsymbol r^k,\boldsymbol r^k\rangle
}{
\|\Delta\boldsymbol r^k\|_2^2
},
& \|\Delta\boldsymbol r^k\|_2>0,\\[6pt]
0,
& \|\Delta\boldsymbol r^k\|_2=0.
\end{cases}
\]
is the extrapolation coefficient. The projected point is accepted only if
\begin{equation}
\label{eq:n_AA_acceptance}
\begin{aligned}
\|\mathbf n_{\mathrm{AA}}^{k+1}
-\mathbf n_{\mathrm{MM}}^{k+1}\|_2^2
&\leq
\eta^2\|\mathbf r^k\|_2^2,\\
(\phi+h^k)(\mathbf n^k)
-
(\phi+h^k)(\mathbf n_{\mathrm{AA}}^{k+1})
&\geq
\frac12
\sum_{i,j}
|\mathbf b_{i,j}^k|
|\mathbf r_{i,j}^k|^2.
\end{aligned}
\end{equation}
The first condition controls the deviation of the Anderson
candidate from the MM update and ensures that the actual step
is comparable. Together with the energy condition, it yields a sufficient descent estimate proportional to the squared norm of the
accepted update. If \eqref{eq:n_AA_acceptance} holds, we set $\mathbf n^{k+1}=\mathbf n_{\mathrm{AA}}^{k+1}$;
otherwise, we set $\mathbf n^{k+1}=\mathbf n_{\mathrm{MM}}^{k+1}$.
At $k=0$, the MM candidate is used directly.

\textbf{q-subproblem:} the $ q$-subproblem is
\[
q^{k+1}
=
\operatorname*{argmin}_{q_{i,j}\ge1}
\frac{\rho}{2}
\left\|
q\odot\boldsymbol n^{k+1}
-
\begin{pmatrix}
-\alpha\nabla u^{k+1}\\
1
\end{pmatrix}
\right\|_2^2.
\]
This problem is separable over the pixels. Its unique solution is
\begin{equation}\label{eq:bilinear_q_update}
q_{i,j}^{k+1}
=
\max\left\{
1,\,
\boldsymbol n_{i,j}^{k+1}\cdot
\begin{pmatrix}
-\alpha(\nabla u^{k+1})_{i,j}\\
1
\end{pmatrix}
\right\}.
\end{equation}

\subsection{The case of $p=1$}
\label{subsec:bilinear_ggv_p1}

For $p=1$, fixing $\boldsymbol n^k$ and $q^k$
and omitting terms independent of $u$ gives
\begin{equation}
\label{eq:bilinear_u_p1_subproblem}
\min_u
\left\{
\|u-f\|_1
+
\frac{\rho}{2}
\left\|
\alpha\nabla u
+
q^k\odot
\begin{pmatrix}
n_1^k\\
n_2^k
\end{pmatrix}
\right\|_2^2
\right\}.
\end{equation}
We perform one MM update by majorizing the quadratic
term while retaining the fidelity term.
The Hessian of the quadratic term is
$\rho\alpha^2\nabla^\ast\nabla$.
Since $\|\nabla\|^2\leq8$ for the periodic differences
with unit grid spacing, we choose a fixed parameter $\gamma_u>8\rho\alpha^2$. Majorizing the quadratic term at $u^k$ and omitting terms independent of $u$ yields the MM update
\begin{equation}
\label{eq:bilinear_u_p1_mm}
\begin{aligned}
u^{k+1}
&=
\operatorname*{argmin}_u
\Biggl\{
\|u-f\|_1
+
\rho\alpha
\left\langle
\nabla^\ast
\left(
\alpha\nabla u^k
+
q^k\odot
\begin{pmatrix}
n_1^k\\
n_2^k
\end{pmatrix}
\right),
u-u^k
\right\rangle\\
&\qquad\qquad\qquad+
\frac{\gamma_u}{2}\|u-u^k\|_2^2
\Biggr\}\\
&=
\operatorname*{argmin}_u
\left\{
\|u-f\|_1
+
\frac{\gamma_u}{2}
\|u-f-\bar v^k\|_2^2
\right\},
\end{aligned}
\end{equation}
where $\bar v^k:=u^k-f-\frac{\rho\alpha}{\gamma_u}
\nabla^\ast\left(\alpha\nabla u^k+q^k\odot
\begin{pmatrix}n_1^k\\n_2^k\end{pmatrix}\right)$. The unique minimizer is given by the pointwise shrinkage formula
\begin{equation}
\label{eq:bilinear_u_p1_soft}
u_{i,j}^{k+1}
=
f_{i,j}
+
\max\left\{
0,\,
1-\frac{1}{\gamma_u|\bar v_{i,j}^k|}
\right\}
\bar v_{i,j}^k.
\end{equation}
The majorization property and the minimizing property
of $u^{k+1}$ imply
\begin{equation}
\label{eq:bilinear_u_p1_descent}
\begin{aligned}
&
E_{\rho,\varepsilon}(u^k,\boldsymbol n^k,q^k)
-
E_{\rho,\varepsilon}(u^{k+1},\boldsymbol n^k,q^k)\\
&\qquad\geq
\frac{\gamma_u-8\rho\alpha^2}{2}
\|u^{k+1}-u^k\|_2^2.
\end{aligned}
\end{equation}
Thus a single MM update provides sufficient decrease
without requiring an exact solution of
\eqref{eq:bilinear_u_p1_subproblem}. After updating $u$, we apply the same Anderson--MM update for $\boldsymbol n$ as in the case $p=2$. The variable $q$ is then updated by \eqref{eq:bilinear_q_update}.

We summarize the proposed algorithm for solving \eqref{eq:bilinear_penalty_model} in Algorithm~\ref{alg:bilinear_ggv}.

\begin{algorithm}[H]
\caption{Anderson-accelerated majorization--minimization (AAMM) for GMV denoising model}
\label{alg:bilinear_ggv}
\SetAlgoLined
\KwIn{
Noisy image $f$, parameters $\lambda,\alpha,\rho,\varepsilon>0$, $p\in\{1,2\}$, $\eta\in(0,1)$, tolerance $\mathrm{Tol}>0$, and $\gamma_u>8\rho\alpha^2$ for $p=1$
}
\KwOut{$u^k$}

\textbf{Initialization:}
$u^0$, $\boldsymbol n^0$, $q^0$

Set $k=0$

\While{the termination condition is not satisfied}{

    \eIf{$p=2$}{
        Compute $u^{k+1}$ by \eqref{eq:bilinear_u_p2_system}
        using FFT}{
        Compute $u^{k+1}$ by
        \eqref{eq:bilinear_u_p1_soft}}

    Compute $\boldsymbol n_{\mathrm{MM}}^{k+1}$ by \eqref{eq:n_MM_update}, and set
    $\boldsymbol n^{k+1}
    =\boldsymbol n_{\mathrm{MM}}^{k+1}$

    \If{$k\geq1$}{

        Compute $\boldsymbol n_{\mathrm{AA}}^{k+1}$ by \eqref{eq:n_AA_projection}

        \If{both conditions in \eqref{eq:n_AA_acceptance} hold}{
            Set
            $\boldsymbol n^{k+1}
            =\boldsymbol n_{\mathrm{AA}}^{k+1}$
        }
    }

    Compute $q^{k+1}$ by
    \eqref{eq:bilinear_q_update}

    $k\gets k+1$
}
\Return{$u^k$}
\end{algorithm}

\section{Convergence analysis}
\label{sec:convergence}

We first establish convergence of the proposed algorithm for
$p=2$ in \eqref{eq:bilinear_penalty_model}. To incorporate the constraints into the convergence analysis, define
\begin{equation}\label{eq:convergence_constrained_energy}
E_{\rho,\varepsilon,\mathbb I}(u,\boldsymbol n,q)
:=
E_{\rho,\varepsilon}(u,\boldsymbol n,q)
+
\sum_{i,j}
\mathbb I_{\mathbb S_+^2}(\boldsymbol n_{i,j})
+
\sum_{i,j}
\mathbb I_{[1,\infty)}(q_{i,j}).
\end{equation}

\begin{lemma}\label{lem:energy-descent}
For every $k\geq0$, the iterates satisfy
\begin{equation}
\label{eq:convergence_sufficient_decrease}
\begin{aligned}
&
E_{\rho,\varepsilon}(u^k,\boldsymbol n^k,q^k)
-
E_{\rho,\varepsilon}
(u^{k+1},\boldsymbol n^{k+1},q^{k+1})\\
&\qquad\geq
\frac12\|u^{k+1}-u^k\|_2^2
+
\frac{\rho}{2(1+\eta)^2}
\|\boldsymbol n^{k+1}-\boldsymbol n^k\|_2^2
+
\frac{\rho}{2}\|q^{k+1}-q^k\|_2^2.
\end{aligned}
\end{equation}
Consequently, the energy sequence $\left\{E_{\rho,\varepsilon} (u^k,\boldsymbol n^k,q^k)\right\}_{k\geq0}$ is nonincreasing and converges to a finite limit.
Moreover,
\begin{equation}
\label{eq:convergence_square_summability}
\sum_{k=0}^{\infty}
\left(
\|u^{k+1}-u^k\|_2^2
+
\|\boldsymbol n^{k+1}-\boldsymbol n^k\|_2^2
+
\|q^{k+1}-q^k\|_2^2
\right)
<\infty.
\end{equation}
\end{lemma}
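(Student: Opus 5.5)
We split the one-step decrease into three block decreases, one for each of the $u$-, $\boldsymbol n$- and $q$-updates, and add them. Throughout we work with $E_{\rho,\varepsilon}$ evaluated at feasible points. All iterates are feasible: $\boldsymbol n^{k+1}_{i,j}\in\mathbb S^2_+$ because both the MM and the projected Anderson candidates lie there, and $q^{k+1}_{i,j}\ge1$ by \eqref{eq:bilinear_q_update}. Hence $E_{\rho,\varepsilon}=E_{\rho,\varepsilon,\mathbb I}$ along the sequence.

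\emph{$u$-step ($p=2$).} With $(\boldsymbol n^k,q^k)$ fixed, the map $u\mapsto E_{\rho,\varepsilon}(u,\boldsymbol n^k,q^k)$ equals the objective of \eqref{eq:bilinear_u_p2_subproblem} plus a constant. This objective is $1$-strongly convex, since its Hessian is $I+\rho\alpha^2\nabla^\ast\nabla\succeq I$. The point $u^{k+1}$ is its exact minimizer, so we get
\[
E_{\rho,\varepsilon}(u^k,\boldsymbol n^k,q^k)-E_{\rho,\varepsilon}(u^{k+1},\boldsymbol n^k,q^k)\ge\tfrac12\|u^{k+1}-u^k\|_2^2 .
\]

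\emph{$q$-step.} With $(u^{k+1},\boldsymbol n^{k+1})$ fixed, pixelwise the objective is $\frac\rho2|q\boldsymbol n-\boldsymbol c|^2$ with $|\boldsymbol n|=1$. This is $\frac\rho2(q-\boldsymbol n\cdot\boldsymbol c)^2$ plus a constant, which is $\rho$-strongly convex in $q$. Restricted to the convex set $[1,\infty)$, with exact minimizer \eqref{eq:bilinear_q_update}, the standard strong-convexity/optimality argument gives a decrease of at least $\frac\rho2\|q^{k+1}-q^k\|_2^2$, using that $q^k$ is feasible.

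\emph{$\boldsymbol n$-step (main obstacle).} The $\boldsymbol n$-dependent part of $E_{\rho,\varepsilon}(u^{k+1},\cdot,q^k)$ is $\phi+h^k$. In both branches the accepted point satisfies a decrease of at least $\frac12\sum_{i,j}|\mathbf b^k_{i,j}||\mathbf r^k_{i,j}|^2$: for the MM branch this is \eqref{eq:n_MM_descent}, and for the AA branch it is the second acceptance condition in \eqref{eq:n_AA_acceptance}. From $(\mathbf b^k_{i,j})_3\ge\rho q^k_{i,j}\ge\rho$ we get $|\mathbf b^k_{i,j}|\ge\rho$, so the decrease is at least $\frac\rho2\|\mathbf r^k\|_2^2$. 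It remains to compare $\|\mathbf r^k\|_2$ with the actual step $\|\boldsymbol n^{k+1}-\boldsymbol n^k\|_2$.
\begin{itemize}
\item In the MM branch the two quantities are equal.
\item In the AA branch we use the first acceptance condition and the triangle inequality:
\[
\|\boldsymbol n^{k+1}-\boldsymbol n^k\|_2\le\|\mathbf n_{\mathrm{AA}}^{k+1}-\mathbf n_{\mathrm{MM}}^{k+1}\|_2+\|\mathbf r^k\|_2\le(1+\eta)\|\mathbf r^k\|_2 .
\]
\end{itemize}
In both cases the decrease is at least $\frac{\rho}{2(1+\eta)^2}\|\boldsymbol n^{k+1}-\boldsymbol n^k\|_2^2$. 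The delicate points are that the MM descent must be measured with $u^{k+1}$ and $q^k$, matching the definition of $h^k$, and that $(\mathbf b^k)_3\ge\rho$ requires the previous iterates to be feasible. Both are already established above.

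Telescoping the three inequalities, each evaluated at the intermediate points $(u^{k+1},\boldsymbol n^k,q^k)$ and $(u^{k+1},\boldsymbol n^{k+1},q^k)$, gives \eqref{eq:convergence_sufficient_decrease}. Monotonicity follows immediately. For the limit, $E_{\rho,\varepsilon}\ge0$ because every term is nonnegative, so the nonincreasing sequence converges to a finite limit. Summing \eqref{eq:convergence_sufficient_decrease} over $k=0,\dots,K$ bounds the partial sums by $E_{\rho,\varepsilon}(u^0,\boldsymbol n^0,q^0)$ divided by $\min\{\frac12,\frac{\rho}{2(1+\eta)^2}\}$. Letting $K\to\infty$ yields \eqref{eq:convergence_square_summability}.
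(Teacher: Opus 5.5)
Your proposal is correct and follows essentially the same route as the paper. You add the three block decreases: the $1$-strongly convex exact $u$-solve; the $\boldsymbol n$-step, using the MM descent estimate or the Anderson acceptance test together with $|\mathbf b^k_{i,j}|\ge\rho$ and $\|\boldsymbol n^{k+1}-\boldsymbol n^k\|_2\le(1+\eta)\|\mathbf r^k\|_2$; and the $\rho$-strongly convex $q$-step. You then sum over $k$, and your explicit use of the constrained optimality condition at the feasible point $q^k\ge1$ slightly sharpens the paper's bare ``the Hessian is $\rho I$'' remark.
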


\begin{proof}
With $\boldsymbol n^k$ and $q^k$ fixed, the Hessian of the $u$-subproblem is $I+\rho\alpha^2\nabla^\ast\nabla\succeq I$. Hence the subproblem is strongly convex with modulus at least one, and its exact minimizer satisfies
\[
E_{\rho,\varepsilon}(u^k,\boldsymbol n^k,q^k)
-
E_{\rho,\varepsilon}(u^{k+1},\boldsymbol n^k,q^k)
\geq
\frac12\|u^{k+1}-u^k\|_2^2.
\]
For the $\boldsymbol n$-update, \eqref{eq:n_MM_descent} and \eqref{eq:n_AA_acceptance} imply that either accepted update satisfies
\[
(\phi+h^k)(\mathbf n^k)
-
(\phi+h^k)(\mathbf n^{k+1})
\geq
\frac12\sum_{i,j}
|\mathbf b_{i,j}^k||\mathbf r_{i,j}^k|^2,
\]
and
\[
\|\mathbf n^{k+1}-\mathbf n^k\|_2
\leq
\|\mathbf n^{k+1}
-\mathbf n_{\mathrm{MM}}^{k+1}\|_2
+
\|\mathbf r^k\|_2
\leq
(1+\eta)\|\mathbf r^k\|_2.
\]
Combining the above estimates with
$|\mathbf b_{i,j}^k|\geq\rho$ yields
the sufficient descent property
\[
(\phi+h^k)(\mathbf n^k)
-
(\phi+h^k)(\mathbf n^{k+1})
\geq
\frac{\rho}{2(1+\eta)^2}
\|\mathbf n^{k+1}-\mathbf n^k\|_2^2.
\]
Since $|\boldsymbol n_{i,j}^{k+1}|_2=1$,
the Hessian of the $q$-subproblem is $\rho I$, and
\[
E_{\rho,\varepsilon}(u^{k+1},\boldsymbol n^{k+1},q^k)
-
E_{\rho,\varepsilon}
(u^{k+1},\boldsymbol n^{k+1},q^{k+1})
\geq
\frac{\rho}{2}\|q^{k+1}-q^k\|_2^2.
\]
Adding the above inequalities proves \eqref{eq:convergence_sufficient_decrease}.
All iterates are feasible, so the same decrease
estimate holds for $E_{\rho,\varepsilon,\mathbb I}$. The energy sequence is nonincreasing and bounded
below by zero, and therefore converges to a finite limit.
Summing \eqref{eq:convergence_sufficient_decrease}
over $k$ and using the positivity of its three
coefficients proves
\eqref{eq:convergence_square_summability}.
\end{proof}

\begin{lemma}\label{lem:boundedness}
The sequence $\{(u^k,\mathbf n^k,q^k)\}_{k\geq0}$ is bounded.
\end{lemma}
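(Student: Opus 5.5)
The boundedness of the iterates follows from the monotonicity of the energy in Lemma~\ref{lem:energy-descent}, combined with a coercivity argument applied variable by variable. Set $E_0:=E_{\rho,\varepsilon}(u^0,\mathbf n^0,q^0)<\infty$; we assume the initialization is feasible, so that $\mathbf n^0_{i,j}\in\mathbb S_+^2$ and $q^0_{i,j}\ge1$. Lemma~\ref{lem:energy-descent} then gives $E_{\rho,\varepsilon}(u^k,\mathbf n^k,q^k)\le E_0$ for all $k$. Every term of $E_{\rho,\varepsilon}$ is nonnegative, so each term is individually bounded by $E_0$. We treat the three blocks in turn.

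\emph{The $\mathbf n$-block.} For $k\ge1$, both the MM update \eqref{eq:n_MM_update} and the projected Anderson candidate \eqref{eq:n_AA_projection} lie pointwise in $\mathbb S_+^2$. Hence $|\mathbf n^k_{i,j}|=1$ and $\|\mathbf n^k\|_2=N$, so this block is trivially bounded.

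\emph{The $u$-block.} For $p=2$ we have $\tfrac12\|u^k-f\|_2^2\le E_0$, which gives $\|u^k\|_2\le\|f\|_2+\sqrt{2E_0}$. For $p=1$ we have $\|u^k-f\|_1\le E_0$, and the bound follows from the equivalence of norms in finite dimensions. Note that in the $p=1$ case the decrease estimate \eqref{eq:bilinear_u_p1_descent} replaces the first term of \eqref{eq:convergence_sufficient_decrease}, but the energy is still nonincreasing, so the same argument applies.

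\emph{The $q$-block.} This block needs slightly more work, because the energy controls $q$ only through the penalty term. From $\tfrac{\rho}{2}\|q^k\odot\mathbf n^k-(-\alpha\nabla u^k,1)^\top\|_2^2\le E_0$ and $|\mathbf n^k_{i,j}|=1$, the triangle inequality gives, at each pixel,
\[
q^k_{i,j}=\bigl|q^k_{i,j}\mathbf n^k_{i,j}\bigr|\le \sqrt{2E_0/\rho}+\sqrt{1+\alpha^2|(\nabla u^k)_{i,j}|^2}.
\]
We then use $\|\nabla\|\le 2\sqrt2$ together with the bound on $u^k$ already obtained. An alternative route is the explicit formula \eqref{eq:bilinear_q_update}, which yields $1\le q^{k+1}_{i,j}\le\max\{1,\,1+\alpha|(\nabla u^{k+1})_{i,j}|\}$ by Cauchy--Schwarz. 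Either way $q^k$ is bounded, and the initial iterates are bounded trivially.

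The main obstacle is to make sure the feasibility and nonnegativity facts are properly justified, so that the energy bound really controls each block. Concretely, this means verifying that the vectors being projected have a nonzero third component, which the positivity argument $(\mathbf b^k_{i,j})_3\ge\rho$ provides, and that the energy is bounded below by zero. Once these are in place, the three bounds above combine to show that the sequence $\{(u^k,\mathbf n^k,q^k)\}_{k\ge0}$ is bounded.
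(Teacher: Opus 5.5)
Your proof is correct and follows the paper's argument. The monotone energy and the nonnegativity of each term bound the fidelity and penalty terms, which gives bounded $u^k$; unit length gives bounded $\mathbf n^k$; and the penalty term together with the triangle inequality, using $|q^k_{i,j}\mathbf n^k_{i,j}|=q^k_{i,j}$, gives bounded $q^k$. Your worry about the third component of the projected vectors is unnecessary, since any selection of $\Pi_{\mathbb S_+^2}$ lies in $\mathbb S_+^2$ (and the positivity argument covers only $\mathbf b^k$ and the input to $\mathbf n_{\mathrm H}^k$, not the Anderson candidate), but this does not affect the proof.
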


\begin{proof}
By Lemma~\ref{lem:energy-descent} and the nonnegativity of the terms in \eqref{eq:bilinear_penalty_model}, the fidelity and penalty terms are uniformly bounded. Thus $\{u^k\}$ is bounded. The unit-length constraints imply that $\{\boldsymbol n^k\}$ is bounded and $\|q^k\odot\boldsymbol n^k\|_2=\|q^k\|_2$. The triangle inequality therefore implies that ${q^k}$ is bounded.
\end{proof}

\begin{lemma}\label{lem:relative-error}
There exist a constant $C>0$, independent of $k$,
and a subgradient
\[
\boldsymbol g^{k+1}
=
\bigl(
g_u^{k+1},
g_{\boldsymbol n}^{k+1},
g_q^{k+1}
\bigr)
\in
\partial E_{\rho,\varepsilon,\mathbb I}
(u^{k+1},\boldsymbol n^{k+1},q^{k+1})
\]
such that
\begin{equation}
\label{eq:convergence_relative_error}
\|\boldsymbol g^{k+1}\|_2
\leq
C\left(
\|\boldsymbol n^{k+1}-\boldsymbol n^k\|_2
+
\|q^{k+1}-q^k\|_2
\right).
\end{equation}
\end{lemma}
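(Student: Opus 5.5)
The plan is to build each block of $\boldsymbol g^{k+1}$ from the optimality condition of the corresponding subproblem. We then move that condition from the "old" variables used in the subproblem to the current point $(u^{k+1},\mathbf n^{k+1},q^{k+1})$. Each resulting discrepancy is controlled by Lipschitz estimates that hold on the bounded set of Lemma~\ref{lem:boundedness}. Throughout, let $R$ bound $\|u^k\|_2$ and $\|q^k\|_2$, which Lemma~\ref{lem:boundedness} provides. Also recall that $\varepsilon>0$ makes $\phi$ smooth with $\nabla\phi$ Lipschitz, with a constant of order $\sqrt2\lambda\|\bar\nabla\|^2/\sqrt\varepsilon$. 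The weights satisfy $w^k_{i,j}\le\sqrt2\lambda/\sqrt\varepsilon$, so $\ell^k_{i,j}\le 4\sqrt2\lambda/\sqrt\varepsilon$ uniformly in $k$.

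\emph{The $q$- and $u$-blocks.} Since $q^{k+1}$ exactly minimizes $E_{\rho,\varepsilon,\mathbb I}(u^{k+1},\mathbf n^{k+1},\cdot)$, which is convex in $q$, we may take $g_q^{k+1}=0$. For $u$, the exact optimality condition of \eqref{eq:bilinear_u_p2_system} is
\[
u^{k+1}-f+\rho\alpha\nabla^\ast\bigl(\alpha\nabla u^{k+1}+q^k\odot(n_1^k,n_2^k)^\top\bigr)=0 .
\]
We set $g_u^{k+1}:=\nabla_uE_{\rho,\varepsilon}(u^{k+1},\mathbf n^{k+1},q^{k+1})$. Subtracting the optimality condition gives
\[
g_u^{k+1}=\rho\alpha\nabla^\ast\bigl(q^{k+1}\odot\mathbf n^{k+1}_{1:2}-q^k\odot\mathbf n^k_{1:2}\bigr).
\]
Splitting $q^{k+1}\mathbf n^{k+1}-q^k\mathbf n^k=(q^{k+1}-q^k)\mathbf n^{k+1}+q^k(\mathbf n^{k+1}-\mathbf n^k)$ and using $|\mathbf n_{i,j}|=1$, $\|q^k\|_\infty\le R$ and $\|\nabla^\ast\|\le 2\sqrt2$, we obtain $\|g_u^{k+1}\|_2\le C(\|q^{k+1}-q^k\|_2+\|\mathbf n^{k+1}-\mathbf n^k\|_2)$. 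Note that no $u$-difference appears, which matches \eqref{eq:convergence_relative_error}.

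\emph{The $\mathbf n$-block (main obstacle).} First treat $\mathbf n_{\mathrm{MM}}^{k+1}$, which minimizes $Q(\cdot\mid\mathbf n^k)$ over the hemisphere. Its optimality condition reads
\[
0=\nabla\phi(\mathbf n^k)+\nabla h^k(\mathbf n^k)+\operatorname{diag}(\rho(q^k)^2+2\ell^k)\,\mathbf r^k+\mu^k\odot\mathbf n_{\mathrm{MM}}^{k+1},
\]
with multipliers $\mu^k_{i,j}=-|\mathbf b^k_{i,j}|$. The multiplier vector lies in the normal cone of the sphere, and the third component is strictly positive, so the hemisphere constraint is inactive. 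The multipliers are uniformly bounded, because $\mathbf b^k$ is built from bounded quantities. The difficulty is that $\mathbf n^{k+1}$ may be the Anderson point $\mathbf n_{\mathrm{AA}}^{k+1}$, which is not a stationary point of any surrogate. We do not use its defining formula; instead we transfer the MM condition to it. Define
\[
g_{\mathbf n}^{k+1}:=\nabla\phi(\mathbf n^{k+1})+\nabla_{\mathbf n}h(\mathbf n^{k+1};u^{k+1},q^{k+1})+\mu^k\odot\mathbf n^{k+1}.
\]
Here $\mu^k_{i,j}\mathbf n^{k+1}_{i,j}$ is a normal vector to $\mathbb S^2_+$ at $\mathbf n^{k+1}_{i,j}$, so this is a valid (limiting) subgradient. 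Subtracting the MM condition gives four terms:
\begin{enumerate}
\item $\nabla\phi(\mathbf n^{k+1})-\nabla\phi(\mathbf n^k)$, bounded by the Lipschitz constant of $\nabla\phi$;
\item the change in $\nabla_{\mathbf n}h$, from $\mathbf n^k\to\mathbf n^{k+1}$ and $q^k\to q^{k+1}$, which is bilinear in bounded quantities;
\item the diagonal term, bounded by $(\rho R^2+8\sqrt2\lambda/\sqrt\varepsilon)\|\mathbf r^k\|_2$;
\item $\mu^k\odot(\mathbf n^{k+1}-\mathbf n^{k+1}_{\mathrm{MM}})$, bounded by $C\|\mathbf n^{k+1}-\mathbf n_{\mathrm{MM}}^{k+1}\|_2\le C\eta\|\mathbf r^k\|_2$ by the first acceptance condition in \eqref{eq:n_AA_acceptance}; this term is zero in the MM case.
\end{enumerate}

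It remains to express $\|\mathbf r^k\|_2$ through $\|\mathbf n^{k+1}-\mathbf n^k\|_2$. By the reverse triangle inequality and \eqref{eq:n_AA_acceptance},
\[
\|\mathbf n^{k+1}-\mathbf n^k\|_2\ge\|\mathbf r^k\|_2-\|\mathbf n^{k+1}-\mathbf n^{k+1}_{\mathrm{MM}}\|_2\ge(1-\eta)\|\mathbf r^k\|_2 .
\]
Hence $\|\mathbf r^k\|_2\le(1-\eta)^{-1}\|\mathbf n^{k+1}-\mathbf n^k\|_2$, using $\eta<1$. Collecting the three blocks gives \eqref{eq:convergence_relative_error}. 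The constant $C$ depends only on $\rho,\alpha,\lambda,\varepsilon,\eta,R$ and the operator norms, and not on $k$. The delicate points to check are that the normal-cone representation is valid at the Anderson point, which holds because its third component is nonnegative and the sphere's normal cone suffices, and that the multipliers $|\mathbf b^k_{i,j}|$ are uniformly bounded.
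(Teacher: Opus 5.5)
Your overall strategy matches the paper's. You take $g_q^{k+1}=0$, shift the exact $u$-optimality condition to the new point, and transfer the KKT condition of $\mathbf n^{k+1}_{\mathrm{MM}}$ to the accepted point with a frozen multiplier. You then close with $\|\mathbf n^{k+1}-\mathbf n^{k+1}_{\mathrm{MM}}\|_2\le\eta\|\mathbf r^k\|_2$ and $(1-\eta)\|\mathbf r^k\|_2\le\|\mathbf n^{k+1}-\mathbf n^k\|_2$. However, the key identity in the $\mathbf n$-block is false with the multiplier you chose. From the definition of $\mathbf b^k$ one has, pixelwise, $\nabla\phi(\mathbf n^k)+\nabla h^k(\mathbf n^k)=(\rho(q^k)^2+2\ell^k)\mathbf n^k-\mathbf b^k$. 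Hence
\[
\nabla\phi(\mathbf n^k)+\nabla h^k(\mathbf n^k)+\bigl(\rho(q^k)^2+2\ell^k\bigr)\mathbf r^k
=\bigl(\rho(q^k)^2+2\ell^k\bigr)\mathbf n^{k+1}_{\mathrm{MM}}-\mathbf b^k
=\bigl(\rho(q^k)^2+2\ell^k-|\mathbf b^k|\bigr)\mathbf n^{k+1}_{\mathrm{MM}}.
\]
With $\mu^k=-|\mathbf b^k|$, the right-hand side of your ``optimality condition'' therefore equals $(\rho(q^k)^2+2\ell^k-2|\mathbf b^k|)\mathbf n^{k+1}_{\mathrm{MM}}$, not $0$. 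So your $g_{\mathbf n}^{k+1}$ carries an $O(1)$ component along $\mathbf n^{k+1}$. That component does not vanish even when $\mathbf n^{k+1}=\mathbf n^k$ and $q^{k+1}=q^k$, and the four-term estimate does not follow. The value $|\mathbf b^k|$ (up to sign convention) is the multiplier of the reduced linear problem $\min -\sum\mathbf b^k\cdot\mathbf n$ over the sphere, whose gradient is $-\mathbf b^k$. The quadratic-surrogate gradient you wrote differs from that on the sphere by the normal term $(\rho(q^k)^2+2\ell^k)\mathbf n^{k+1}_{\mathrm{MM}}$, and this shifts the multiplier.

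The repair is to take $\mu^k_{i,j}=|\mathbf b^k_{i,j}|-\rho(q^k_{i,j})^2-2\ell^k_{i,j}$. The displayed identity is then exact, because $\mathbf b^k_{i,j}=|\mathbf b^k_{i,j}|(\mathbf n^{k+1}_{\mathrm{MM}})_{i,j}$. This multiplier is still uniformly bounded; besides $|\mathbf b^k|$, you need boundedness of $q^k$ and $\ell^k\le 4\sqrt2\lambda/\sqrt\varepsilon$. The vector $\mu^k_{i,j}\mathbf n^{k+1}_{i,j}$ still lies in the normal cone of $\mathbb S^2_+$ at $\mathbf n^{k+1}_{i,j}$, as you argued. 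With this choice your four terms and the $(1-\eta)^{-1}$ step go through unchanged, and you recover exactly the paper's subgradient and proof. Your $u$- and $q$-blocks are correct as written.
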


\begin{proof}
For the $u$-component, define
\[
g_u^{k+1}
=
\nabla_u E_{\rho,\varepsilon}
(u^{k+1},\boldsymbol n^{k+1},q^{k+1}).
\]
Exact minimization of the $u$-subproblem gives $\nabla_u E_{\rho,\varepsilon}(u^{k+1},\boldsymbol n^k,q^k)=0$. By Lipschitz continuity,
\begin{equation}
\label{eq:convergence_u_relative_error}
\|g_u^{k+1}\|_2
\leq
C\left(
\|\boldsymbol n^{k+1}-\boldsymbol n^k\|_2
+
\|q^{k+1}-q^k\|_2
\right).
\end{equation}

For the $\boldsymbol n$-component,
\eqref{eq:n_MM_coefficients} gives
\[
\begin{aligned}
\bigl[
\nabla_{\boldsymbol n}E_{\rho,\varepsilon}
(u^{k+1},\boldsymbol n^k,q^k)
\bigr]_{i,j}
=
\left(
\rho(q_{i,j}^k)^2+2\ell_{i,j}^k
\right)\boldsymbol n_{i,j}^k
-\boldsymbol b_{i,j}^k.
\end{aligned}
\]
Since $\boldsymbol b_{i,j}^k=|\boldsymbol b_{i,j}^k|(\boldsymbol n_{\mathrm{MM}}^{k+1})_{i,j}$ and
$\boldsymbol r^k=\boldsymbol n_{\mathrm{MM}}^{k+1}-\boldsymbol n^k$,
the MM candidate satisfies
\begin{equation}
\label{eq:convergence_MM_stationarity}
\begin{aligned}
0=&
\bigl[
\nabla_{\boldsymbol n}E_{\rho,\varepsilon}
(u^{k+1},\boldsymbol n^k,q^k)
\bigr]_{i,j}
+
\left(
\rho(q_{i,j}^k)^2+2\ell_{i,j}^k
\right)\boldsymbol r_{i,j}^k\\
&+
\left(
|\boldsymbol b_{i,j}^k|_2
-\rho(q_{i,j}^k)^2-2\ell_{i,j}^k
\right)
(\boldsymbol n_{\mathrm{MM}}^{k+1})_{i,j}.
\end{aligned}
\end{equation}
Therefore, for $\left(|\boldsymbol b_{i,j}^k| -\rho(q_{i,j}^k)^2-2\ell_{i,j}^k
\right)\boldsymbol n_{i,j}^{k+1} \in\partial\mathbb I_{\mathbb S_+^2}
(\boldsymbol n_{i,j}^{k+1})$, define
\begin{equation}
\label{eq:convergence_n_subgradient}
(g_{\boldsymbol n}^{k+1})_{i,j}
:=
\bigl[
\nabla_{\boldsymbol n}E_{\rho,\varepsilon}
(u^{k+1},\boldsymbol n^{k+1},q^{k+1})
\bigr]_{i,j}
+
\left(
|\boldsymbol b_{i,j}^k|
-\rho(q_{i,j}^k)^2-2\ell_{i,j}^k
\right)\boldsymbol n_{i,j}^{k+1}.
\end{equation}
Subtracting \eqref{eq:convergence_MM_stationarity}
from \eqref{eq:convergence_n_subgradient} gives
\[
\begin{aligned}
(g_{\boldsymbol n}^{k+1})_{i,j}
=&
\Bigl[
\nabla_{\boldsymbol n}E_{\rho,\varepsilon}
(u^{k+1},\boldsymbol n^{k+1},q^{k+1})
-
\nabla_{\boldsymbol n}E_{\rho,\varepsilon}
(u^{k+1},\boldsymbol n^k,q^k)
\Bigr]_{i,j}\\
&-
\left(
\rho(q_{i,j}^k)^2+2\ell_{i,j}^k
\right)\boldsymbol r_{i,j}^k
+
\left(
|\boldsymbol b_{i,j}^k|
-\rho(q_{i,j}^k)^2-2\ell_{i,j}^k
\right)
\left(
\boldsymbol n_{i,j}^{k+1}
-
(\boldsymbol n_{\mathrm{MM}}^{k+1})_{i,j}
\right).
\end{aligned}
\]
The coefficient fields in the last two terms are
uniformly bounded.
Splitting the gradient difference at
$(u^{k+1},\boldsymbol n^{k+1},q^k)$
and applying the triangle inequality yields
\[
\begin{aligned}
\|g_{\boldsymbol n}^{k+1}\|_2
\leq&
\Bigl\|
\nabla_{\boldsymbol n}E_{\rho,\varepsilon}
(u^{k+1},\boldsymbol n^{k+1},q^{k+1})
-
\nabla_{\boldsymbol n}E_{\rho,\varepsilon}
(u^{k+1},\boldsymbol n^{k+1},q^k)
\Bigr\|_2\\
&+
\Bigl\|
\nabla_{\boldsymbol n}E_{\rho,\varepsilon}
(u^{k+1},\boldsymbol n^{k+1},q^k)
-
\nabla_{\boldsymbol n}E_{\rho,\varepsilon}
(u^{k+1},\boldsymbol n^k,q^k)
\Bigr\|_2\\
&+
C\left(
\|\boldsymbol r^k\|_2
+
\|\boldsymbol n^{k+1}
-\boldsymbol n_{\mathrm{MM}}^{k+1}\|_2
\right)\\
\leq&
C\Bigl(
\|q^{k+1}-q^k\|_2
+
\|\boldsymbol n^{k+1}-\boldsymbol n^k\|_2
+
\|\boldsymbol r^k\|_2
+
\|\boldsymbol n^{k+1}
-\boldsymbol n_{\mathrm{MM}}^{k+1}\|_2
\Bigr),
\end{aligned}
\]
Here $C>0$ is independent of $k$. The following inequality holds for both an accepted Anderson point and the MM fallback:
\[
\|\boldsymbol n^{k+1}
-\boldsymbol n_{\mathrm{MM}}^{k+1}\|_2
\leq
\eta\|\boldsymbol r^k\|_2.
\]
Consequently,
\[
\|\boldsymbol n^{k+1}-\boldsymbol n^k\|_2
\geq
\|\boldsymbol r^k\|_2
-
\|\boldsymbol n^{k+1}
-\boldsymbol n_{\mathrm{MM}}^{k+1}\|_2
\geq
(1-\eta)\|\boldsymbol r^k\|_2.
\]
Substituting these bounds into the preceding estimate gives
\begin{equation}
\label{eq:convergence_n_relative_error}
\|g_{\boldsymbol n}^{k+1}\|_2
\leq
C\left(
\|\boldsymbol n^{k+1}-\boldsymbol n^k\|_2
+
\|q^{k+1}-q^k\|_2
\right).
\end{equation}

Finally, exact minimization of the $q$-subproblem gives
\[
0\in
\bigl[
\nabla_q E_{\rho,\varepsilon}
(u^{k+1},\boldsymbol n^{k+1},q^{k+1})
\bigr]_{i,j}
+
\partial\mathbb I_{[1,\infty)}(q_{i,j}^{k+1})
\]
at every pixel. Thus we may take $g_q^{k+1}=0$. Combining \eqref{eq:convergence_u_relative_error} and \eqref{eq:convergence_n_relative_error} proves \eqref{eq:convergence_relative_error}.
\end{proof}

The preceding lemmas establish sufficient decrease,
boundedness of the iterates, and subgradient estimates.
Together with the KL property of
$E_{\rho,\varepsilon,\mathbb I}$ and its continuity
on the feasible set, these properties place the algorithm
within the KL convergence framework
\cite{attouch2013convergence,bolte2014proximal}.
In particular, \cite[Theorem~2.9]{attouch2013convergence}
yields finite length and convergence of the entire
sequence to a critical point.

\begin{theorem}\label{thm:convergence}
The sequence generated by the proposed algorithm has finite length
\begin{equation}
\label{eq:convergence_finite_length}
\sum_{k=0}^{\infty}
\left(
\|u^{k+1}-u^k\|_2
+
\|\mathbf n^{k+1}-\mathbf n^k\|_2
+
\|q^{k+1}-q^k\|_2
\right)
<\infty.
\end{equation}
Moreover, the sequence converges to a feasible critical point of $E_{\rho,\varepsilon,\mathbb I}$.
\end{theorem}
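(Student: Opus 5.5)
The plan is to verify the three hypotheses of the abstract convergence result \cite[Theorem~2.9]{attouch2013convergence} for the constrained energy $F:=E_{\rho,\varepsilon,\mathbb I}$ along the sequence $z^k:=(u^k,\mathbf n^k,q^k)$, and then to add two extra ingredients: a KL property for $F$ and continuity of $F$ along convergent subsequences.

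The first hypothesis is sufficient decrease. Lemma~\ref{lem:energy-descent} gives $F(z^k)-F(z^{k+1})\geq a\|z^{k+1}-z^k\|_2^2$ with $a:=\min\{1/2,\rho/(2(1+\eta)^2),\rho/2\}$. This holds because all iterates are feasible, so $F=E_{\rho,\varepsilon}$ along the sequence.

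The second hypothesis is the relative error bound. Lemma~\ref{lem:relative-error} gives $g^{k+1}\in\partial F(z^{k+1})$ with $\|g^{k+1}\|_2\leq C\|z^{k+1}-z^k\|_2$, since its right-hand side is dominated by $C$ times the full step norm. The constant $C$ is uniform because Lemma~\ref{lem:boundedness} confines the iterates to a compact set $K$. On $K$ the coefficients $w^k,\ell^k,q^k,|\mathbf b^k|$ are bounded, and $\nabla E_{\rho,\varepsilon}$ is Lipschitz there; the smoothing $\varepsilon>0$ makes the square-root term $C^\infty$.

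The third hypothesis is continuity. By boundedness there is a subsequence $z^{k_j}\to\bar z$. Since $\mathbb S_+^2$ and $[1,\infty)$ are closed, $\bar z$ is feasible, and $E_{\rho,\varepsilon}$ is continuous, so $F(z^{k_j})\to F(\bar z)$.

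Next comes the KL property, which I expect to need the most care. I would argue that $F$ is semialgebraic. The term $\frac12\|u-f\|_2^2$ is polynomial, and for $p=1$, $\|u-f\|_1$ is piecewise linear. Each summand $\sqrt{\|(\bar\nabla\mathbf n)_{i,j}\|_{\mathrm F}^2+\varepsilon}$ has a graph $\{(x,t):t\geq0,\ t^2=\|x\|^2+\varepsilon\}$, which is semialgebraic. The penalty term is polynomial. The indicators of $\mathbb S_+^2=\{|\mathbf a|^2=1,a_3\geq0\}$ and of $[1,\infty)$ are indicators of semialgebraic sets. Finite sums of semialgebraic functions are semialgebraic, and proper lower semicontinuous semialgebraic functions satisfy the KL property at every point \cite{bolte2014proximal}. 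Lower semicontinuity follows from the continuity of $E_{\rho,\varepsilon}$ together with the closedness of the constraint sets.

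With these hypotheses, \cite[Theorem~2.9]{attouch2013convergence} gives $\sum_k\|z^{k+1}-z^k\|_2<\infty$. This implies \eqref{eq:convergence_finite_length}, since the sum of the three block norms is at most $\sqrt3\,\|z^{k+1}-z^k\|_2$. Finite length makes $\{z^k\}$ Cauchy, so the whole sequence converges to $\bar z$, which is feasible by closedness.

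It remains to show that $\bar z$ is critical. Here $g^{k+1}\to0$ by the relative error bound, $z^{k+1}\to\bar z$, and $F(z^{k+1})\to F(\bar z)$. Closedness of the graph of the limiting subdifferential then yields $0\in\partial F(\bar z)$.

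For $p=1$ the same argument applies. The $u$-step decrease constant becomes $(\gamma_u-8\rho\alpha^2)/2$ from \eqref{eq:bilinear_u_p1_descent}. The $u$-component of the subgradient is built from the optimality condition of \eqref{eq:bilinear_u_p1_mm}, which introduces an extra term of order $\|u^{k+1}-u^k\|_2$. This is harmless because the relative error bound is stated in terms of the full step $\|z^{k+1}-z^k\|_2$.

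The main obstacle is the uniformity of $C$ in the $\mathbf n$-block: the coefficients $\rho(q^k)^2+2\ell^k$ and $|\mathbf b^k|$ must stay bounded. I would deduce this from the boundedness of $q^k$ and $u^k$ and the bound $w^k\leq\sqrt2\lambda/\sqrt\varepsilon$.
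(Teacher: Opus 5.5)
Your proposal is correct and matches the paper's argument. The paper likewise combines the sufficient decrease of Lemma~\ref{lem:energy-descent}, the boundedness of Lemma~\ref{lem:boundedness}, the relative error bound of Lemma~\ref{lem:relative-error}, continuity of $E_{\rho,\varepsilon,\mathbb I}$ on the feasible set, and its KL property to invoke \cite[Theorem~2.9]{attouch2013convergence}; your semialgebraicity argument and your check that $C$ is uniform on the bounded set of iterates fill in details the paper only asserts.
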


For $p=1$, the $u$-subproblem is updated by a single
MM step, while the $\boldsymbol n$- and $q$-updates
remain unchanged.
The resulting sufficient decrease and subgradient
estimates allow the KL convergence argument to extend
to this case, yielding the following theorem.

\begin{theorem}\label{cor:convergence_p1}
Let $p=1$ and $\gamma_u>8\rho\alpha^2$.
Under the same conditions as in the case $p=2$, with the $u$-update given by \eqref{eq:bilinear_u_p1_soft}, the generated sequence
has finite length and converges to a critical point
of $E_{\rho,\varepsilon,\mathbb I}$.
\end{theorem}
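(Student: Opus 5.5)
The proof follows the same KL route as Theorem~\ref{thm:convergence}. Only the $u$-component of the sufficient-decrease estimate (Lemma~\ref{lem:energy-descent}) and of the relative-error estimate (Lemma~\ref{lem:relative-error}) needs to be redone. Once those are in place, \cite[Theorem~2.9]{attouch2013convergence} gives finite length and convergence to a critical point.

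\textbf{Step 1: sufficient decrease.} For the $u$-step I use \eqref{eq:bilinear_u_p1_descent}, which gives a decrease of $\frac{\gamma_u-8\rho\alpha^2}{2}\|u^{k+1}-u^k\|_2^2$. This holds because the quadratic part has gradient Lipschitz constant $\rho\alpha^2\|\nabla\|^2\le 8\rho\alpha^2$, and $u^{k+1}$ minimizes the majorizer. The $\boldsymbol n$- and $q$-estimates are copied verbatim from Lemma~\ref{lem:energy-descent}, since those updates depend on $u^{k+1}$ only through $\boldsymbol b^k$ and the $q$-target. Summing yields the analogue of \eqref{eq:convergence_sufficient_decrease}, with coefficient $\frac{\gamma_u-8\rho\alpha^2}{2}>0$ in front of $\|u^{k+1}-u^k\|_2^2$. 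Square summability follows as before.

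\textbf{Step 2: boundedness.} The argument of Lemma~\ref{lem:boundedness} applies unchanged. The term $\|u^k-f\|_1$ is bounded by the initial energy, so $\{u^k\}$ is bounded, and the bounds on $\boldsymbol n^k$ and $q^k$ follow exactly as there.

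\textbf{Step 3: relative error (the main obstacle).} Here the $u$-subproblem is no longer solved exactly. The optimality condition of \eqref{eq:bilinear_u_p1_mm} gives some $\xi^{k+1}\in\partial\|\cdot-f\|_1(u^{k+1})$ with
\[
0=\xi^{k+1}+\nabla_u H(u^k,\boldsymbol n^k,q^k)+\gamma_u(u^{k+1}-u^k),
\]
where $H$ denotes the smooth penalty term. I therefore set
\[
g_u^{k+1}:=\xi^{k+1}+\nabla_u H(u^{k+1},\boldsymbol n^{k+1},q^{k+1}),
\]
which lies in the limiting subdifferential in $u$. Note that $\partial E_{\rho,\varepsilon,\mathbb I}$ splits blockwise because the only nonsmooth parts are separable. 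Subtracting the optimality condition gives
\[
g_u^{k+1}=\nabla_u H(u^{k+1},\boldsymbol n^{k+1},q^{k+1})-\nabla_u H(u^k,\boldsymbol n^k,q^k)-\gamma_u(u^{k+1}-u^k).
\]
On the bounded set from Step 2, $\nabla_u H$ is Lipschitz. Hence
\[
\|g_u^{k+1}\|_2\le C\bigl(\|u^{k+1}-u^k\|_2+\|\boldsymbol n^{k+1}-\boldsymbol n^k\|_2+\|q^{k+1}-q^k\|_2\bigr).
\]

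For $g_{\boldsymbol n}$ I repeat the argument of Lemma~\ref{lem:relative-error}. Since $u^{k+1}$ is common to both evaluations, the computation is unchanged. The only new ingredient is the uniform bound on the coefficients $|\boldsymbol b^k_{i,j}|$, $\ell^k$ and $q^k$, which follows from Step 2 together with $w^k\le\sqrt2\lambda/\sqrt\varepsilon$. I take $g_q^{k+1}=0$ exactly as before. The resulting relative-error bound now includes the $\|u^{k+1}-u^k\|_2$ term, which is still admissible in the framework of \cite{attouch2013convergence}.

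\textbf{Step 4: KL conclusion.} The function $E_{\rho,\varepsilon,\mathbb I}$ is semialgebraic: it combines the $\ell^1$ norm, square roots of polynomials, polynomial penalties, and indicators of semialgebraic sets. It is therefore KL. It is also continuous on its closed domain, which supplies the continuity condition along convergent subsequences. \cite[Theorem~2.9]{attouch2013convergence} then yields finite length and convergence to a critical point.
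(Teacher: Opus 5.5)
Your proposal is correct and follows the paper's proof essentially step for step. It uses the same $u$-decrease \eqref{eq:bilinear_u_p1_descent} combined with the unchanged $\boldsymbol n$- and $q$-estimates, and the same boundedness of $\{u^k\}$ from the $\ell^1$ fidelity bound. Your subgradient $g_u^{k+1}$ is identical to the paper's: the penalty gradient at $(u^{k+1},\boldsymbol n^{k+1},q^{k+1})$ added to the MM optimality condition. You keep $g_{\boldsymbol n}^{k+1}$ unchanged with $g_q^{k+1}=0$, and conclude via semialgebraicity and \cite[Theorem~2.9]{attouch2013convergence}, exactly as the paper does.

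The extra details you supply are correct and are points the paper leaves implicit. These are the blockwise splitting of the subdifferential and the bound $w^k\le\sqrt2\lambda/\sqrt\varepsilon$ that makes the coefficient fields uniformly bounded.
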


\begin{proof}
We verify the modifications required for the $u$-update, and the estimates for the $\boldsymbol n$- and $q$-updates remain valid.

Combining \eqref{eq:bilinear_u_p1_descent}
with the decrease estimates for the other two
updates gives
\[
\begin{aligned}
&
E_{\rho,\varepsilon}(u^k,\boldsymbol n^k,q^k)
-
E_{\rho,\varepsilon}
(u^{k+1},\boldsymbol n^{k+1},q^{k+1})\\
\geq&
\frac{\gamma_u-8\rho\alpha^2}{2}
\|u^{k+1}-u^k\|_2^2
+
\frac{\rho}{2(1+\eta)^2}
\|\boldsymbol n^{k+1}-\boldsymbol n^k\|_2^2
+
\frac{\rho}{2}
\|q^{k+1}-q^k\|_2^2.
\end{aligned}
\]
All three coefficients are strictly positive.
Moreover,
\[
\|u^k-f\|_2
\leq
\|u^k-f\|_1
\leq
E_{\rho,\varepsilon}(u^0,\boldsymbol n^0,q^0),
\]
so $\{u^k\}$ is bounded.
The boundedness of $\{\boldsymbol n^k\}$ and
$\{q^k\}$ follows by the same argument as in
Lemma~\ref{lem:boundedness}.

The optimality condition of the MM update is
\[
\begin{aligned}
-\rho\alpha\nabla^\ast
\left(
\alpha\nabla u^k
+
q^k\odot
\begin{pmatrix}
n_1^k\\
n_2^k
\end{pmatrix}
\right)
-\gamma_u(u^{k+1}-u^k)
\in
\partial\bigl(\|\cdot-f\|_1\bigr)(u^{k+1}).
\end{aligned}
\]
Adding the gradient of the quadratic penalty
at the new iterate therefore gives the following
$u$-component of a subgradient:
\[
\begin{aligned}
g_u^{k+1}
:={}&
\rho\alpha\nabla^\ast
\left[
\alpha\nabla(u^{k+1}-u^k)
+
q^{k+1}\odot
\begin{pmatrix}
n_1^{k+1}\\
n_2^{k+1}
\end{pmatrix}
-
q^k\odot
\begin{pmatrix}
n_1^k\\
n_2^k
\end{pmatrix}
\right]\\
&-\gamma_u(u^{k+1}-u^k).
\end{aligned}
\]
Using boundedness of $\{q^k\}$, the unit-length
constraints, and $\|\nabla\|^2\leq8$, we obtain
\[
\begin{aligned}
\|g_u^{k+1}\|_2
\leq{}&
(\gamma_u+8\rho\alpha^2)
\|u^{k+1}-u^k\|_2\\
&+
C\left(
\|\boldsymbol n^{k+1}-\boldsymbol n^k\|_2
+
\|q^{k+1}-q^k\|_2
\right),
\end{aligned}
\]
where $C$ is independent of $k$. The construction of $g_{\boldsymbol n}^{k+1}$ in Lemma~\ref{lem:relative-error} is unchanged, and exact minimization of the $q$-subproblem allows us to take $g_q^{k+1}=0$. Consequently,
\[
\boldsymbol g^{k+1}
=
(g_u^{k+1},g_{\boldsymbol n}^{k+1},g_q^{k+1})
\in
\partial E_{\rho,\varepsilon,\mathbb I}
(u^{k+1},\boldsymbol n^{k+1},q^{k+1}),
\]
with
\[
\|\boldsymbol g^{k+1}\|_2
\leq
C\left(
\|u^{k+1}-u^k\|_2
+
\|\boldsymbol n^{k+1}-\boldsymbol n^k\|_2
+
\|q^{k+1}-q^k\|_2
\right).
\]
Thus the required relative error estimate holds.

Finally, the $L^1$ fidelity term is continuous
and semialgebraic.
Hence $E_{\rho,\varepsilon,\mathbb I}$ remains
a proper lower semicontinuous KL function,
and its restriction to the feasible set is continuous.
Boundedness provides a convergent subsequence.
Together with sufficient decrease and the subgradient estimate, this verifies the hypotheses of
\cite[Theorem~2.9]{attouch2013convergence}.
The finite length and critical point convergence
conclusions follow.
\end{proof}

\section{Numerical experiments} \label{sec:experiments}
In this section, we present numerical experiments to evaluate the
restoration quality and computational efficiency of the proposed
AAMM algorithm (Algorithm~\ref{alg:bilinear_ggv}) for the
bilinearly reformulated GMV model. We first examine the boundary preserving behavior of the proposed model on synthetic images. We then investigate the monotonic decay of the energy and the effect of depth-one Anderson acceleration. Comparisons with representative geometric regularization methods are subsequently reported for Gaussian noise removal. Finally, we evaluate the proposed method for impulse noise removal.

All experiments are performed on a laptop with Windows 11 (64-bit), 32GB of RAM, an Intel Core i9-14900HX (2.20 GHz), and MATLAB R2023a.

\subsection{Experimental settings}

\begin{figure}[htbp]
	\centering
	\subfigure[Synthetic1]{
    \includegraphics[scale=0.38]{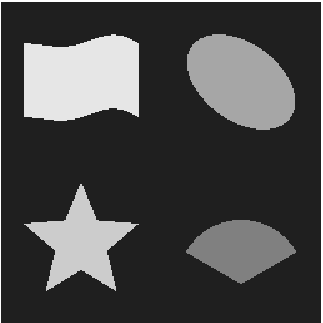}
	}
	\subfigure[Synthetic2]{
    \includegraphics[scale=0.38]{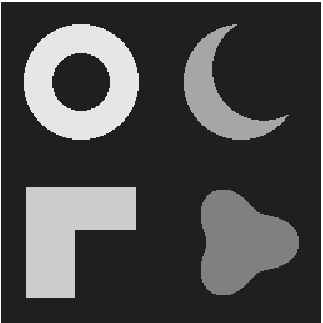}
	}
	\subfigure[Airplane]{
    \includegraphics[scale=0.38]{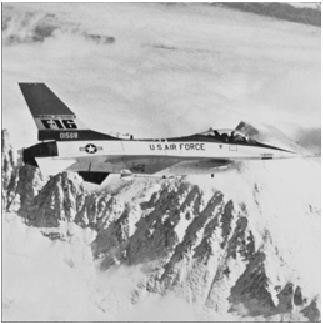}
	}
    \subfigure[Boat]{
    \includegraphics[scale=0.38]{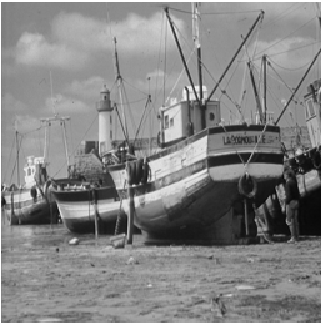}
	}
	\subfigure[Cameraman]{
    \includegraphics[scale=0.38]{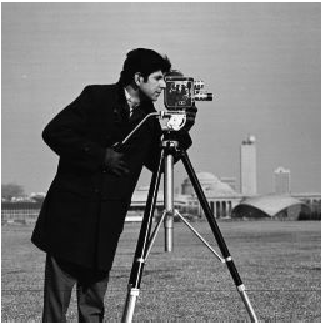}
	}\\
	\subfigure[Couple]{
    \includegraphics[scale=0.38]{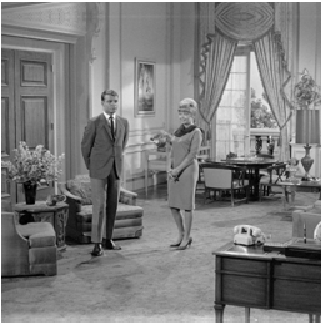}
    }
    \subfigure[Monarch]{
    \includegraphics[scale=0.38]{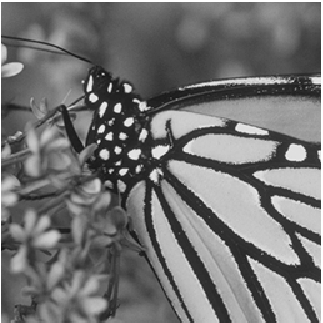}
    }
    \subfigure[Parrot]{
    \includegraphics[scale=0.38]{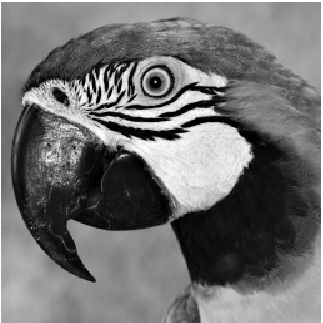}
	}
	\subfigure[Peppers]{
    \includegraphics[scale=0.38]{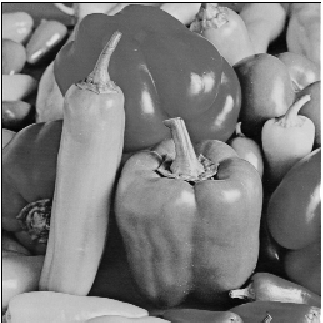}
	}
	\subfigure[Starfish]{
    \includegraphics[scale=0.38]{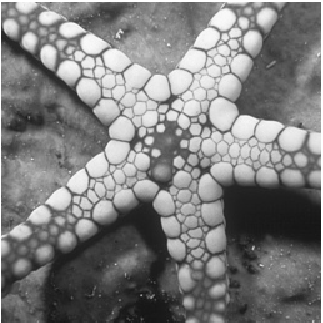}
    }
\caption{Original test images used in the denoising experiments. In the Gaussian noise experiments, Gaussian noise with mean $0$ and variance $0.0025$ is added. In the impulsive noise experiments, $5\%$ of the image pixels are randomly replaced by $0$ or $1$.} \label{figure_denoising_image}
\end{figure}

The parameter $\eta$ for the depth-one Anderson acceleration is fixed at $0.5$. The penalty parameter $\rho$, the graph scaling parameter $\alpha$, and the regularization parameter $\lambda$ are selected separately for each image. The initial image is obtained by solving $\left(I+\nabla^\ast\nabla\right)u^0=f$, and the auxiliary variables are initialized by
\[
q_{i,j}^0
=
\sqrt{
1+\alpha^2|(\nabla u^0)_{i,j}|_2^2
},
\qquad
\boldsymbol n_{i,j}^0
=
\frac{
\bigl(
-\alpha(\nabla u^0)_{i,j},1
\bigr)^\top
}{
q_{i,j}^0
},
\]
which satisfies $q_{i,j}^0\geq1$ and $\boldsymbol n_{i,j}^0\in\mathbb S_+^2$ at every pixel. The iteration is terminated when the relative error in the energy satisfies
\[
\frac{
\left|
E_{\rho,\varepsilon}
(u^{k+1},\boldsymbol n^{k+1},q^{k+1})
-
E_{\rho,\varepsilon}
(u^k,\boldsymbol n^k,q^k)
\right|
}{
\left|
E_{\rho,\varepsilon}
(u^{k+1},\boldsymbol n^{k+1},q^{k+1})
\right|
}
\leq \mathrm{Tol}.
\]
Unless otherwise specified, we set $\mathrm{Tol}=10^{-4}$ and the
maximum number of iterations to $300$. Anderson acceleration is
enabled by default.

For the competing methods, we use the parameter settings recommended
in their original papers or codes. For all methods, the model and algorithm parameters are tuned to attain the highest PSNR value. All input and reconstructed images are normalized to the intensity range $[0,1]$. Figure~\ref{figure_denoising_image} displays the synthetic images and the real images selected from the Set12 benchmark. All test images are resized to a spatial resolution of $256\times256$
pixels.

\subsection{Boundary preservation}

\begin{figure}[tbp]
    \centering
    \subfigure[Noisy]{
        \begin{minipage}[b]{0.11\linewidth}
        \centering
        \includegraphics[scale=0.265]{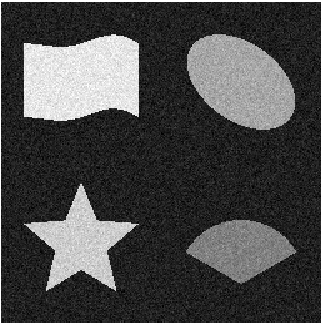}\\
        \vspace{0.02cm}
        \includegraphics[scale=0.265]{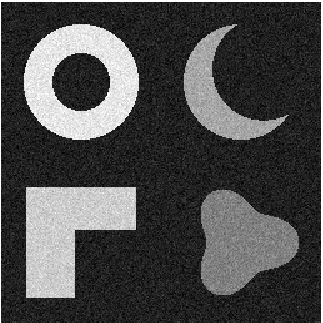}\\
        \vspace{0.02cm}
        \end{minipage}
        }
    \subfigure[\cite{WZ2012}]{
        \begin{minipage}[b]{0.11\linewidth}
        \centering
        \includegraphics[scale=0.13]{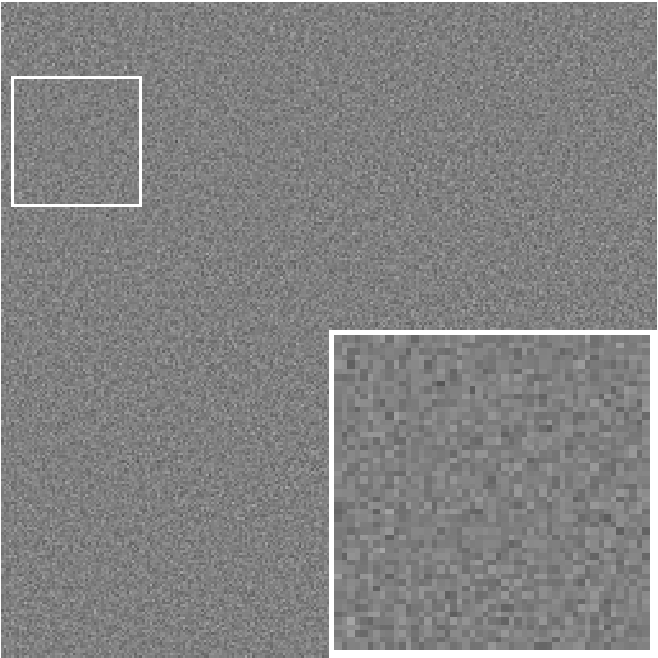}\\
        \vspace{0.02cm}
        \includegraphics[scale=0.13]{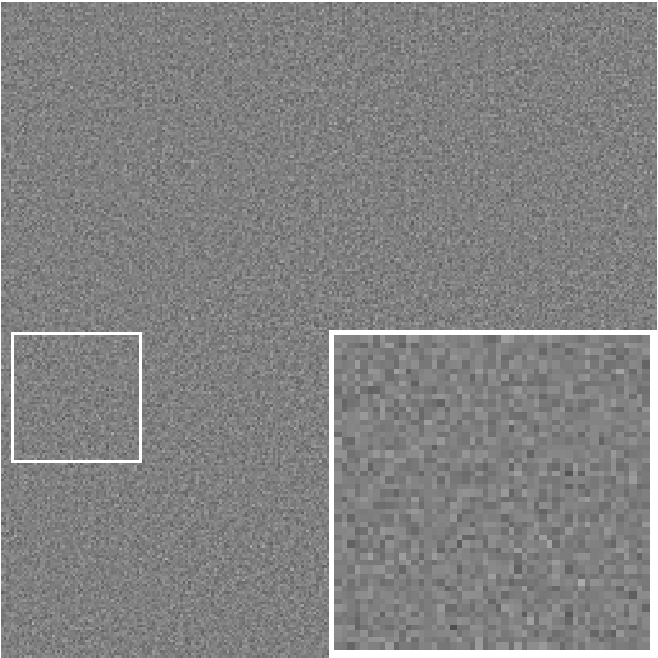}\\
        \vspace{0.02cm}
        \end{minipage}
        }
    \subfigure[\cite{CBL2016}]{
        \begin{minipage}[b]{0.11\linewidth}
        \centering
        \includegraphics[scale=0.13]{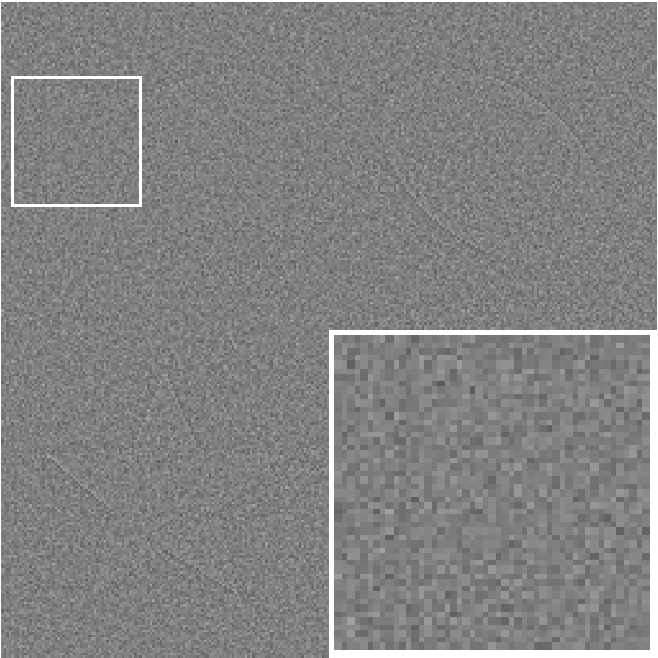}\\
        \vspace{0.02cm}
        \includegraphics[scale=0.13]{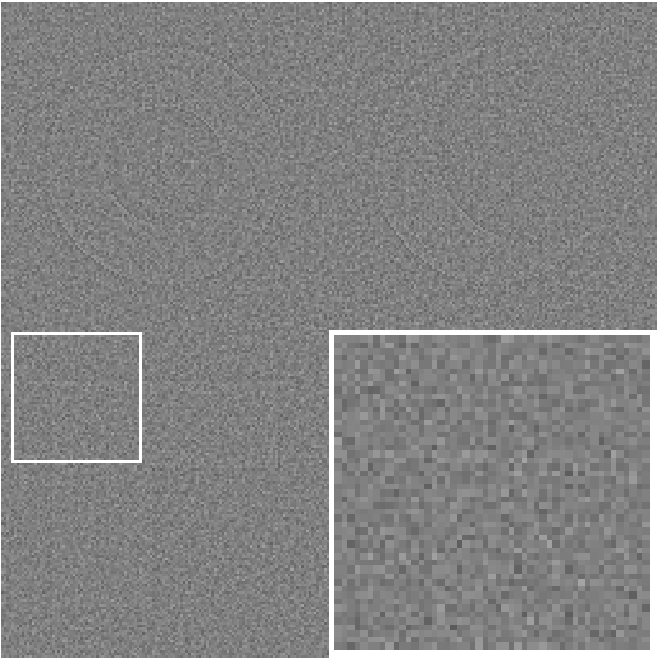}\\
        \vspace{0.02cm}
        \end{minipage}
        }
        \subfigure[\cite{QZ2022}]{
        \begin{minipage}[b]{0.11\linewidth}
        \centering
        \includegraphics[scale=0.13]{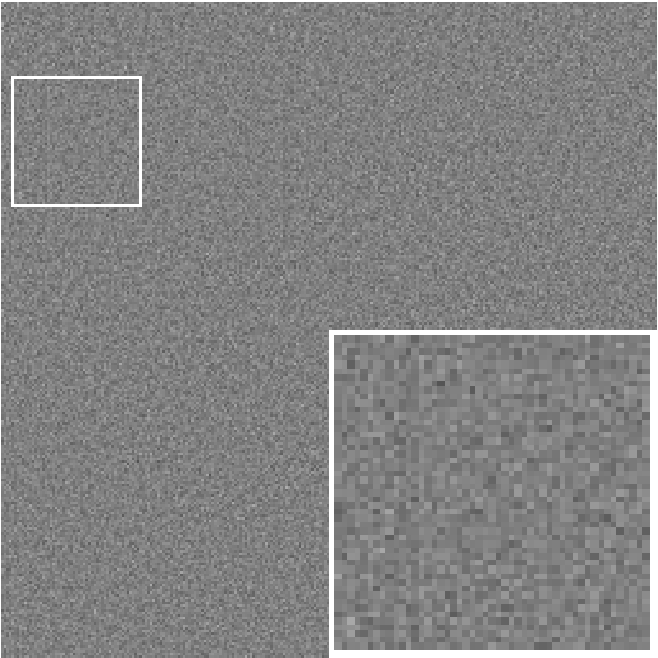}\\
        \vspace{0.02cm}
        \includegraphics[scale=0.13]{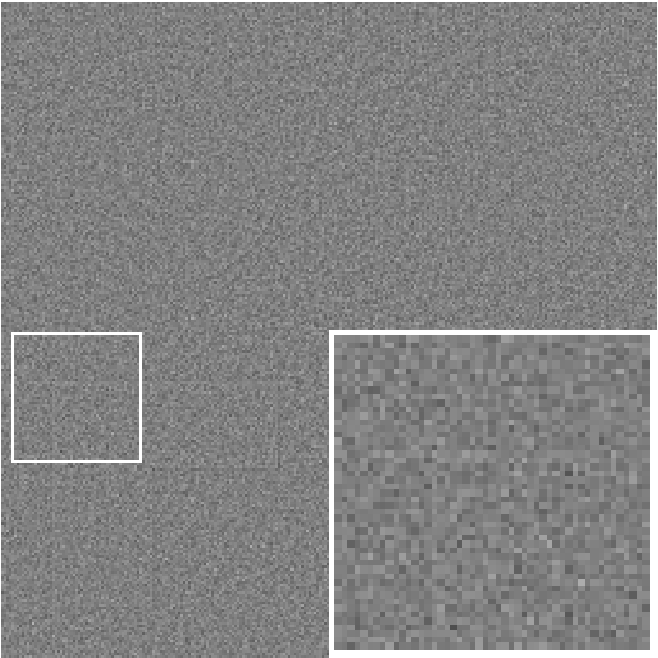}\\
        \vspace{0.02cm}
        \end{minipage}
        }
        \subfigure[\cite{wu2024variational}]{
        \begin{minipage}[b]{0.11\linewidth}
        \centering
        \includegraphics[scale=0.13]{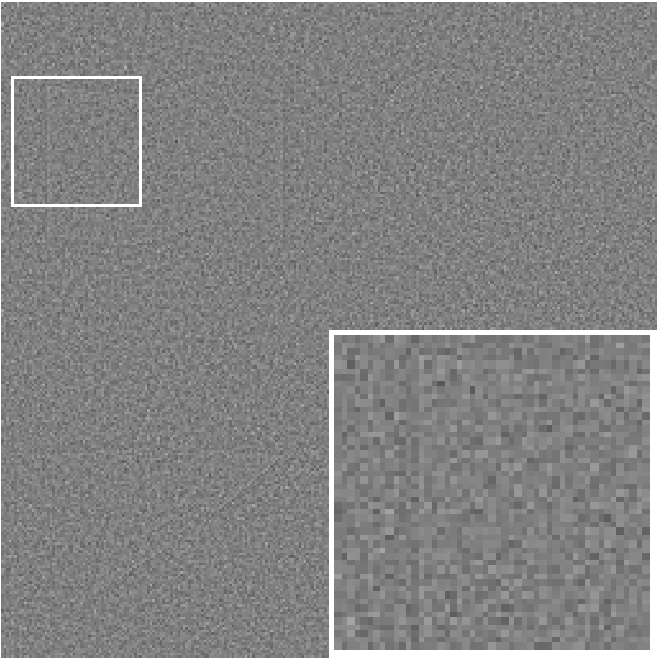}\\
        \vspace{0.02cm}
        \includegraphics[scale=0.13]{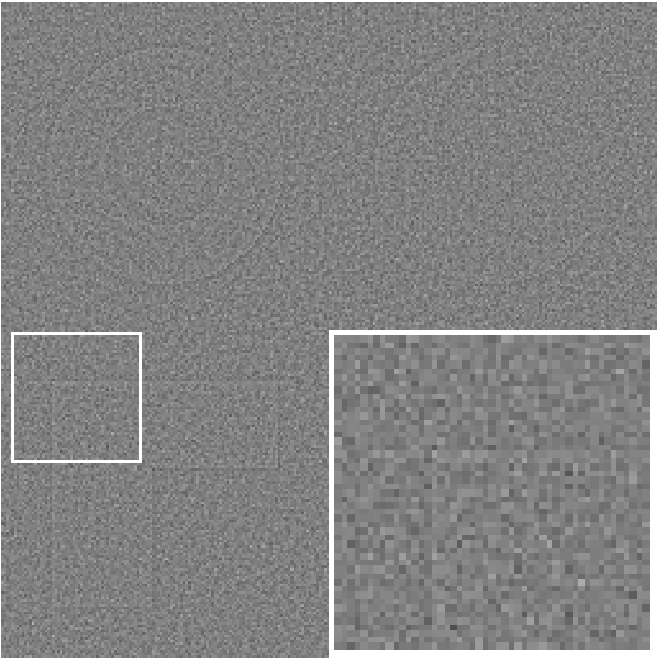}\\
        \vspace{0.02cm}
        \end{minipage}
        }
        \subfigure[\cite{su2026cone}]{
        \begin{minipage}[b]{0.11\linewidth}
        \centering
        \includegraphics[scale=0.13]{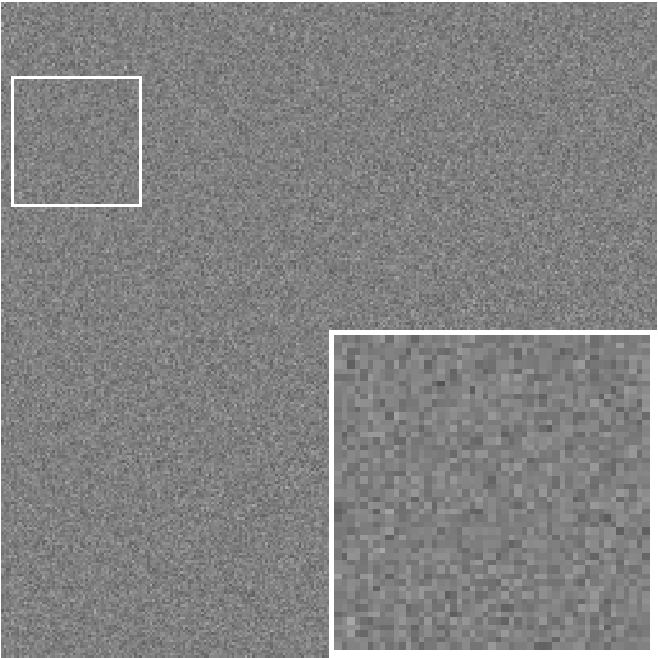}\\
        \vspace{0.02cm}
        \includegraphics[scale=0.13]{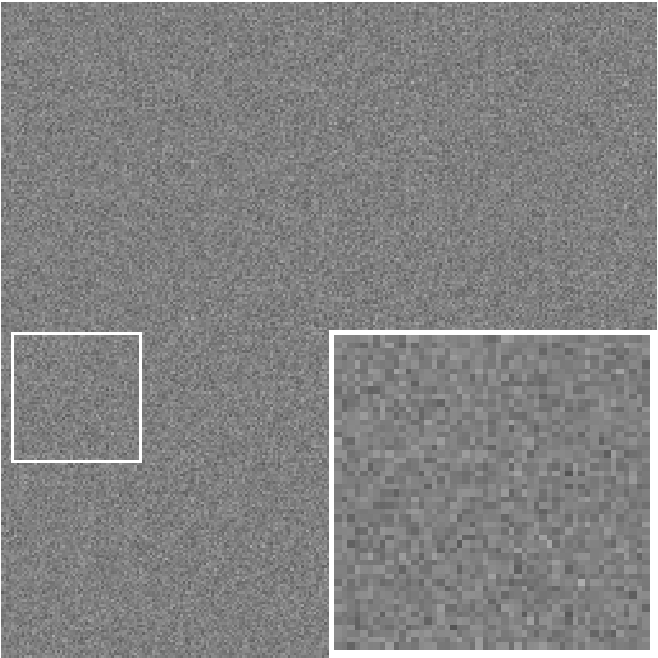}\\
        \vspace{0.02cm}
        \end{minipage}
        }
        \subfigure[Ours]{
        \begin{minipage}[b]{0.11\linewidth}
        \centering
        \includegraphics[scale=0.13]{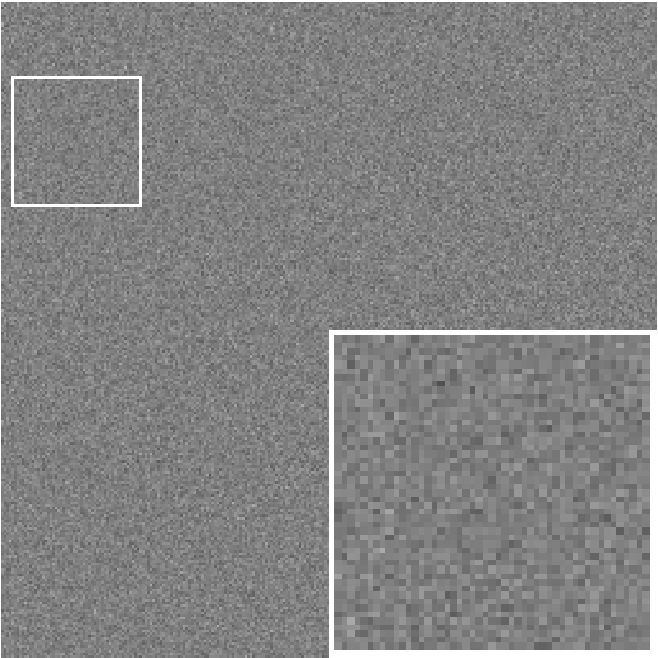}\\
        \vspace{0.02cm}
        \includegraphics[scale=0.13]{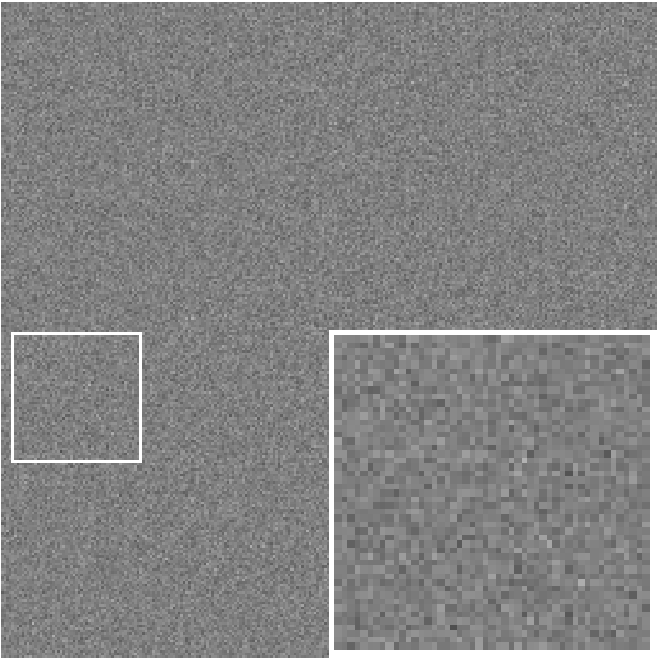}\\
        \vspace{0.02cm}
        \end{minipage}
        }
\caption{Residual images for Gaussian denoising on images “Synthetic1" and “Synthetic2". The residual images are computed by $f-u+0.5$.} \label{fig:boundary_preservation}
\end{figure}

\begin{figure}[t]
    \centering
    \subfigure[Noisy]{
        \begin{minipage}[b]{0.11\linewidth}
        \centering
        \includegraphics[scale=0.12]{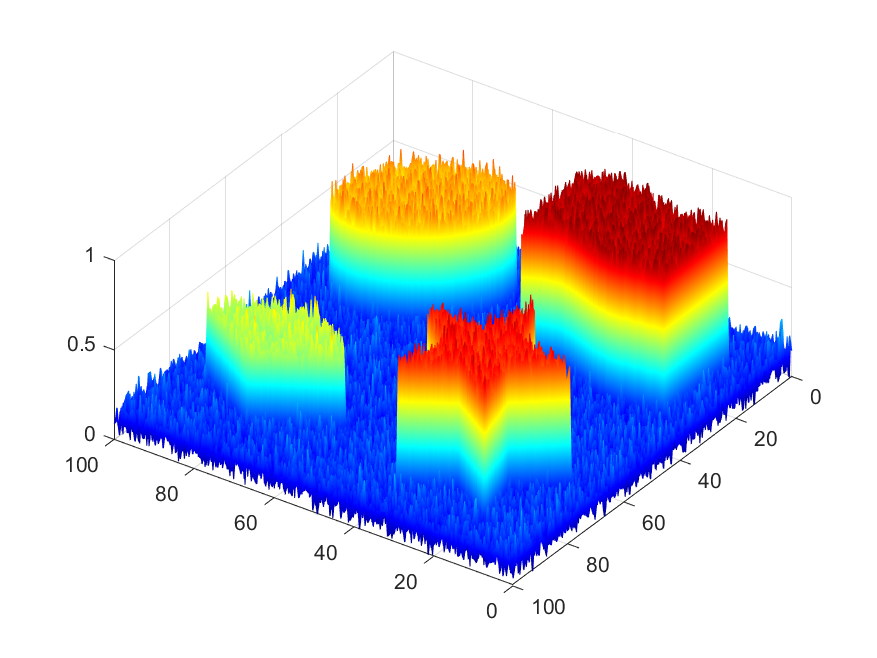}\\
        \vspace{0.02cm}
        \includegraphics[scale=0.12]{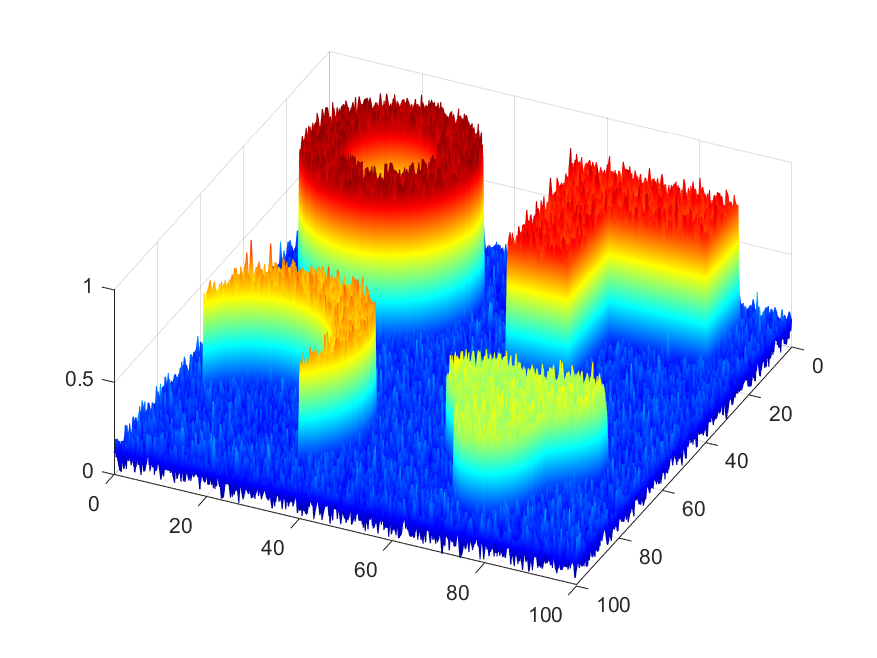}\\
        \vspace{0.02cm}
        \end{minipage}
        }
    \subfigure[\cite{WZ2012}]{
        \begin{minipage}[b]{0.11\linewidth}
        \centering
        \includegraphics[scale=0.12]{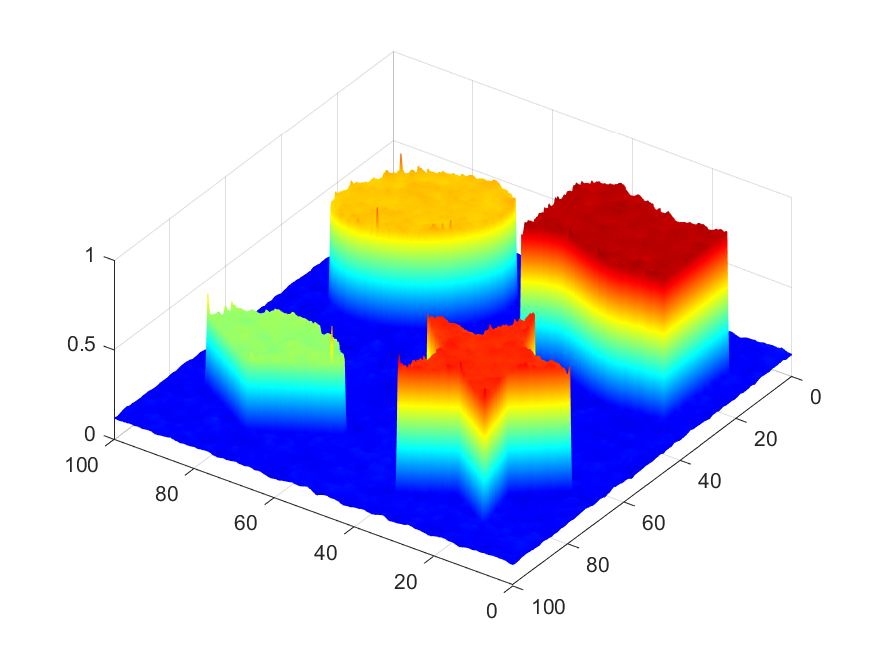}\\
        \vspace{0.02cm}
        \includegraphics[scale=0.12]{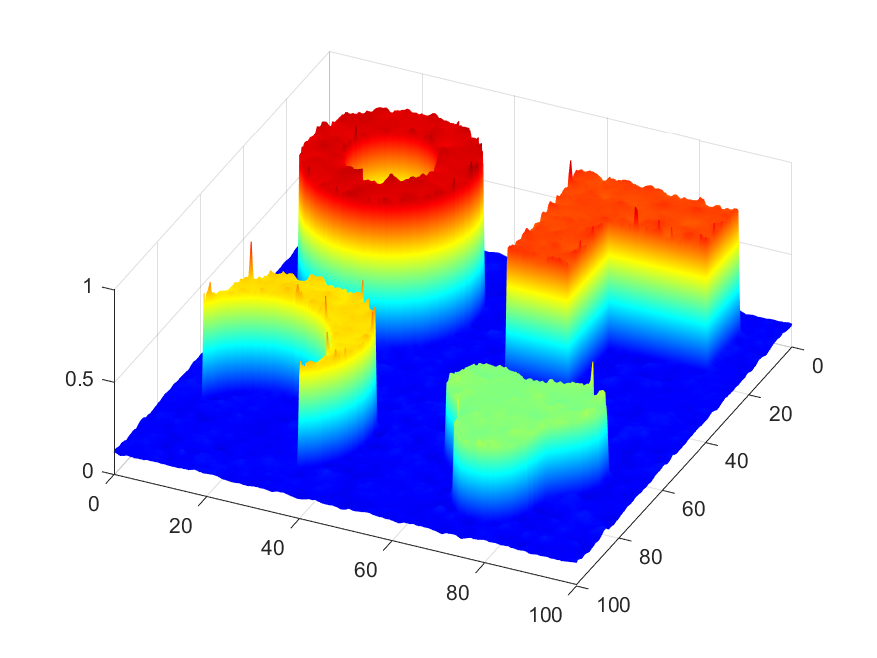}\\
        \vspace{0.02cm}
        \end{minipage}
        }
    \subfigure[\cite{CBL2016}]{
        \begin{minipage}[b]{0.11\linewidth}
        \centering
        \includegraphics[scale=0.12]{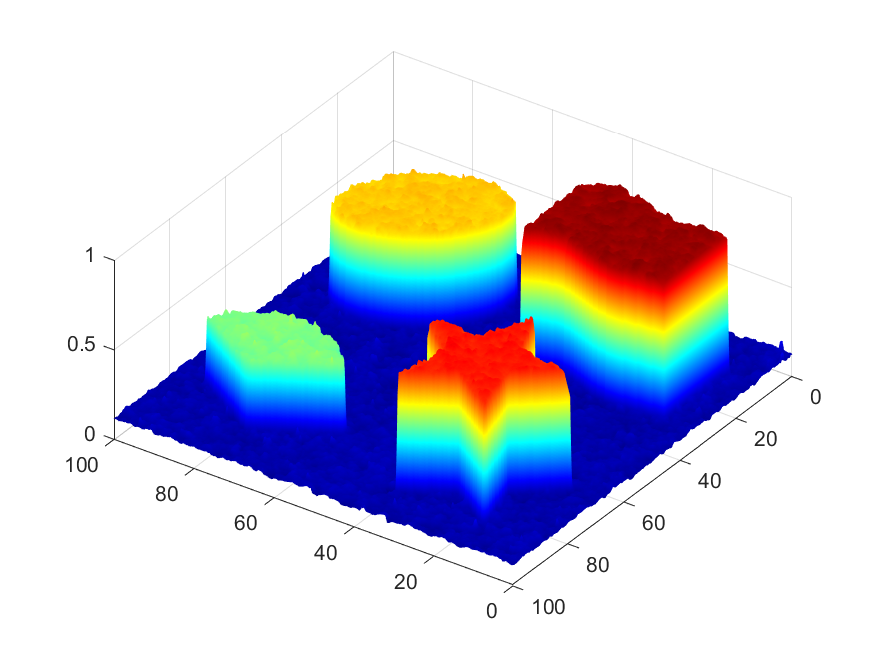}\\
        \vspace{0.02cm}
        \includegraphics[scale=0.12]{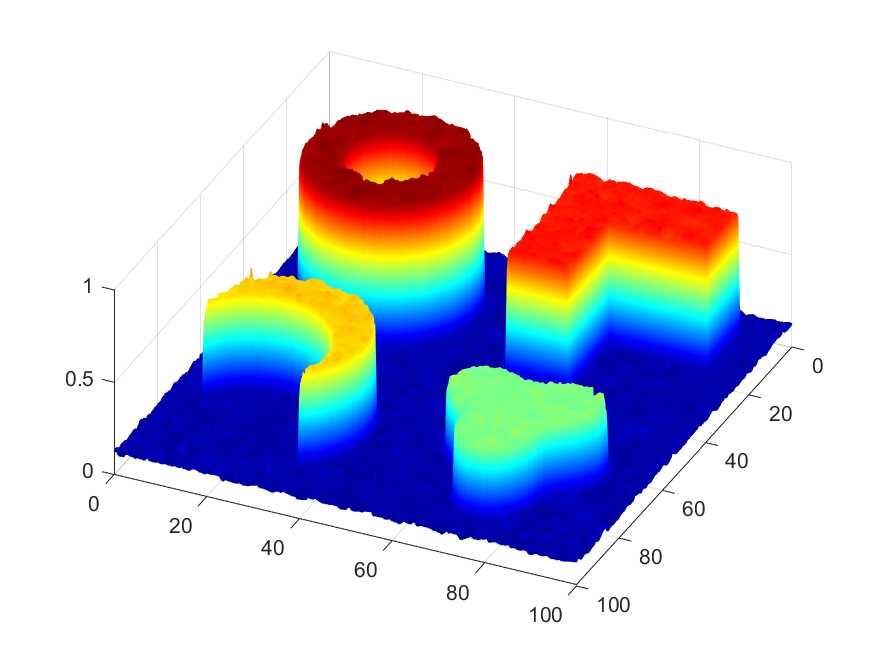}\\
        \vspace{0.02cm}
        \end{minipage}
        }
    \subfigure[\cite{QZ2022}]{
        \begin{minipage}[b]{0.11\linewidth}
        \centering
        \includegraphics[scale=0.12]{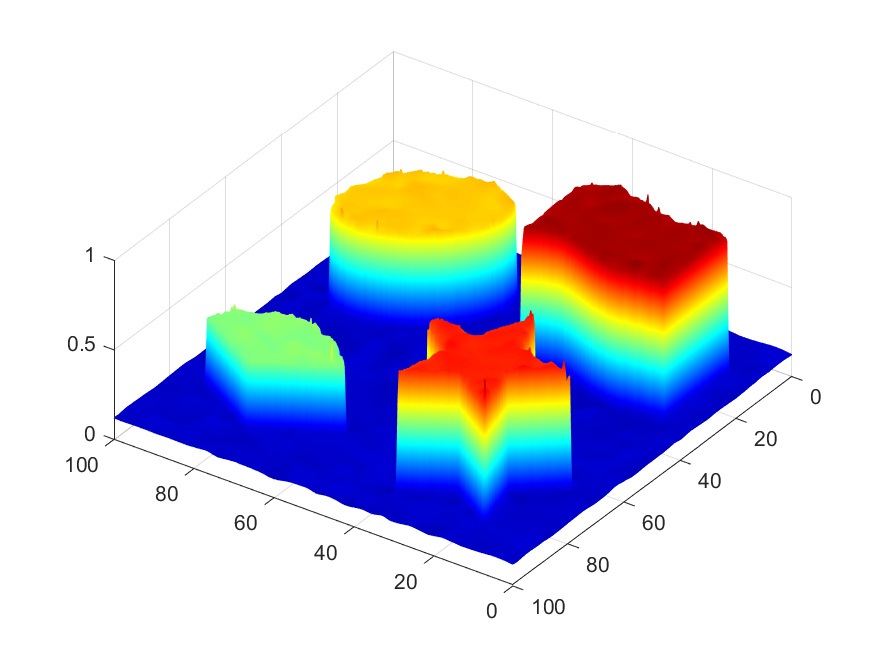}\\
        \vspace{0.02cm}
        \includegraphics[scale=0.12]{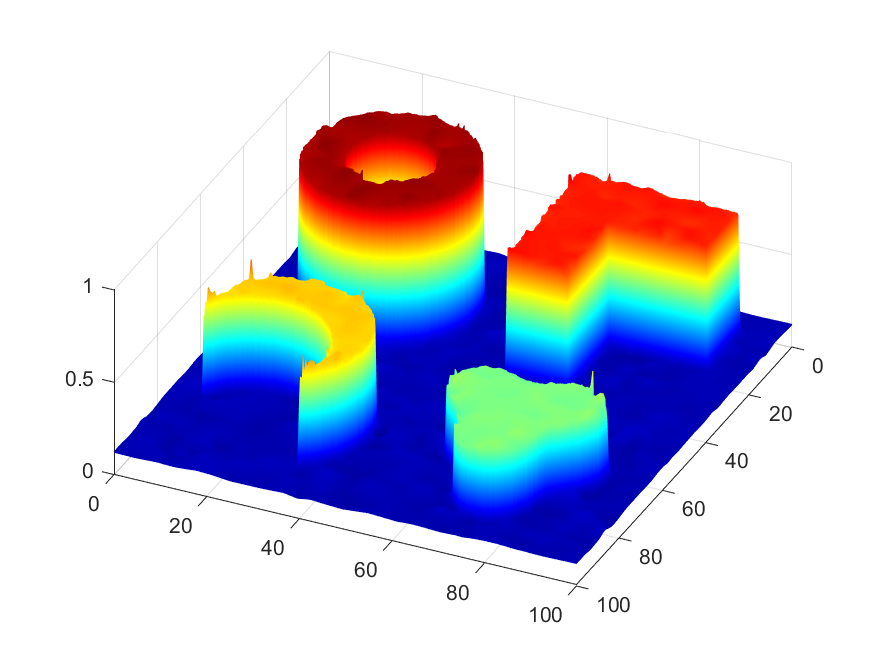}\\
        \vspace{0.02cm}
        \end{minipage}
        }
        \subfigure[\cite{wu2024variational}]{
        \begin{minipage}[b]{0.11\linewidth}
        \centering
        \includegraphics[scale=0.12]{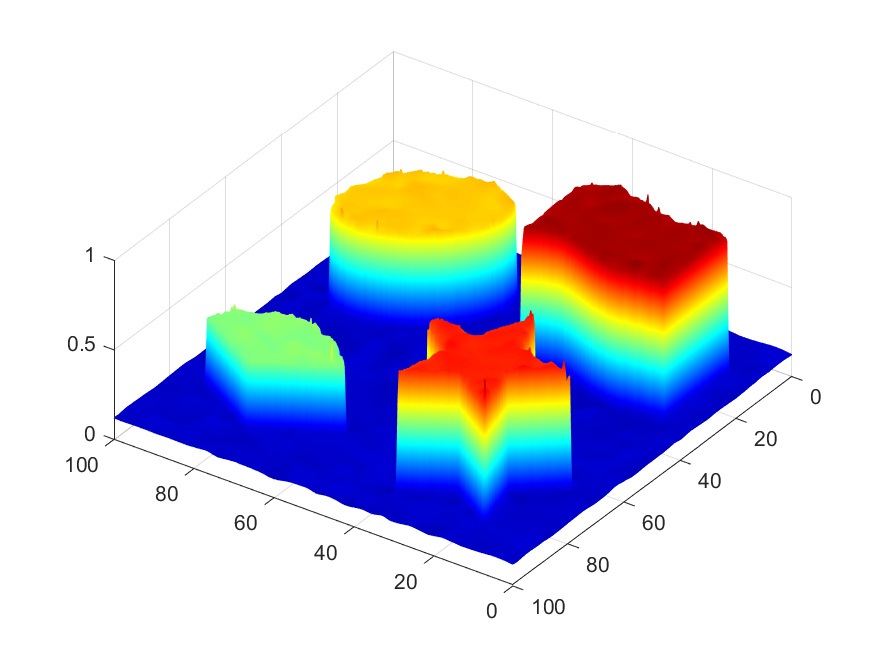}\\
        \vspace{0.02cm}
        \includegraphics[scale=0.12]{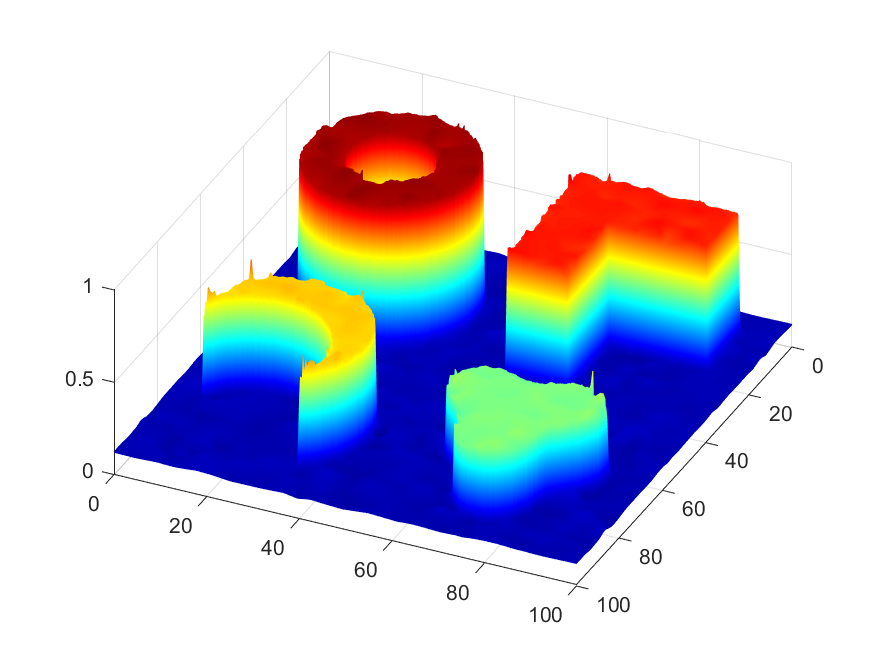}\\
        \vspace{0.02cm}
        \end{minipage}
        }
        \subfigure[\cite{su2026cone}]{
        \begin{minipage}[b]{0.11\linewidth}
        \centering
        \includegraphics[scale=0.12]{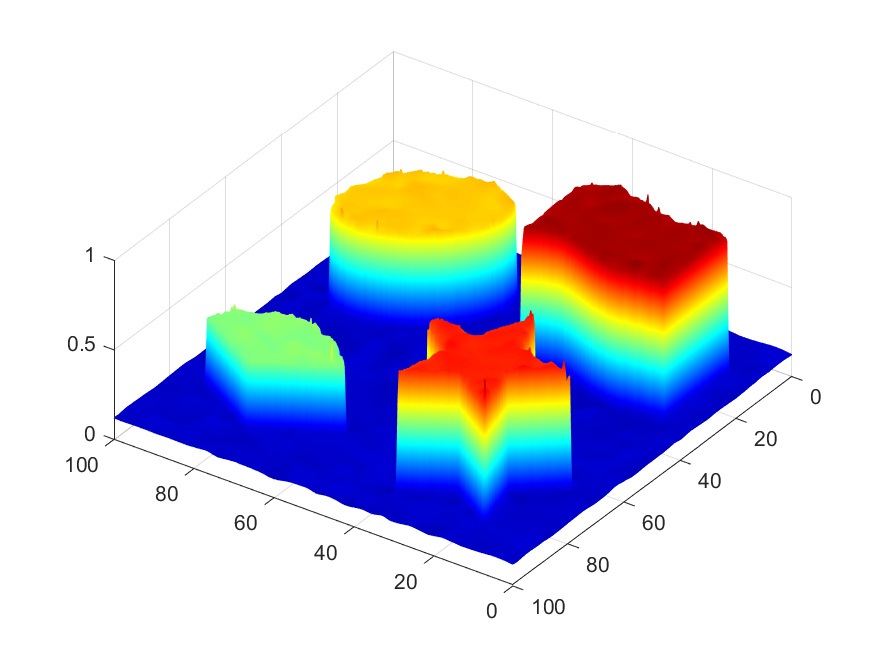}\\
        \vspace{0.02cm}
        \includegraphics[scale=0.12]{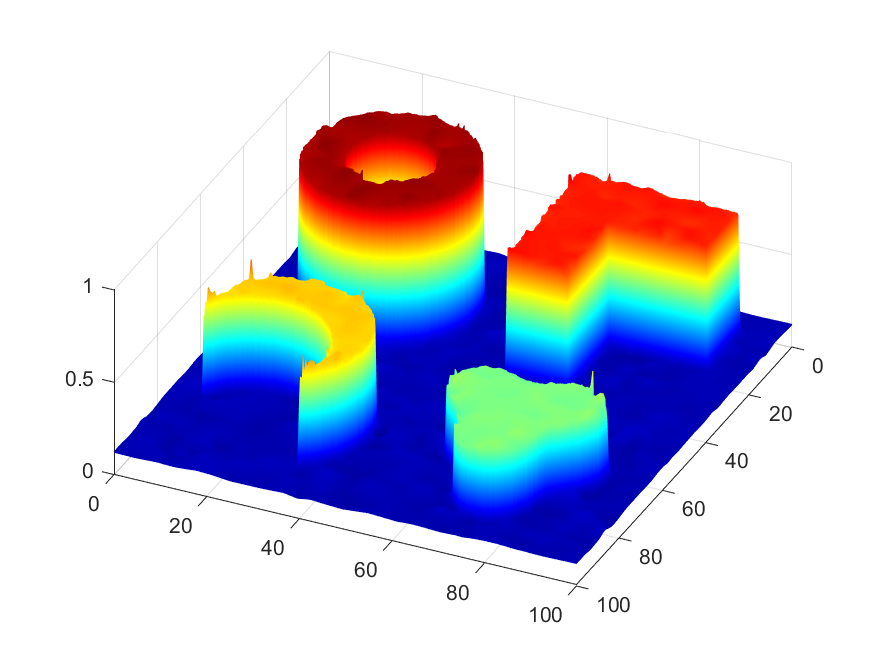}\\
        \vspace{0.02cm}
        \end{minipage}
        }
        \subfigure[Ours]{
        \begin{minipage}[b]{0.11\linewidth}
        \centering
        \includegraphics[scale=0.12]{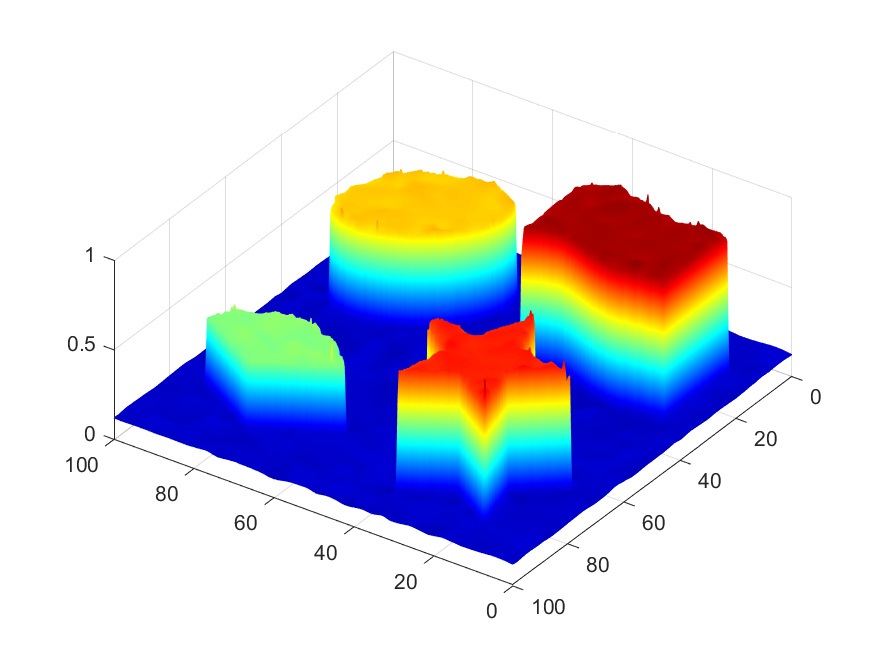}\\
        \vspace{0.02cm}
        \includegraphics[scale=0.12]{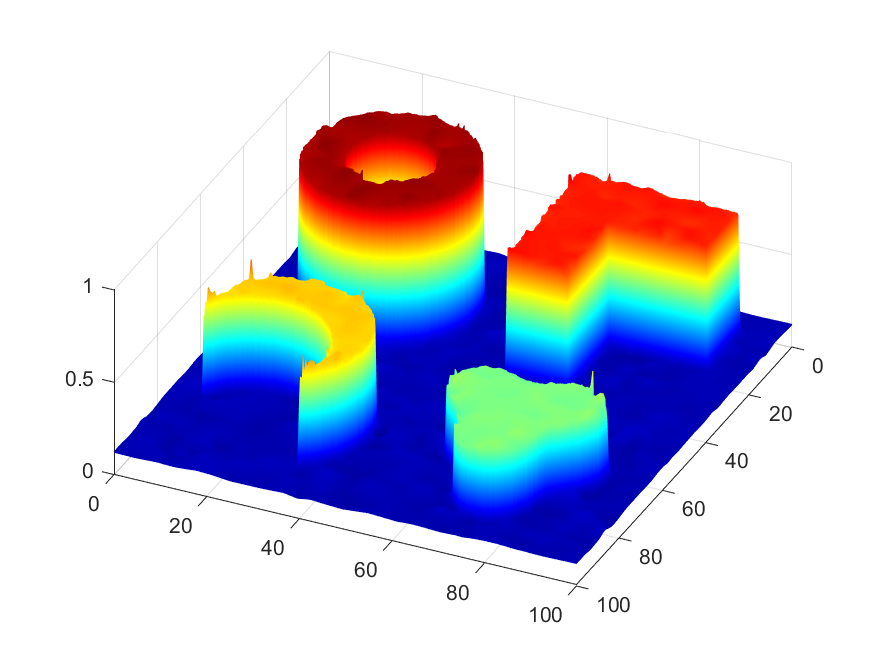}\\
        \vspace{0.02cm}
        \end{minipage}
        }
\caption{Image surface comparison for Gaussian denoising on images “Synthetic1" and “Synthetic2".} \label{figure_surface}
\end{figure}

%\begin{table}[tbp]
%\centering
%\footnotesize{
%\caption{PSNR and SSIM results for Gaussian denoising of “Synthetic1" and “Synthetic2".} \label{tab:boundary_preservation}
%\setlength{\tabcolsep}{0.5mm}{
%\vspace{0.1cm}
%\begin{tabular}{ccccccccccccc}
%\hline
%           & \multicolumn{2}{c}{\cite{WZ2012}} & \multicolumn{2}{c}{\cite{CBL2016}} & \multicolumn{2}{c}{\cite{QZ2022}} & \multicolumn{2}{c}{\cite{wu2024variational}} & \multicolumn{2}{c}{\cite{su2026cone}} & \multicolumn{2}{c}{Ours}         \\
%           \cmidrule(lr){2-3} \cmidrule(lr){4-5} \cmidrule(lr){6-7} \cmidrule(lr){8-9}\cmidrule(lr){10-11}\cmidrule(lr){12-13}
%           & PSNR           & SSIM             & PSNR            & SSIM             & PSNR            & SSIM            & PSNR                 & SSIM                  & PSNR              & SSIM              & PSNR           & SSIM            \\ \hline
%Synthetic1 & 42.67          & 0.9920          & 38.34           & 0.9707           & 41.36           & 0.9951          & 38.80                & 0.9788                & 44.52             & 0.9981            & \textbf{44.64} & \textbf{0.9986} \\
%Synthetic2 & 40.74          & 0.9889           & 38.24           & 0.9717           & 40.55           & 0.9944          & 38.23                & 0.9788                & 45.63             & 0.9984            & \textbf{45.64} & \textbf{0.9992} \\ \hline
%\end{tabular}}}
%\end{table}
\begin{table}[tbp]
\centering
\small{
\caption{PSNR values for Gaussian denoising of “Synthetic1" and “Synthetic2".} \label{tab:boundary_preservation}
\setlength{\tabcolsep}{0.6mm}{
\vspace{0.1cm}
\begin{tabular}{ccccccc}
\hline
           & MC \cite{WZ2012} & GC \cite{CBL2016} & WM \cite{QZ2022} & IOS--TSGV \cite{wu2024variational} & MMAMM--TSGV \cite{su2026cone} & Ours           \\ \hline
Synthetic1 & 42.67       & 38.34        & 41.36       & 38.80                  & 44.52           & \textbf{44.64} \\
Synthetic2 & 40.74       & 38.24        & 40.55       & 38.23                  & 45.63           & \textbf{45.64} \\ \hline
\end{tabular}}}
\end{table}

In this subsection, we evaluate the boundary preserving capability
of the proposed method using the synthetic images “Synthetic1"
and “Synthetic2". We compare our method with the mean curvature model (MC) \cite{WZ2012}, Gaussian curvature model (GC) \cite{CBL2016}, Weingarten--map based model (WM) \cite{QZ2022}, and Total scaled-gradient variation model (TSGV) \cite{wu2024variational,su2026cone}. Edge-, corner-, and contrast preservation properties of these models have been analyzed under suitable assumptions.

Figure~\ref{fig:boundary_preservation} presents the noisy images and the corresponding residual images. As shown in the enlarged regions, the residual images produced by the proposed method contain less visible boundary and corner information than the most competing methods. This observation indicates that our method more effectively separates noise from geometrically significant structures and better preserves object boundaries and corners.

We further compare the recovery of smooth image regions through the
restored image surfaces shown in Figure~\ref{figure_surface}. The surfaces reconstructed by the proposed method contain fewer oscillations in homogeneous regions, while the principal boundary structures remain well preserved.

Table~\ref{tab:boundary_preservation} reports the corresponding
PSNR values. Our method attains the highest PSNR on both test images. These quantitative results are consistent with the residual image comparison in Figure~\ref{fig:boundary_preservation}.

\subsection{Effect of Anderson acceleration and energy decay}

\begin{figure}[tbp]
    \centering
    \subfigure[Airplane]{
    \includegraphics[scale=0.35]{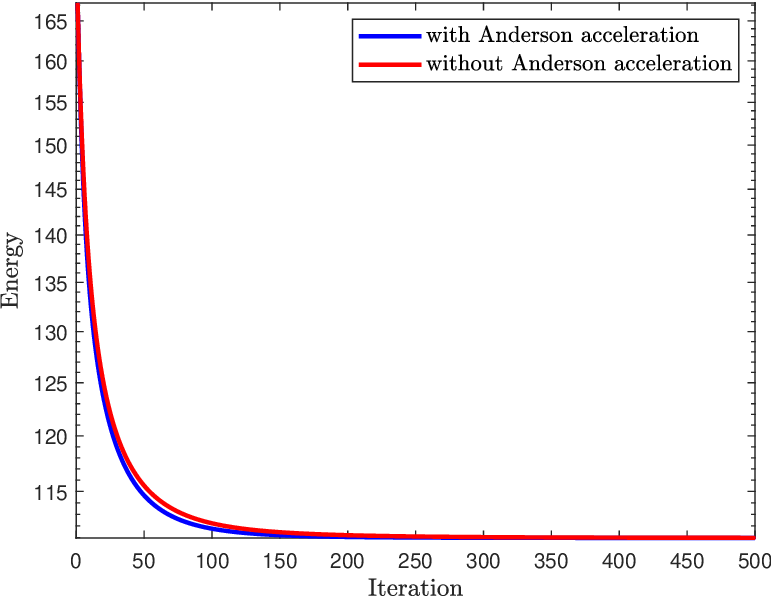}
    }
    \subfigure[Boat]{
    \includegraphics[scale=0.35]{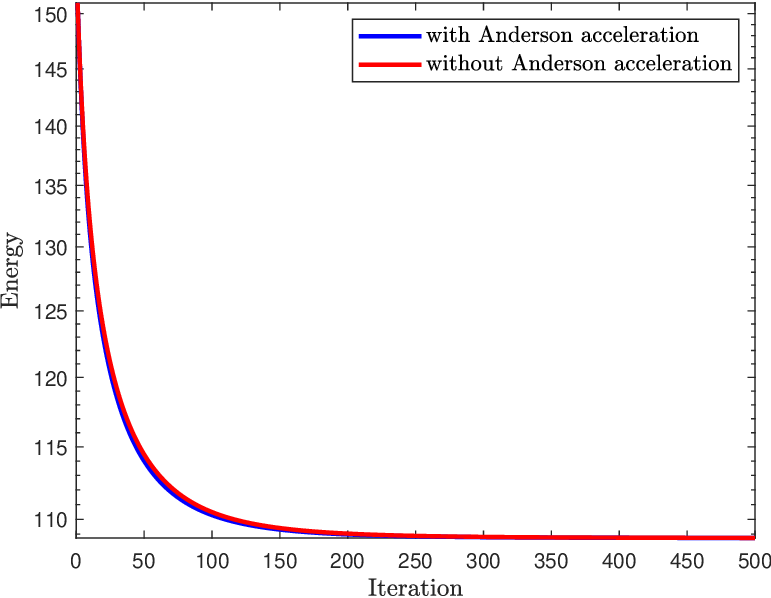}
    }
\caption{Evolution of the energy for “Airplane" and “Boat", with and without Anderson acceleration.} \label{fig:energy_decay}
\end{figure}

We use the images “Airplane" and “Boat" to examine the numerical decay of the energy and the effect of depth-one Anderson acceleration. We first examine the effect of depth-one Anderson acceleration. The accelerated and nonaccelerated variants are initialized identically and use the same model and algorithmic parameters. For “Airplane", the accelerated method terminates after \(123\) iterations and yields a PSNR of \(32.47\) dB, whereas the nonaccelerated method requires \(136\) iterations and yields a PSNR of \(32.37\) dB. For “Boat", the corresponding results are \(31.39\) dB after \(149\) iterations with Anderson acceleration and \(31.36\) dB after \(154\) iterations without acceleration. These
results indicate that the depth-one Anderson acceleration reduces the
number of iterations while maintaining, and in these two examples
slightly improving, the reconstruction quality.

We next investigate the decay of the objective energy.
To provide a direct comparison of the two variants, both variants are run for $500$ iterations. Figure~\ref{fig:energy_decay} shows the energy histories generated by the proposed method with and without Anderson acceleration. Both variants exhibit monotone energy decay and approach approximately the same terminal energy value, while the accelerated variant reaches comparable energy levels in fewer iterations.

\subsection{Comparison with representative methods}

\begin{table}[tbp]
\centering
\caption{Quantitative comparison of the WM, EE, and TSGV
methods with the proposed AAMM--GMV method for Gaussian denoising.}
\label{table_comparison}
\scriptsize{
\setlength{\tabcolsep}{0.1mm}
\begin{tabular}{lccccccccccccccc}
\hline
          & \multicolumn{3}{c}{WM \cite{QZ2022}} & \multicolumn{3}{c}{EE \cite{liu2024fast}} & \multicolumn{3}{c}{IOS--TSGV \cite{wu2024variational}} & \multicolumn{3}{c}{MMAMM--TSGV \cite{su2026cone}} & \multicolumn{3}{c}{Ours}         \\
          \cmidrule(lr){2-4} \cmidrule(lr){5-7} \cmidrule(lr){8-10} \cmidrule(lr){11-13}\cmidrule(lr){14-16}
          & PSNR            & $\#$Iter & Time & PSNR       & $\#$Iter      & Time      & PSNR         & $\#$Iter        & Time        & PSNR             & $\#$Iter   & Time  & PSNR           & $\#$Iter & Time \\ \hline
Airplane  & 32.35           & 194      & 1.4  & 31.70      & 447           & 4.4       & 32.29        & 26              & 0.2         & 32.30            & 240        & 1.1   & \textbf{32.47} & 123      & 0.8  \\
Boat      & 31.22           & 192      & 1.4  & 30.80      & 500           & 4.8       & 31.37        & 28              & 0.2         & 31.34            & 296        & 1.4   & \textbf{31.39} & 149      & 1.0  \\
Cameraman & \textbf{31.69}  & 214      & 1.6  & 31.36      & 500           & 4.9       & 31.60        & 42              & 0.3         & 31.61            & 304        & 1.4   & 31.64          & 79       & 0.6  \\
Couple    & 30.79           & 198      & 1.4  & 30.47      & 467           & 4.6       & 31.01        & 28              & 0.2         & \textbf{31.03}   & 311        & 2.0   & \textbf{31.03} & 190      & 1.3  \\
Monarch   & 32.49           & 300      & 2.1  & 31.65      & 500           & 4.8       & 32.50        & 22              & 0.2         & 32.43            & 286        & 1.7   & \textbf{32.56} & 103      & 0.7  \\
Parrot    & 31.75           & 213      & 1.5  & 31.25      & 500           & 4.8       & 31.71        & 38              & 0.3         & 31.71            & 301        & 2.3   & \textbf{31.79} & 96       & 0.7  \\
Peppers   & 32.75           & 183      & 1.4  & 31.69      & 500           & 4.7       & 32.74        & 28              & 0.2         & 32.86            & 275        & 1.7   & \textbf{32.89} & 92       & 0.6  \\
Starfish  & 31.61           & 300      & 2.1  & 30.72      & 500           & 4.7       & 31.63        & 27              & 0.2         & 31.63            & 277        & 1.7   & \textbf{31.79} & 112      & 0.8  \\ \hline
\end{tabular}
}
\end{table}

We compare the proposed method with representative geometric regularization methods, including the WM \cite{QZ2022}, the Euler elastica model (EE) \cite{liu2024fast}, and the TSGV models \cite{wu2024variational,su2026cone}. Table~\ref{table_comparison} reports the PSNR values, number of iterations, and running time. Our method attains the highest PSNR on six of the eight test images, the same highest PSNR as MMAMM--TSGV on “Couple", and the second-highest PSNR on “Cameraman". It also requires fewer iterations and shorter
running times than WM, EE, and MMAMM--TSGV. Although IOS--TSGV is
faster, AAMM--GMV yields a higher PSNR on every test image.

\subsection{Impulsive noise removal}

\begin{table}[tbp]
\centering
\caption{Quantitative comparison of the IOS--TSGV with the proposed AAMM--GMV method for impulsive denoising.}
\label{table_comparison_L1}
\small{
\setlength{\tabcolsep}{0.2mm}
\begin{tabular}{cccccccccc}
\hline
                                                   &          & Airplane       & Boat           & Cameraman      & Couple         & Monarch        & Parrot         & Peppers        & Starfish       \\ \cline{2-10}
\multirow{3}{*}{IOS--TSGV \cite{wu2024variational}} & PSNR     & 36.44          & 35.70          & 33.02          & 36.05          & 36.34          & 33.04          & \textbf{34.13} & 36.16          \\
                                                   & $\#$Iter & 52             & 46             & 91             & 42             & 48             & 70             & 95             & 45             \\
                                                   & Time     & 0.5            & 0.4            & 0.8            & 0.4            & 0.5            & 0.7            & 0.9            & 0.4            \\ \hline
\multirow{3}{*}{Ours}                              & PSNR     & \textbf{37.08} & \textbf{36.19} & \textbf{33.17} & \textbf{36.45} & \textbf{36.51} & \textbf{33.27} & 31.14          & \textbf{36.71} \\
                                                   & $\#$Iter & 300            & 128            & 78             & 300            & 300            & 214            & 300            & 300            \\
                                                   & Time     & 1.9            & 0.9            & 0.6            & 1.9            & 1.9            & 1.3            & 1.9            & 1.9            \\ \hline
\end{tabular}
}
\end{table}

Finally, we evaluate the proposed method for impulsive noise removal
and compare it with the IOS--TSGV in \cite{wu2024variational}. In our method, while keeping the other parameter settings unchanged, we set $\mathrm{Tol}=10^{-5}$. Table~\ref{table_comparison_L1} reports the corresponding quantitative results. The results show that the proposed method achieves the highest PSNR value on all test images except “Peppers". These results demonstrate the competitive performance of the proposed method for removing impulsive noise.

\section{Conclusion}\label{sec:conclusion}

In this paper, we proposed a Gauss-map variation model that exploits
the tangent plane geometry of the scaled image graph and analyzed its
behavior near general $C^2$ and piecewise $C^2$ boundaries, using
differential geometric tools. To solve the resulting model, we developed an Anderson-accelerated majorization--minimization algorithm based on a bilinear decomposition. The normal field subproblem admits an explicit MM update, while a depth-one Anderson acceleration improves its practical efficiency without sacrificing energy descent. We proved that the generated sequence has finite length and converges to a critical point of the constrained penalized objective. Numerical experiments demonstrated the boundary preserving capability of the proposed model, the monotone decay of the objective energy, and its competitive performance for Gaussian and impulsive noise removal.

\section*{Acknowledgments}
The author would like to thank the researchers who kindly provided
their implementations or made their source codes publicly available.

%-------------------------------------------------------------------------------------

\bibliographystyle{siamplain}
\bibliography{reference}

@book{do2016differential,
  title={Differential geometry of curves and surfaces: revised and updated second edition},
  author={Do Carmo, Manfredo P},
  year={2016},
  publisher={Courier Dover Publications}
}

@book{lee2018introduction,
  title={Introduction to Riemannian manifolds},
  author={Lee, John M},
  volume={2},
  year={2018},
  publisher={Springer Cham}
}

@book{lee2006riemannian,
  title={Riemannian manifolds: an introduction to curvature},
  author={Lee, John M},
  year={2006},
  publisher={Springer Science \& Business Media}
}

@article{LR1992,
  title={Nonlinear total variation based noise removal algorithms},
  author={Rudin, Leonid I and Osher, Stanley and Fatemi, Emad},
  journal={Physica D},
  volume={60},
  number={1-4},
  pages={259--268},
  year={1992},
  publisher={Elsevier}
}

@article{AC1997,
  title={Image recovery via total variation minimization and related problems},
  author={Chambolle, Antonin and Lions, Pierre-Louis},
  journal={Numerische Mathematik},
  volume={76},
  number={2},
  pages={167--188},
  year={1997},
  publisher={Springer}
}

@book{YM2002,
  title={Oscillating patterns in image processing and nonlinear evolution equations: the fifteenth Dean Jacqueline B. Lewis memorial lectures},
  author={Meyer, Yves},
  volume={22},
  year={2001},
  publisher={American Mathematical Society}
}

@article{YLY2000,
  title={Fourth-order partial differential equations for noise removal},
  author={You, Y-L and Kaveh, Mostafa},
  journal={IEEE Transactions on Image Processing},
  volume={9},
  number={10},
  pages={1723--1730},
  year={2000},
  publisher={IEEE}
}

@article{ML2003,
  title={Noise removal using fourth-order partial differential equation with applications to medical magnetic resonance images in space and time},
  author={Lysaker, Marius and Lundervold, Arvid and Tai, Xue-Cheng},
  journal={IEEE Transactions on Image processing},
  volume={12},
  number={12},
  pages={1579--1590},
  year={2003},
  publisher={IEEE}
}

@article{WZ2012,
  title={Image denoising using mean curvature of image surface},
  author={Zhu, Wei and Chan, Tony},
  journal={SIAM Journal on Imaging Sciences},
  volume={5},
  number={1},
  pages={1--32},
  year={2012},
  publisher={SIAM}
}

@article{CBL2016,
  title={Image denoising using the gaussian curvature of the image surface},
  author={Brito-Loeza, Carlos and Chen, Ke and Uc-Cetina, Victor},
  journal={Numerical Methods for Partial Differential Equations},
  volume={32},
  number={3},
  pages={1066--1089},
  year={2016},
  publisher={Wiley Online Library}
}

@article{HL2022,
  title={An Operator-Splitting Method for the Gaussian Curvature Regularization Model with Applications to Surface Smoothing and Imaging},
  author={Liu, Hao and Tai, Xue-Cheng and Glowinski, Roland},
  journal={SIAM Journal on Scientific Computing},
  volume={44},
  number={2},
  pages={A935--A963},
  year={2022},
  publisher={SIAM}
}

@article{KB2010,
  title={Total generalized variation},
  author={Bredies, Kristian and Kunisch, Karl and Pock, Thomas},
  journal={SIAM Journal on Imaging Sciences},
  volume={3},
  number={3},
  pages={492--526},
  year={2010},
  publisher={SIAM}
}

@article{QZ2022,
  title={Spatially adapted first and second order regularization for image reconstruction: from an image surface perspective},
  author={Zhong, Qiuxiang and Liu, Ryan Wen and Duan, Yuping},
  journal={Journal of Scientific Computing},
  volume={92},
  number={2},
  pages={1--36},
  year={2022},
  publisher={Springer}
}

@article{WZ2013,
  title={Augmented Lagrangian method for a mean curvature based image denoising model},
  author={Zhu, Wei and Tai, Xue-Cheng and Chan, Tony},
  journal={Inverse Problems \& Imaging},
  volume={7},
  number={4},
  pages={1409-1432},
  year={2013},
  publisher={American Institute of Mathematical Sciences}
}

@article{strong2003edge,
  title={Edge-preserving and scale-dependent properties of total variation regularization},
  author={Strong, David and Chan, Tony},
  journal={Inverse Problems},
  volume={19},
  number={6},
  pages={S165},
  year={2003},
  publisher={IOP Publishing}
}

@article{bellettini2002total,
  title={The total variation flow in {$\mathbb{R}^n$}},
  author={Bellettini, Giovanni and Caselles, Vicent and Novaga, Matteo},
  journal={Journal of Differential Equations},
  volume={184},
  number={2},
  pages={475--525},
  year={2002},
  publisher={Elsevier}
}

@article{dobson1996recovery,
  title={Recovery of blocky images from noisy and blurred data},
  author={Dobson, David C and Santosa, Fadil},
  journal={SIAM Journal on Applied Mathematics},
  volume={56},
  number={4},
  pages={1181--1198},
  year={1996},
  publisher={SIAM}
}

@article{wu2024variational,
  title={On variational image denoising: A total scaled-gradient variation regularization framework and an accelerated operator-splitting solver},
  author={Wu, Chunlin and Su, Haibin and Zhang, Daoping},
  journal={Communications in Mathematical Sciences},
  volume={23},
  number={2},
  pages={329--358},
  year={2024},
  publisher={International Press of Boston}
}

@article{liu2024fast,
  title={A fast minimization algorithm for the Euler elastica model based on a bilinear decomposition},
  author={Liu, Zhifang and Sun, Baochen and Tai, Xue-Cheng and Wang, Qi and Chang, Huibin},
  journal={SIAM Journal on Scientific Computing},
  volume={46},
  number={1},
  pages={A290--A312},
  year={2024},
  publisher={SIAM}
}

@article{attouch2013convergence,
  title={Convergence of descent methods for semi-algebraic and tame problems: proximal algorithms, forward--backward splitting, and regularized Gauss--Seidel methods},
  author={Attouch, Hedy and Bolte, J{\'e}r{\^o}me and Svaiter, Benar Fux},
  journal={Mathematical programming},
  volume={137},
  number={1},
  pages={91--129},
  year={2013},
  publisher={Springer}
}

@article{bolte2014proximal,
  title={Proximal alternating linearized minimization for nonconvex and nonsmooth problems},
  author={Bolte, J{\'e}r{\^o}me and Sabach, Shoham and Teboulle, Marc},
  journal={Mathematical Programming},
  volume={146},
  number={1},
  pages={459--494},
  year={2014},
  publisher={Springer}
}

@article{su2026cone,
  title={A globally convergent algorithm via cone-constrained bilinear decomposition for total scaled-gradient variation in image restoration},
  author={Su, Haibin and Wu, Chunlin and Chang, Huibin and Liu, Zhifang},
  journal={arXiv preprint arXiv:2609.00036},
  year={2026}
}

@article{Shen2003euler,
    author = {Shen, Jianhong and Kang, Sung Ha and Chan, Tony F.},
    title = {Euler's elastica and curvature-based inpainting},
    journal = {SIAM Journal on Applied Mathematics},
    volume = {63},
    number = {2},
    pages = {564-592},
    year = {2003}
}

@article{Tai2011Afast,
    author = {Tai, Xue-Cheng and Hahn, Jooyoung and Chung, Ginmo Jason},
    title = {A fast algorithm for Euler's elastica model using augmented Lagrangian method},
    journal = {SIAM Journal on Imaging Sciences},
    volume = {4},
    number = {1},
    pages = {313-344},
    year = {2011}
}

@article{Zhu2013Image,
  title={Image segmentation using Euler’s elastica as the regularization},
  author={Zhu, Wei and Tai, Xue-Cheng and Chan, Tony},
  journal={Journal of Scientific Computing},
  volume={57},
  pages={414--438},
  year={2013},
  publisher={Springer}
}

@article{Deng2019Anew,
  title={A new operator splitting method for the Euler elastica model for image smoothing},
  author={Deng, Liang-Jian and Glowinski, Roland and Tai, Xue-Cheng},
  journal={SIAM Journal on Imaging Sciences},
  volume={12},
  number={2},
  pages={1190--1230},
  year={2019},
  publisher={SIAM}
}

@article{su2026accelerated,
  title={An accelerated operator splitting algorithm for solving Euler's elastica-based image inpainting and segmentation models},
  author={Su, Haibin and Wu, Chunlin and Duan, Yuping and Xue, Yunhua},
  journal={Communications in Computational Physics},
  volume={40},
  number={5},
  pages={1567--1591},
  year={2026}
}

@article{zhao2026image,
  title={Image denoising via $ L^2$-norm of anisotropic mean curvature of image surface},
  author={Zhao, Yuanyuan and Song, Xianhua and Guo, Zhichang},
  journal={Journal of Mathematical Imaging and Vision},
  volume={68},
  number={3},
  pages={29},
  year={2026},
  publisher={Springer}
}

@article{zhu2020image,
  title={Image denoising using $L^p$-norm of mean curvature of image surface},
  author={Zhu, Wei},
  journal={Journal of Scientific Computing},
  volume={83},
  number={2},
  pages={32},
  year={2020},
  publisher={Springer}
}

@article{zhang2025edge,
  title={Edge-adaptive curvature-based regularization for robust blind image restoration},
  author={Zhang, Tingting and Lu, Jiawei and Wu, Caiying and Duan, Yuping and Liu, Jun and Zeng, Tieyong and Jin, Qiyu and Chen, Guoqing and Morel, Jean-Michel and Wu, Boying and others},
  journal={Physics Letters A},
  pages={131300},
  year={2025},
  publisher={Elsevier}
}

@article{zhong2021image,
  title={Image reconstruction by minimizing curvatures on image surface},
  author={Zhong, Qiuxiang and Yin, Ke and Duan, Yuping},
  journal={Journal of Mathematical Imaging and Vision},
  volume={63},
  number={1},
  pages={30--55},
  year={2021},
  publisher={Springer}
}

@article{wu2025total,
  title={Total curvature-driven blind image deblurring},
  author={Wu, Caiying and Zhang, Lulu and Zhang, Tingting and Lu, Jiawei and Chen, Guoqing and Liu, Jun and Jin, Qiyu and Zeng, Tieyong},
  journal={Journal of Computational and Applied Mathematics},
  pages={117089},
  year={2025},
  publisher={Elsevier}
}

@article{lu2026total,
  title={Total normal curvature regularization and its minimization for surface and image smoothing},
  author={Lu, Tianle and Chen, Ke and Duan, Yuping},
  journal={SIAM Journal on Imaging Sciences},
  volume={19},
  number={2},
  pages={879--912},
  year={2026},
  publisher={SIAM}
}

@article{liu2025minimizing,
  title={Minimizing mean curvature of image surface via bilinear decomposition for image reconstruction},
  author={Liu, Zhifang and Chang, Huibin and Gao, Yiming},
  journal={Journal of Mathematical Imaging and Vision},
  volume={67},
  number={6},
  pages={55},
  year={2025},
  publisher={Springer}
}
\end{document}